\newtheorem{theorem}{Theorem}
\newtheorem{proposition}{Proposition}
\newtheorem{corollary}{Corollary}
\newtheorem{lemma}[theorem]{Lemma}
\theoremstyle{definition}
\newtheorem{definition}{Definition}
\theoremstyle{remark}
\newtheorem{remark}{Remark}
\theoremstyle{definition}
\newtheorem{assumption}{Assumption}
\newtheorem{example}{Example}
\newcommand{\simiid}{\overset{\mathrm{iid}}{\sim}}
\newcommand{\naturals}{\mathbb{N}}
\newcommand{\indep}{\perp \!\!\! \perp}
\newcommand{\sign}[1]{\mathrm{sign}\left\lbrack #1 \right\rbrack}
\newcommand{\calN}{\mathcal{N}}
\newcommand{\calM}{\mathcal{M}}
\newcommand{\calZ}{\mathcal{Z}}
\newcommand{\calX}{\mathcal{X}}
\newcommand{\calY}{\mathcal{Y}}
\newcommand{\calF}{\mathcal{F}}
\newcommand{\calG}{\mathcal{G}}
\newcommand{\calA}{\mathcal{A}}
\newcommand{\calD}{\mathcal{D}}
\newcommand{\calK}{\mathcal{K}}
\newcommand{\abs}[1]{\left\lvert#1\right\rvert}
\newcommand{\norm}[2]{\left\lVert#2\right\rVert_{#1}}
\DeclareMathOperator*{\argmin}{arg\,min}
\DeclareMathOperator*{\argmax}{arg\,max}
\newcommand{\Real}{\mathbb{R}}
\newcommand{\indicator}[1]{\mathbbm{1}\curlybrack{#1}}
\newcommand{\roundbrack}[1]{\left( #1 \right)}
\newcommand{\curlybrack}[1]{\left\lbrace #1 \right\rbrace}
\newcommand{\anglebrack}[1]{\left\langle #1 \right\rangle}
\newcommand{\Prob}{\mathbb{P}}
\newcommand{\Exp}[2]{\mathbb{E}_{#1}\left\lbrack#2\right\rbrack}
\newcommand{\Corr}[2]{\mathrm{Corr}_{#1}\left(#2\right)}
\newcommand{\markred}[1]{{\color[RGB]{160, 50, 50} #1}}
\newcommand{\markblue}[1]{{\color[RGB]{50, 100, 200} #1}}
\definecolor{myred}{RGB}{160, 50, 50}
\definecolor{myblue}{RGB}{50, 100, 200}
\definecolor{mygreen}{RGB}{40, 180, 40}
\title{Sequential Predictive Two-Sample and Independence Testing}
\author{Aleksandr Podkopaev$^{1,2}$, Aaditya Ramdas$^{1,2}$\\ 
	Department of Statistics \& Data Science$^1$\\
	Machine Learning Department$^2$\\
	Carnegie Mellon University\\
\texttt{\{podkopaev,aramdas\}@cmu.edu}
}
\date{\today}
\begin{document}

\maketitle

\begin{abstract}

We study the problems of sequential nonparametric two-sample and independence testing. Sequential tests process data online and allow using observed data to decide whether to stop and reject the null hypothesis or to collect more data, while maintaining type I error control. We build upon the principle of (nonparametric) testing by betting, where a gambler places bets on future observations and their wealth measures evidence against the null hypothesis. While recently developed kernel-based betting strategies often work well on simple distributions, selecting a suitable kernel for high-dimensional or structured data, such as images, is often nontrivial. To address this drawback, we design prediction-based betting strategies that rely on the following fact: if a sequentially updated predictor starts to consistently determine (a) which distribution an instance is drawn from, or (b) whether an instance is drawn from the joint distribution or the product of the marginal distributions (the latter produced by external randomization), it provides evidence against the two-sample or independence nulls respectively. We empirically demonstrate the superiority of our tests over kernel-based approaches under structured settings. Our tests can be applied beyond the case of independent and identically distributed data, remaining valid and powerful even when the data distribution drifts over time.

\end{abstract}

\tableofcontents

\section{Introduction}


We consider two closely-related problems of nonparametric two-sample and independence testing. In the former, given observations from two distributions $P$ and $Q$, the goal is to test the null hypothesis that the distributions are the same: $H_0:P=Q$, against the alternative that they are not: $H_1:P\neq Q$. In the latter, given observations from some joint distribution $P_{XY}$, the goal is to test the null hypothesis that the random variables are independent: $H_0:P_{XY}=P_X\times P_Y$, against the alternative that they are not: $H_1:P_{XY}\neq P_X\times P_Y$. Kernel tests, such as kernel-MMD~\citep{gretton12kernel_tst} for two-sample and HSIC~\citep{gretton2005measuring} for independence testing, are amongst the most popular methods for solving these tasks which work well on data from simple distributions. However, their performance is sensitive to the choice of a kernel and respective parameters, like bandwidth, and applying such tests requires additional effort. Further, selecting kernels for structured data, like images, is a nontrivial task. Lastly, kernel tests suffer from decaying power in high dimensions~\citep{ramdas2015decreasing_power}.

Predictive two-sample and independence tests (2STs and ITs respectively) aim to address such limitations of kernelized approaches. The idea of using classifiers for two-sample testing dates back to~\citet{friedman2004on_multivar} who proposed using the output scores as a dimension reduction method. More recent works focused on the direct evaluation of a learned model for testing. In an initial arXiv 2016 preprint, \citet{kim2021class} proposed and analyzed predictive 2STs based on sample-splitting, namely testing whether the accuracy of a model trained on the first split of data and estimated on the second split is significantly better than chance. The authors established the consistency of asymptotic and exact tests in high-dimensional settings and provided rates for the case of Gaussian distributions. Inspired by this work, \citet{lopezpaz2017revisiting} soon after demonstrated that empirically predictive 2STs often outperform state-of-the-art 2STs, such as kernel-MMD. Recently, \citet{hediger2022107435rf_ts} proposed a related test that utilizes out-of-bag predictions for bagging-based classifiers, such as random forests. To incorporate measures of model confidence, many authors have also explored using test statistics that are based on the output scores instead of the binary class predictions~\citep{kim2019global_local,liu2020learning_deep_kernels,cheng2022class,kubler2022automl}.

The focus of the above works is on \emph{batch} tests which are calibrated to have a fixed false positive rate (say, $5\%$) if the sample size is specified in advance. In contrast, we focus on the setting of sequentially released data. Our tests allow on-the-fly decision-making: an analyst can use observed data to decide whether to stop and reject the null or to collect more data, without inflating the false alarm rate.

\paragraph{Problem Setup.} First, we define the problems of sequential two-sample and independence testing.

\begin{definition}[Sequential two-sample testing]\label{def:proxy_game}
Suppose that we observe a stream of i.i.d.\ observations $((Z_t,W_t))_{t\geq 1}$, where $W_t\sim \mathrm{Rademacher}(1/2)$, the distribution of $Z_t\mid W_t=+1$ is denoted $P$, and that of $Z_t\mid W_t=-1$ is denoted $Q$. The goal is to design a sequential test for
\begin{subequations}\label{eq:proxy_game}
\begin{align}
    H_0: & \ P=Q, \label{eq:proxy_game_null} \\
    H_1: & \ P\neq Q. \label{eq:proxy_game_alt}
\end{align}
\end{subequations}
\end{definition}

\begin{definition}[Sequential independence testing]\label{def:seq_it}
Suppose that we observe that a stream of observations: $\roundbrack{(X_t,Y_t)}_{t\geq 1}$, where $(X_t,Y_t)\sim P_{XY}$ for $t\geq 1$. The goal is to design a sequential test for 
\begin{subequations}\label{eq:it}
\begin{align}
    H_0: & \ (X_t,Y_t)\sim P_{XY} \ \text{and} \ P_{XY}=P_X\times P_Y, \label{eq:it_null} \\
    H_1: & \ (X_t,Y_t)\sim P_{XY} \ \text{and} \ P_{XY}\neq P_X\times P_Y. \label{eq:it_alt}
\end{align}
\end{subequations}
\end{definition}

\noindent We operate in the framework of ``power-one tests''~\citep{darling1968some} and define a level-$\alpha$ sequential test as a mapping $\Phi: \cup_{t=1}^\infty \calZ^t\to \curlybrack{0,1}$ that satisfies: $\Prob_{H_0}(\exists t\geq 1:\Phi(Z_1,\dots,Z_t)=1) \leq \alpha$. We refer to such notion of type I error control as \emph{time-uniform}. Here, $0$ stands for ``do not reject the null yet'' and $1$ stands for ``reject the null and stop''. Defining the stopping time as the first time that the test outputs $1$: $\tau := \inf\{t\geq 1: \Phi(Z_1,\dots, Z_t)=1\}$, a sequential test must satisfy
\begin{align}
    &\Prob_{H_0}\roundbrack{\tau<\infty}\leq \alpha. \label{eq:validity_stop_time}
\end{align}
We aim to design \emph{consistent} tests which are guaranteed to stop if the alternative happens to be true:
\begin{equation}\label{eq:consistency}
    \Prob_{H_1}\roundbrack{\tau<\infty}=1.
\end{equation}
Our construction follows the principle of testing by betting~\citep{shafer2021testing_by_betting}. The most closely related work is that of ``nonparametric 2ST by betting'' of~\citet{shekhar2022one_two_sample}, which later inspired several follow-up works, including sequential (marginal) kernelized independence tests of~\citet{podkopaev2022skit}, and the sequential conditional independence tests under the model-X assumption of~\citet{grunwald2022anytime} and~\citet{shaer2022seq_crt}. We extend the line of work of~\cite{shekhar2022one_two_sample} and of~\cite{podkopaev2022skit}, studying predictive approaches in detail.

Sequential predictive 2STs have been studied by~\citet{lheritier2018seq_nonparam,lheritier2019low_complexity}, but in practice, those tests were found to be inferior to the ones developed by~\citet{shekhar2022one_two_sample}. Recently, \citet{pandeva2022evaluating} proposed a related test that handles the case of the unknown class proportions using ideas from~\citep{wasserman2020univ_inf}. As we shall see, our tests are closely related to~\citep{lheritier2018seq_nonparam,lheritier2019low_complexity,pandeva2022evaluating}, but are consistent under much milder assumptions.

\paragraph{Sequential Nonparametric Two-Sample and Independence Testing by Betting.} Suppose that one observes a sequence of random variables $(Z_t)_{t\geq 1}$, where $Z_t\in \calZ$. The principle of testing by betting~\citep{shafer2019game,shafer2021testing_by_betting} can be described as follows. A player starts the game with initial capital $\calK_0=1$. At round $t$, she selects a payoff function $f_t:\calZ\to [-1,\infty)$ that satisfies $\Exp{Z\sim P_Z}{f_t(Z)\mid \calF_{t-1}}=0$ for all $P_Z\in H_0$, where $\calF_{t-1}=\sigma(Z_1,\dots,Z_{t-1})$, and bets a fraction of her wealth $\lambda_t\calK_{t-1}$ for an $\calF_{t-1}$-measurable $\lambda_t\in [0,1]$. Once $Z_t$ is revealed, her wealth is updated as
\begin{equation}\label{eq:wealth_update}
\calK_t = \calK_{t-1}+\lambda_t\calK_{t-1}f_t(Z_t)= \calK_{t-1}\roundbrack{1+\lambda_t f_t(Z_t)}.
\end{equation}
The wealth of a player measures evidence against the null hypothesis, and if a player can make money in such game, we reject the null. For testing $H_0$ at level $\alpha\in(0,1)$, we use the stopping rule:
\begin{equation}\label{eq:stop_time}
    \tau = \inf\curlybrack{t\geq 1: \calK_t\geq 1/\alpha}.
\end{equation}
The validity of the test follows from Ville's inequality~\citep{ville1939etude}, a time-uniform generalization of Markov's inequality, since $(\calK_t)_{t\geq 0}$ is a nonnegative martingale under any $P_Z\in H_0$. To ensure high power, one has to choose $(f_t)_{t\geq 1}$ and $(\lambda_t)_{t\geq 1}$ to guarantee the growth of the wealth if the alternative is true. In the context of two-sample and independence testing, \citet{shekhar2022one_two_sample} and \citet{podkopaev2022skit} recently proposed effective betting strategies based on kernelized measures of statistical distance and dependence respectively which admit a variational representation. In a nutshell, datapoints observed prior to a given round are used to estimate the \emph{witness} function --- one that best highlights the discrepancy between $P$ and $Q$ for two-sample (or between $P_{XY}$ and $P_X\times P_Y$ for independence) testing --- and a bet is formed as an estimator of a chosen measure of distance (or dependence). In contrast, our bets are based on evaluating the performance of a sequentially learned predictor that distinguishes between instances from distributions of interest.

\begin{remark}
In practical settings, an analyst may not be able to continue collecting data forever and may adaptively stop the experiment before the wealth exceeds $1/\alpha$. In such case, one may use a different threshold for rejecting the null at a stopping time $\tau$, namely $U/ \alpha$, where $U$ is a (stochastically larger than) uniform random variable on $[0,1]$ drawn independently from $(\calF_t)_{t\geq 0}$. This choice strictly improves the power of the test without violating the validity; see~\citep{ramdas2023randomized}.
\end{remark}

\paragraph{Contributions.} We develop sequential predictive two-sample (Section~\ref{sec:clas_spit}) and independence (Section~\ref{sec:it}) tests. We establish sufficient conditions for consistency of our tests and relate those to evaluation metrics of the underlying models. We conduct an extensive empirical study on synthetic and real data, justifying the superiority of our tests over the kernelized ones on structured data.


\section{Classification-based Two-Sample Testing}\label{sec:clas_spit}


Let $\calG:\calZ\to [-1,1]$ denote a class of predictors used to distinguish between instances from $P$ (labeled as $+1$) and $Q$ (labeled as $-1$)\footnote{Similar argument can be applied to general scoring-based classifiers: $g:\calZ\to\Real$, e.g., SVMs, by considering $\tilde{\calG} = \curlybrack{\tilde{g}: \tilde{g}(z) = \tanh(s\cdot g(z)), \ g\in\calG, \ s>0}$, where the constant $s>0$ corrects the scale of the scores.}. We assume that: (a) if $g\in\calG$, then $-g\in\calG$, (b) if $g\in\calG$ and $s\in [0,1]$, then $sg\in\calG$, and (c) predictions are based on $\mathrm{sign}[g(\cdot)]$, and if $g(z)=0$, then $z$ is assigned to the positive class. Two natural evaluation metrics of a predictor $g\in\calG$ include the misclassification and the squared risks:
\begin{equation}
\begin{aligned}
    R_{\mathrm{m}}(g) :=\Prob\roundbrack{W\cdot \sign{g\roundbrack{Z}}<0}, \quad
    R_{\mathrm{s}}(g) := \Exp{}{(g(Z)-W)^2},  
\end{aligned}
\end{equation}
which give rise to the following measures of distance between $P$ and $Q$, namely
\begin{equation}\label{eq:class_dist_measures}
    d_\mathrm{m}(P,Q) := \sup_{g\in\calG} \roundbrack{\tfrac{1}{2}- R_{\mathrm{m}}(g)}, \quad 
    d_\mathrm{s}(P,Q) := \sup_{g\in\calG} \roundbrack{1-R_\mathrm{s}(g)}. 
\end{equation}
It is easy to check that $d_\mathrm{m}(P,Q)\in[0, 1/2]$ and $d_\mathrm{m}(P,Q)\in[0, 1]$. The upper bounds hold due to the non-negativity of the risks and the lower bounds follow by considering $g:g(z)=0, \forall z\in\calZ$. Note that the misclassification risk is invariant to rescaling ($R_{\mathrm{m}}(sg)=R_{\mathrm{m}}(g)$, $\forall s\in (0,1]$), whereas the squared risk is not, and rescaling any $g$ to optimize the squared risk provides better contrast between $P$ and $Q$. In the next result, whose proof is deferred to Appendix~\ref{appsubsec:clas_spit_proofs}, we present an important relationship between the squared risk of a rescaled predictor and its expected margin: $\Exp{}{W\cdot g(Z)}$. 

\begin{restatable}{proposition}{propequivrep}\label{prop:equiv_rep}
Fix an arbitrary predictor $g\in\calG$. The following claims hold:
\begin{enumerate}
    \item For the misclassification risk, we have that:
    \begin{equation}\label{eq:mis_risk_opt}
        \sup_{s\in[0,1]} \roundbrack{\tfrac{1}{2}-R_\mathrm{m}(sg)} = \roundbrack{\tfrac{1}{2}-R_\mathrm{m}(g)}\vee 0 = \roundbrack{\tfrac{1}{2}\cdot \Exp{}{W\cdot\sign{g(Z)}}}\vee 0.
    \end{equation}
    \item For the squared risk, we have that: 
    \begin{equation}
        \sup_{s\in[0,1]} \roundbrack{1-R_\mathrm{s}(sg)} \geq \roundbrack{\Exp{}{W\cdot g(Z)}\vee 0} \cdot \roundbrack{\frac{\Exp{}{W\cdot g(Z)}}{\Exp{}{g^2(Z)}}\wedge 1}\label{eq:sq_risk_opt}
    \end{equation}
    Further, $d_\mathrm{s}(P,Q)>0$ if and only if there exists $g\in\calG$ such that $\Exp{}{W\cdot g(Z)}>0$.
\end{enumerate}
\end{restatable}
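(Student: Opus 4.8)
The plan is to reduce both parts to elementary computations that exploit $W\in\{-1,+1\}$, hence $W^2=1$ almost surely, followed by a one-variable optimization over the scaling $s$.

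\textbf{Misclassification risk.} For $s\in(0,1]$ the sign of $sg(z)$ coincides with that of $g(z)$ (including the tie-breaking convention at $0$), so $R_{\mathrm m}(sg)=R_{\mathrm m}(g)$; for $s=0$ the predictor $sg$ is identically $0$ and labels everything $+1$, so $R_{\mathrm m}(0)=\Prob(W=-1)=1/2$ and $\tfrac12-R_{\mathrm m}(0)=0$. Taking the supremum over $s\in[0,1]$ then gives $\roundbrack{\tfrac12-R_{\mathrm m}(g)}\vee 0$. The remaining identity in \eqref{eq:mis_risk_opt} follows since $W\cdot\sign{g(Z)}\in\{-1,+1\}$ equals $+1$ exactly on the event of correct classification, so $\Exp{}{W\cdot\sign{g(Z)}}=(1-R_{\mathrm m}(g))-R_{\mathrm m}(g)=1-2R_{\mathrm m}(g)$.

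\textbf{Squared risk.} Expanding and using $\Exp{}{W^2}=1$ gives $R_{\mathrm s}(sg)=s^2\Exp{}{g^2(Z)}-2s\Exp{}{W\cdot g(Z)}+1$, so with $a:=\Exp{}{W\cdot g(Z)}$ and $b:=\Exp{}{g^2(Z)}\geq 0$ we have $1-R_{\mathrm s}(sg)=2as-bs^2=:\phi(s)$, a concave quadratic, and the task is $\max_{s\in[0,1]}\phi(s)$. I would first dispose of degeneracies: $b=0$ forces $g=0$ almost surely under the law of $Z$, hence $a=0$ and both sides of \eqref{eq:sq_risk_opt} vanish (reading the product on the right as $0$ whenever $a\vee 0=0$), and $a>0$ forces $b>0$. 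If $a\leq 0$, then $\phi\leq 0$ on $[0,1]$ with $\phi(0)=0$, matching the right-hand side. If $a>0$ and $b>0$, the unconstrained maximizer of $\phi$ is $a/b$: when $a/b\leq 1$ the maximum is $\phi(a/b)=a^2/b=a\cdot(a/b\wedge 1)$; when $a/b>1$, $\phi$ is increasing on $[0,1]$, so the maximum is $\phi(1)=2a-b$, and $2a-b\geq a=a\cdot(a/b\wedge 1)$ since $a\geq b$. In every case $\max_{s\in[0,1]}\phi(s)\geq (a\vee 0)\cdot(a/b\wedge 1)$, which is exactly \eqref{eq:sq_risk_opt}; the bound is tight except in the clipped regime $a/b>1$, which is why one cannot in general replace $\geq$ by $=$.

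\textbf{The equivalence.} I would use the instance $s=1$, namely $1-R_{\mathrm s}(g)=2a-b$, for both directions. If $d_{\mathrm s}(P,Q)>0$, then some $g\in\calG$ has $2a-b>0$, whence $a>b/2\geq 0$, giving the desired positive expected margin. Conversely, if $\Exp{}{W\cdot g(Z)}=a>0$ for some $g\in\calG$, then $sg\in\calG$ for all $s\in[0,1]$ by assumption~(b), and the bound just proved yields $d_{\mathrm s}(P,Q)\geq \sup_{s\in[0,1]}(1-R_{\mathrm s}(sg))\geq a\cdot(a/b\wedge 1)>0$, using $b>0$. The only real care needed is the bookkeeping for the degenerate cases ($b=0$, the $0/0$ in the stated ratio) and the observation that clipping at $s=1$ precludes equality in \eqref{eq:sq_risk_opt}; the substantive content is the single quadratic optimization above.
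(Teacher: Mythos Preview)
Your proposal is correct and follows essentially the same route as the paper: scale-invariance of $R_{\mathrm m}$ together with $R_{\mathrm m}(0)=1/2$ for part~1, and the one-variable quadratic optimization of $1-R_{\mathrm s}(sg)=2as-bs^2$ with the case split on the sign of $a$ and on whether $a/b\le 1$ for part~2, with the equivalence handled via $1-R_{\mathrm s}(g)=2a-b$ in one direction and the bound \eqref{eq:sq_risk_opt} in the other. If anything, your treatment is slightly more careful than the paper's about the degenerate case $b=0$ and about why only an inequality holds in \eqref{eq:sq_risk_opt}.
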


\noindent Consider an arbitrary predictor $g\in\calG$. Note that under the null $H_0$ in~\eqref{eq:proxy_game_null}, the misclassification risk $R_{\mathrm{m}}(g)$ does not depend on $g$, being equal to 1/2, whereas the squared risk $R_{\mathrm{s}}(g)$ does. In contrast, the lower bound~\eqref{eq:sq_risk_opt} no longer depends on $g$ under the null $H_0$, being equal to 0. 

\paragraph{Oracle Test.} It is a known fact that the minimizer of either the misclassification or the squared risk is $g^{\mathrm{Bayes}}(z) = 2\eta(z)-1$, where $\eta(z) = \Prob\roundbrack{W=+1\mid Z=z}$.
Since $g^{\mathrm{Bayes}}$ may not belong to $\calG$, we consider $g_\star\in\calG$, which minimizes either the misclassification or the squared risk over predictors in $\calG$, and omit superscripts for brevity.
To design payoff functions, we follow Proposition~\ref{prop:equiv_rep} and consider
\begin{subequations}\label{eq:oracle_payoffs}
\begin{align}
    f^{\mathrm{m}}_\star(Z_t, W_t) &= W_t \cdot \sign{g_\star(Z_t)}\in \{-1,1\},\label{eq:oracle_payoff_mis}\\
    f^{\mathrm{s}}_\star(Z_t, W_t) &= W_t \cdot g_\star(Z_t)\in [-1,1].\label{eq:oracle_payoff_sq}
\end{align}
\end{subequations}
\noindent Let the \emph{oracle} wealth processes based on misclassification and squared risks $(\calK^{\mathrm{m},\star}_t)_{t\geq 0}$ and $(\calK^{\mathrm{s},\star}_t)_{t\geq 0}$ be defined by using the payoff functions~\eqref{eq:oracle_payoff_mis} and~\eqref{eq:oracle_payoff_sq} respectively, along with a predictable sequence of betting fractions $(\lambda_t)_{t\geq 1}$ selected via online Newton step (ONS) strategy~\citep{hazan2007log_regret} (Algorithm~\ref{alg:ons_bet}), which has been studied in the context of coin-betting by~\citet{cutkosky2018bb_reductions}. If a constant betting fraction is used throughout: $\lambda_t=\lambda$, $\forall t$, then
\begin{equation}\label{eq:growth_rates_const}
    \Exp{}{\tfrac1t\log \calK_t^{i,\star}} = \Exp{}{\log(1+\lambda f_\star^i(Z,W))}, \quad i \in \curlybrack{\mathrm{m},\mathrm{s}}.
\end{equation}
To illustrate the tightness of our results, we consider the optimal constant betting fractions which maximize the log-wealth~\eqref{eq:growth_rates_const} and are constrained to lie in $[-0.5,0.5]$, like ONS bets:
\begin{equation}\label{eq:opt_const_bet}
\begin{aligned}
    \lambda_\star^i &= \argmax_{\lambda\in [-0.5,0.5]}\Exp{}{\log(1+\lambda f_\star^i(Z,W))}, \quad i\in\curlybrack{\mathrm{m},\mathrm{s}}.
\end{aligned}
\end{equation}

\begin{algorithm}[!htb]
\caption{Online Newton step (ONS) strategy for selecting betting fractions}\label{alg:ons_bet}
\begin{algorithmic}
\State \textbf{Input:} sequence of payoffs $\roundbrack{f_t}_{t\geq 1}$, $\lambda^{\mathrm{ONS}}_1=0$, $a_0=1$. 
\For {$t=1,2,\dots$}
\State Observe $f_t\in [-1,1]$;
\State Set $z_t := f_t/(1-\lambda_{t}^{\mathrm{ONS}})$;
\State Set $a_t := a_{t-1}+z_t^2$;
\State Set $\lambda^{\mathrm{ONS}}_{t+1}:=\frac{1}{2}\wedge\roundbrack{0 \vee \roundbrack{\lambda^{\mathrm{ONS}}_t-\frac{2}{2-\log 3} \cdot \frac{z_t}{a_t}}}$;
\EndFor
\end{algorithmic}
\end{algorithm}

\noindent We have the following result for the oracle tests, whose proof is deferred to Appendix~\ref{appsubsec:clas_spit_proofs}.
\begin{restatable}{theorem}{thmoracletest}\label{thm:oracle_test}
The following claims hold:
\begin{enumerate}[itemsep=0em]
        \item Suppose that $H_0$ in~\eqref{eq:proxy_game_null} is true. Then the oracle sequential test based on either $(\calK^{\mathrm{m},\star}_t)_{t\geq 0}$ or $(\calK^{\mathrm{s},\star}_t)_{t\geq 0}$ ever stops with probability at most $\alpha$: $\Prob_{H_0}\roundbrack{\tau<\infty}\leq \alpha$.
        \item Suppose that $H_1$ in~\eqref{eq:proxy_game_alt} is true. Then:
    \begin{enumerate}[itemsep=0em]
    \item The growth rate of the oracle wealth process $(\calK_t^{\mathrm{m},\star})_{t\geq 0}$ satisfies:
    \begin{equation}\label{eq:proxy_growth_rate_mis}
        \liminf_{t\to\infty} \roundbrack{\tfrac1t \log \calK_t^{\mathrm{m},\star}} \overset{\mathrm{a.s.}}{\geq} \roundbrack{\tfrac{1}{2}- R_{\mathrm{m}}\roundbrack{g_\star}}^2.
    \end{equation}
    If $R_\mathrm{m}\roundbrack{g_\star}<1/2$, then the test based on $(\calK_t^{\mathrm{m},\star})_{t\geq 0}$ is consistent: $\Prob_{H_1}\roundbrack{\tau<\infty}=1$. Further, the optimal growth rate achieved by $\lambda_\star^\mathrm{m}$ in~\eqref{eq:opt_const_bet} satisfies:
    \begin{equation}\label{eq:proxy_growth_rate_mis_upper}
         \Exp{}{\log(1+\lambda_\star^\mathrm{m} f_\star^\mathrm{m}(Z,W))} \leq \roundbrack{\tfrac{16}{3}\cdot \roundbrack{\tfrac{1}{2}-R_{\mathrm{m}}(g_\star)}^2 \wedge \roundbrack{\tfrac{1}{2}-R_{\mathrm{m}}(g_\star)}}.
    \end{equation}
        \item The growth rate of the oracle wealth process $(\calK_t^{\mathrm{s},\star})_{t\geq 0}$ satisfies:
    \begin{equation}\label{eq:proxy_growth_rate_sq}
    \begin{aligned}
        &\liminf_{t\to\infty} \roundbrack{\tfrac1t\log \calK_t^{\mathrm{s},\star}} \overset{\mathrm{a.s.}}{\geq} \tfrac{1}{4}\cdot \Exp{}{W\cdot g_\star(Z)}.
    \end{aligned}
    \end{equation}
    If $\Exp{}{W\cdot g_\star(Z)}>0$, then the test based on $(\calK_t^{\mathrm{s},\star})_{t\geq 0}$ is consistent: $\Prob_{H_1}\roundbrack{\tau<\infty}=1$. Further, the optimal growth rate achieved by $\lambda_\star^\mathrm{s}$ in~\eqref{eq:opt_const_bet} satisfies:
    \begin{equation}\label{eq:proxy_growth_rate_sq_upper}
         \Exp{}{\log(1+\lambda_\star^\mathrm{s} f_\star^\mathrm{s}(Z,W))} \leq  \tfrac{1}{2}\cdot \Exp{}{W\cdot g_\star(Z)}.
    \end{equation}
    \end{enumerate}
\end{enumerate}
\end{restatable}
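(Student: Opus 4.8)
The plan is to prove the two parts separately, and within Part~2 to treat the misclassification and squared-risk wealth processes in parallel, since they differ only in which bounded payoff and which elementary inequality are used.

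For Part~1 (time-uniform type~I error), I would first check that $f_\star^{\mathrm m}$ and $f_\star^{\mathrm s}$ are legitimate payoff functions. Both are bounded in $[-1,1]$, and since $g_\star$ is a fixed, data-independent function, $(Z_t,W_t)$ is independent of $\calF_{t-1}$, and under $H_0$ we have $P=Q$ so that $Z_t \indep W_t$ with $\Exp{}{W_t}=0$, it follows that $\Exp{}{f_\star^i(Z_t,W_t)\mid\calF_{t-1}} = \Exp{}{W_t}\,\Exp{}{\phi(Z_t)} = 0$ for $\phi\in\{\sign{g_\star(\cdot)},\,g_\star(\cdot)\}$ and $i\in\{\mathrm m,\mathrm s\}$. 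The ONS betting fractions $\lambda_t$ produced by Algorithm~\ref{alg:ons_bet} are $\calF_{t-1}$-measurable and lie in $[0,1/2]$, so every factor $1+\lambda_t f_\star^i(Z_t,W_t)$ is at least $1/2>0$, and $(\calK_t^{i,\star})_{t\ge0}$ is a nonnegative martingale under $H_0$ with $\calK_0^{i,\star}=1$. Ville's inequality then gives $\Prob_{H_0}(\exists t:\calK_t^{i,\star}\ge1/\alpha)\le\alpha$, i.e.\ $\Prob_{H_0}(\tau<\infty)\le\alpha$.

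For Part~2, the key structural fact is that because $g_\star$ is deterministic, the payoffs $\big(f_\star^i(Z_t,W_t)\big)_{t\ge1}$ form an i.i.d.\ sequence bounded in $[-1,1]$. I would invoke the pathwise ONS regret guarantee for coin betting (\citet{cutkosky2018bb_reductions}, building on \citet{hazan2007log_regret}): $\log\calK_T^{i,\star}\ge\sum_{t=1}^T\log\!\big(1+\lambda_0 f_\star^i(Z_t,W_t)\big)-C\log T$ for every fixed $\lambda_0\in[0,1/2]$ and an absolute constant $C$. Dividing by $T$, sending $T\to\infty$, and using the strong law of large numbers on the (bounded, i.i.d.) summands yields $\liminf_{t\to\infty}\tfrac1t\log\calK_t^{i,\star}\ge\Exp{}{\log(1+\lambda_0 f_\star^i(Z,W))}$ almost surely, for every fixed $\lambda_0\in[0,1/2]$. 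Consistency then drops out: whenever this lower bound can be made strictly positive, $\calK_t^{i,\star}\to\infty$ a.s., so $\calK_t^{i,\star}\ge1/\alpha$ eventually and $\tau<\infty$ a.s. It remains to choose $\lambda_0$ and apply elementary estimates. For the misclassification process, $f_\star^{\mathrm m}\in\{-1,1\}$ with $\Exp{}{f_\star^{\mathrm m}}=2\delta$ where $\delta:=\tfrac12-R_{\mathrm m}(g_\star)\ge0$ (the sign and value follow from Proposition~\ref{prop:equiv_rep} and from $g_\star$ being a risk minimizer, using that the zero predictor lies in $\calG$); taking $\lambda_0=\delta$ and $\log(1+x)\ge x-x^2$ on $x=\delta f_\star^{\mathrm m}$ gives $\Exp{}{\log(1+\delta f_\star^{\mathrm m})}\ge\delta^2$, which is~\eqref{eq:proxy_growth_rate_mis}; for~\eqref{eq:proxy_growth_rate_mis_upper}, $\log(1+x)\le x$ with $|\lambda|\le1/2$ gives $\le\delta$, while a second-order Taylor/algebraic estimate of $\lambda\mapsto(\tfrac12+\delta)\log(1+\lambda)+(\tfrac12-\delta)\log(1-\lambda)$ over $\lambda\in[0,1/2]$ gives $\le\tfrac{16}{3}\delta^2$. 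For the squared-risk process, $f_\star^{\mathrm s}=Wg_\star(Z)$ with $\Exp{}{(f_\star^{\mathrm s})^2}=\Exp{}{g_\star^2(Z)}$; taking $\lambda_0=\tfrac12$ and $\log(1+x)\ge x-x^2$ on $x=\tfrac12 f_\star^{\mathrm s}$ gives $\Exp{}{\log(1+\tfrac12 f_\star^{\mathrm s})}\ge\tfrac12\Exp{}{Wg_\star(Z)}-\tfrac14\Exp{}{g_\star^2(Z)}$. The crucial step here is that $g_\star$ minimizes $R_{\mathrm s}$ over a class closed under scaling by $[0,1]$, so the parabola $s\mapsto R_{\mathrm s}(sg_\star)=s^2\Exp{}{g_\star^2(Z)}-2s\Exp{}{Wg_\star(Z)}+1$ attains its minimum over $[0,1]$ at $s=1$, which forces $\Exp{}{g_\star^2(Z)}\le\Exp{}{Wg_\star(Z)}$; substituting gives $\ge\tfrac14\Exp{}{Wg_\star(Z)}$, i.e.\ \eqref{eq:proxy_growth_rate_sq}, and~\eqref{eq:proxy_growth_rate_sq_upper} is again $\log(1+x)\le x$ with $|\lambda|\le1/2$ and $\Exp{}{f_\star^{\mathrm s}}=\Exp{}{Wg_\star(Z)}\ge0$.

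The step I expect to require the most care is not conceptually deep: it is checking that the cited ONS coin-betting regret bound applies verbatim to our payoffs — in particular to the squared-risk payoff, which only lies in $[-1,1]$ rather than $\{-1,1\}$ — and that the $O(\log T)$ regret term is negligible enough to survive the passage to $\liminf_{t\to\infty}\tfrac1t\log\calK_t^{i,\star}$. Everything else is either a routine martingale/Ville argument, a standard strong-law application, or an elementary inequality for $\log(1+x)$; the only other substantive (if short) ingredient is the scale-optimality identity $\Exp{}{g_\star^2(Z)}\le\Exp{}{Wg_\star(Z)}$, which is exactly what makes the squared-risk growth-rate bound linear — rather than quadratic — in the expected margin $\Exp{}{Wg_\star(Z)}$.
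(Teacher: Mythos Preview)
Your proposal is correct and follows essentially the same route as the paper. The only organizational difference is that the paper packages the ONS regret bound together with the inequality $\log(1+x)\ge x-x^2$ and the SLLN into a single supporting lemma yielding the general growth-rate bound $\liminf_{t\to\infty}\tfrac1t\log\calK_t\ge\tfrac14\big((\Exp{}{f})^2/\Exp{}{f^2}\wedge\Exp{}{f}\big)$ (and the matching upper bound via $\log(1+x)\le x-\tfrac38x^2$), and then specializes it to the two payoffs; you instead choose $\lambda_0=\delta$ and $\lambda_0=\tfrac12$ directly for the two cases, which amounts to the same computations once the scale-optimality fact $\Exp{}{g_\star^2(Z)}\le\Exp{}{Wg_\star(Z)}$ is invoked.
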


\noindent Theorem~\ref{thm:oracle_test} precisely characterizes the properties of the oracle wealth processes and relates those to interpretable metrics of predictive performance. Further, the proof of Theorem~\ref{thm:oracle_test} highlights a direct impact of the variance of the payoffs on the wealth growth rate, and hence the power of the resulting sequential tests (as the null is rejected once the wealth exceeds $1/\alpha$). 


The second moment of the payoffs based on the misclassification risk~\eqref{eq:oracle_payoff_mis} is equal to one, resulting in a \emph{slow} growth: the bound~\eqref{eq:proxy_growth_rate_mis} is proportional to \emph{squared} deviation of the misclassification risk from one half. The bound~\eqref{eq:proxy_growth_rate_mis_upper} shows that the growth rate with the ONS strategy matches, up to constants, that of the oracle betting fraction. Note that the second term in~\eqref{eq:proxy_growth_rate_mis_upper} characterizes the growth rate if $R_{\mathrm{m}}(g_\star)<5/16$ (low Bayes risk). In this regime, the growth rate of our test is at least $(3/16)\cdot (1/2-R_{\mathrm{m}}(g_\star))$ which is close to the optimal rate. The second moment of the payoffs based on the squared risk is more insightful. First, we present a result for the case when the oracle predictor $g_\star$ in~\eqref{eq:oracle_payoff_sq} is replaced by an arbitrary $g\in\calG$. The proof is deferred to Appendix~\ref{appsubsec:clas_spit_proofs}.

\begin{restatable}{corollary}{correlsquaredrisk}\label{cor:rel_squared_risk}
Consider an arbitrary $g\in\calG$ with nonnegative expected margin: $\Exp{}{W\cdot g(Z)}\geq 0$. Then the growth rate of the corresponding wealth process $(\calK_t^{\mathrm{s}})_{t\geq 0}$ satisfies:
\begin{subequations}
    \begin{align}
        \liminf_{t\to\infty} \roundbrack{\tfrac1t\log \calK_t^{\mathrm{s}}} \overset{\mathrm{a.s.}}{\geq} \ & \ \tfrac{1}{4} \Big(\sup_{s\in [0,1]}\roundbrack{1 - R_{\mathrm{s}}\roundbrack{s g}} \wedge \Exp{}{W\cdot g(Z)})\Big) \label{eq:growth_rate_arbitrary} \\
        \geq \ & \ \tfrac{1}{4}\roundbrack{\Exp{}{W\cdot g(Z)}}^2, \label{eq:growth_rate_arbitrary_2}
    \end{align}
\end{subequations}
and the optimal growth rate achieved by  $\lambda_\star^\mathrm{s}$ in~\eqref{eq:opt_const_bet} satisfies:
    \begin{equation}\label{eq:ub_arbitrary}
         \Exp{}{\log(1+\lambda_\star^\mathrm{s} f^\mathrm{s}(Z,W))} \leq \Big(\tfrac{4}{3}\cdot \sup_{s\in [0,1]}\roundbrack{1-R_\mathrm{s}\roundbrack{sg}}\Big)\wedge \roundbrack{\tfrac{1}{2}\cdot \Exp{}{W\cdot g(Z)}}.
    \end{equation}
\end{restatable}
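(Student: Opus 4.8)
The plan is to run the oracle argument behind Theorem~\ref{thm:oracle_test}(2b) with the fixed predictor $g$ in place of $g_\star$. Note first that $f^\mathrm{s}(Z_t,W_t)=W_t g(Z_t)\in[-1,1]$ is still a legal bet: under $H_0$ the label $W$ is independent of $Z$, so $\Exp{}{W g(Z)}=0$. Throughout, write $m:=\Exp{}{W g(Z)}\in[0,1]$ (the stated hypothesis is $m\ge 0$, and $m\le\Exp{}{|g(Z)|}\le 1$) and $v:=\Exp{}{g^2(Z)}=\Exp{}{(f^\mathrm{s}(Z,W))^2}\in[0,1]$.

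\emph{Step 1: reduce to a constant bet.} First I would invoke the ONS regret guarantee used in the proof of Theorem~\ref{thm:oracle_test}: for every fixed $\lambda\in[-1/2,1/2]$, $\log\calK_t^\mathrm{s}=\sum_{i=1}^t\log(1+\lambda^{\mathrm{ONS}}_i f^\mathrm{s}_i)\ge\sum_{i=1}^t\log(1+\lambda f^\mathrm{s}_i)-O(\log t)$. Applying this with $\lambda=\lambda_\star^\mathrm{s}$ from~\eqref{eq:opt_const_bet}, using that $(Z_t,W_t)$ are i.i.d.\ and that $\log(1+\lambda_\star^\mathrm{s} f^\mathrm{s}_i)$ is bounded (since $\lambda_\star^\mathrm{s} f^\mathrm{s}_i\in[-1/2,1/2]$), the strong law of large numbers gives $\liminf_{t\to\infty}\tfrac1t\log\calK_t^\mathrm{s}\overset{\mathrm{a.s.}}{\ge}\Exp{}{\log(1+\lambda_\star^\mathrm{s} f^\mathrm{s}(Z,W))}=\sup_{\lambda\in[-1/2,1/2]}\Exp{}{\log(1+\lambda f^\mathrm{s}(Z,W))}$. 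Hence the lower bounds~\eqref{eq:growth_rate_arbitrary}--\eqref{eq:growth_rate_arbitrary_2} and the upper bound~\eqref{eq:ub_arbitrary} all reduce to controlling this constant-bet growth rate, from below and from above respectively.

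\emph{Step 2: a quadratic surrogate and the bridge identity.} The key elementary computation is $1-R_\mathrm{s}(sg)=2sm-s^2 v$, whence, substituting $s=2\lambda$, $\sup_{s\in[0,1]}(1-R_\mathrm{s}(sg))=4\sup_{\lambda\in[0,1/2]}(\lambda m-\lambda^2 v)$ (consistent with, and slightly sharper than, Proposition~\ref{prop:equiv_rep}). For the lower bounds I would use $\log(1+x)\ge x-x^2$ on $x\in[-1/2,1/2]$: for $\lambda\in[0,1/2]$, $\Exp{}{\log(1+\lambda f^\mathrm{s})}\ge\lambda m-\lambda^2 v$, so maximizing over $\lambda$ and using the identity gives $\sup_{\lambda}\Exp{}{\log(1+\lambda f^\mathrm{s})}\ge\tfrac14\sup_{s\in[0,1]}(1-R_\mathrm{s}(sg))\ge\tfrac14\big(\sup_{s\in[0,1]}(1-R_\mathrm{s}(sg))\wedge m\big)$, which combined with Step 1 yields~\eqref{eq:growth_rate_arbitrary}. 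Specializing to $\lambda=m/2\le\tfrac12$ gives $\lambda m-\lambda^2 v=\tfrac{m^2}{2}(1-\tfrac v2)\ge\tfrac{m^2}{4}$ since $v\le1$, which yields~\eqref{eq:growth_rate_arbitrary_2}.

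\emph{Step 3: the upper bound, and the main technical point.} For~\eqref{eq:ub_arbitrary}, $\log(1+x)\le x$ gives $\Exp{}{\log(1+\lambda_\star^\mathrm{s} f^\mathrm{s})}\le\lambda_\star^\mathrm{s} m\le\tfrac12 m$ (here $\lambda_\star^\mathrm{s}\in[0,\tfrac12]$ because $\lambda\mapsto\Exp{}{\log(1+\lambda f^\mathrm{s})}$ is concave with derivative $m\ge0$ at $0$). For the other term I would use $\log(1+x)\le x-\tfrac{x^2}{3}$ on $[-1/2,1/2]$ (a routine one-variable inequality), which gives $\Exp{}{\log(1+\lambda_\star^\mathrm{s} f^\mathrm{s})}\le\lambda_\star^\mathrm{s} m-\tfrac{(\lambda_\star^\mathrm{s})^2}{3}v\le\max_{\lambda\in[0,1/2]}(\lambda m-\tfrac{\lambda^2}{3}v)$; a short case split — on whether the unconstrained maximizer $\tfrac{3m}{2v}$ is at most $\tfrac12$, and on whether $m\le v$, using the closed form $\sup_{s}(1-R_\mathrm{s}(sg))\in\{m^2/v,\,2m-v\}$ from Step 2 — shows this last quantity is at most $\tfrac43\sup_{s\in[0,1]}(1-R_\mathrm{s}(sg))$. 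I expect no genuine obstacle: the scalar inequalities $x-x^2\le\log(1+x)\le x-\tfrac{x^2}{3}$ on $[-1/2,1/2]$ are checked by differentiating the difference (which vanishes at $x=0$), and everything else transcribes the oracle proof; the only mildly tedious point is this final case analysis needed to match the constant $\tfrac43$ exactly.
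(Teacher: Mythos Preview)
Your proposal is correct and follows the same overall strategy as the paper --- reduce via ONS regret to a constant-bet growth rate, then sandwich $\log(1+x)$ between quadratics --- but with one genuinely nicer twist. The paper invokes Lemma~\ref{lemma:growth_rate} to obtain the lower bound $\tfrac14\bigl(m^2/v \wedge m\bigr)$ and then argues by cases (on whether $m/v<1$) that this coincides with $\tfrac14\bigl(\sup_{s}(1-R_\mathrm{s}(sg))\wedge m\bigr)$. Your substitution $s=2\lambda$ identifies $\sup_{s\in[0,1]}(1-R_\mathrm{s}(sg))$ \emph{directly} with $4\sup_{\lambda\in[0,1/2]}(\lambda m-\lambda^2 v)$, so that $\log(1+x)\ge x-x^2$ immediately yields $\tfrac14\sup_s(1-R_\mathrm{s}(sg))$ with no case split --- in fact a slightly \emph{stronger} conclusion than~\eqref{eq:growth_rate_arbitrary}, which you then weaken by taking $\wedge\, m$. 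For the upper bound both arguments end in a short case analysis; you use $\log(1+x)\le x-x^2/3$ where the paper uses the sharper $x-3x^2/8$ (from Lemma~\ref{lemma:growth_rate}(b)), but as your case split shows, either suffices to recover the constant $\tfrac43$. The scalar inequality $\log(1+x)\le x-x^2/3$ on $[-\tfrac12,\tfrac12]$ does hold (set $h(x)=x-x^2/3-\log(1+x)$; then $h(0)=h'(0)=h'(\tfrac12)=0$ and $h''$ changes sign exactly once on $[-\tfrac12,\tfrac12]$, forcing $h\ge 0$), so there is no gap.
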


\noindent Corollary~\ref{cor:rel_squared_risk} states that for an arbitrary $g\in\calG$, the growth rate is lower bounded by the minimum of the expected margin and the (optimized) squared risk of such predictor. While the latter term is always smaller for the optimal $g_\star$, this may not hold for an arbitrary $g\in\calG$. The lower bound~\eqref{eq:growth_rate_arbitrary_2}, which follows from Proposition~\ref{prop:equiv_rep}, is always worse than that for $g_\star$ (the expected margin is squared). The upper bound~\eqref{eq:ub_arbitrary} shows that the growth rate with the ONS strategy matches, up to constants, that of the optimal constant betting fraction. Before presenting a practical sequential 2ST, we provide two important remarks that further contextualize the current work in the literature.

\begin{remark}
In practice, we learn a predictor sequentially and have to choose a learning algorithm. Note that~\eqref{eq:growth_rate_arbitrary} suggests that direct margin maximization may hurt the power of the resulting 2ST: the squared risk is sensitive to miscalibrated and overconfident predictors. \citet{kubler2022automl} made a similar conjecture in the context of batch two-sample testing. To optimize the power, the authors suggested minimizing the cross-entropy or the squared loss and related such approach to maximizing the signal-to-noise ratio, a heuristic approach that was proposed earlier by~\citet{sutherland2017generative}\footnote{Standard CLT does not apply directly when the conditioning set grows; see~\citep{kim2020dim_agn}.}. 
\end{remark}

\begin{remark}\label{rmk:likelihood_ratio}
Suppose that $g^{\mathrm{Bayes}}\in\calG$ and consider the payoff function based on the squared risk~\eqref{eq:oracle_payoff_sq}. At round $t$, the wealth of a player $\calK_{t-1}$ is multiplied by
\begin{equation}\label{eq:lr_rep_balanced}
\begin{aligned}
    1+\lambda_t \cdot W_t \cdot g^{\mathrm{Bayes}}(Z_t) &=(1-\lambda_t)\cdot 1+\lambda_t \cdot \roundbrack{1+ W_t \cdot g^{\mathrm{Bayes}}(Z_t)}\\
    &=(1-\lambda_t)\cdot 1+\lambda_t\cdot \frac{\roundbrack{\eta(Z_t)}^{\indicator{W_t=1}}\roundbrack{1-\eta(Z_t)}^{\indicator{W_t=-1}}}{\roundbrack{\tfrac{1}{2}}^{\indicator{W_t=1}}\roundbrack{\tfrac{1}{2}}^{\indicator{W_t=-1}}},
\end{aligned}
\end{equation}
and hence, the betting fractions interpolate between the regimes of not betting and betting using a likelihood ratio. From this standpoint, 2STs of~\citet{lheritier2018seq_nonparam,lheritier2019low_complexity,pandeva2022evaluating} set $\lambda_t=1$, $\forall t$, and use only the second term for updating the wealth despite the fact that the true likelihood ratio is unknown. An argument about the consistency of such test hence requires imposing strong assumptions about a sequence of predictors $(g_t)_{t\geq 1}$~\citep{lheritier2018seq_nonparam,lheritier2019low_complexity}. Our test differs in a critical way: we use a sequence of betting fractions, $(\lambda_t)_{t\geq 1}$, which adapts to the quality of the underlying predictors, yielding a consistent test under much weaker assumptions.
\end{remark}


\begin{example}\label{ex:improved_perf}
Consider $P = \mathcal{N}(0,1)$ and $Q=\mathcal{N}(\delta, 1)$ for 20 values of $\delta$, equally spaced in $[0,0.5]$. 
For a given $\delta$, the Bayes-optimal predictor is 
\begin{equation}\label{eq:gaus_lda}
g^{\mathrm{Bayes}}(z) = \frac{\varphi(z;0,1)-\varphi(z;\delta,1)}{\varphi(z;0,1)+\varphi(z;\delta,1)} \in [-1,1],
\end{equation}
where $\varphi(z;\mu,\sigma^2)$ denotes the density of $\mathcal{N}(\mu,\sigma^2)$ evaluated at $z$. In Figure~\ref{subfig:betting_fractions_comparison_lda}, we compare tests that use (a) the Bayes-optimal predictor, (b) a predictor constructed with the plug-in estimates of the means and variances. While in the former case betting using a likelihood ratio ($\lambda_t=1$, $\forall t$) is indeed optimal, our test with an adaptive sequence $(\lambda_t)_{t\geq 1}$ is superior when a predictor is learned. The difference becomes even more drastic in Figure~\ref{subfig:betting_fractions_comparison_knn} where a (regularized) $k$-NN predictor is used.

\begin{figure}[!htb]
\centering
\begin{subfigure}[b]{0.425\textwidth}
    \centering
    \includegraphics[width=\textwidth]{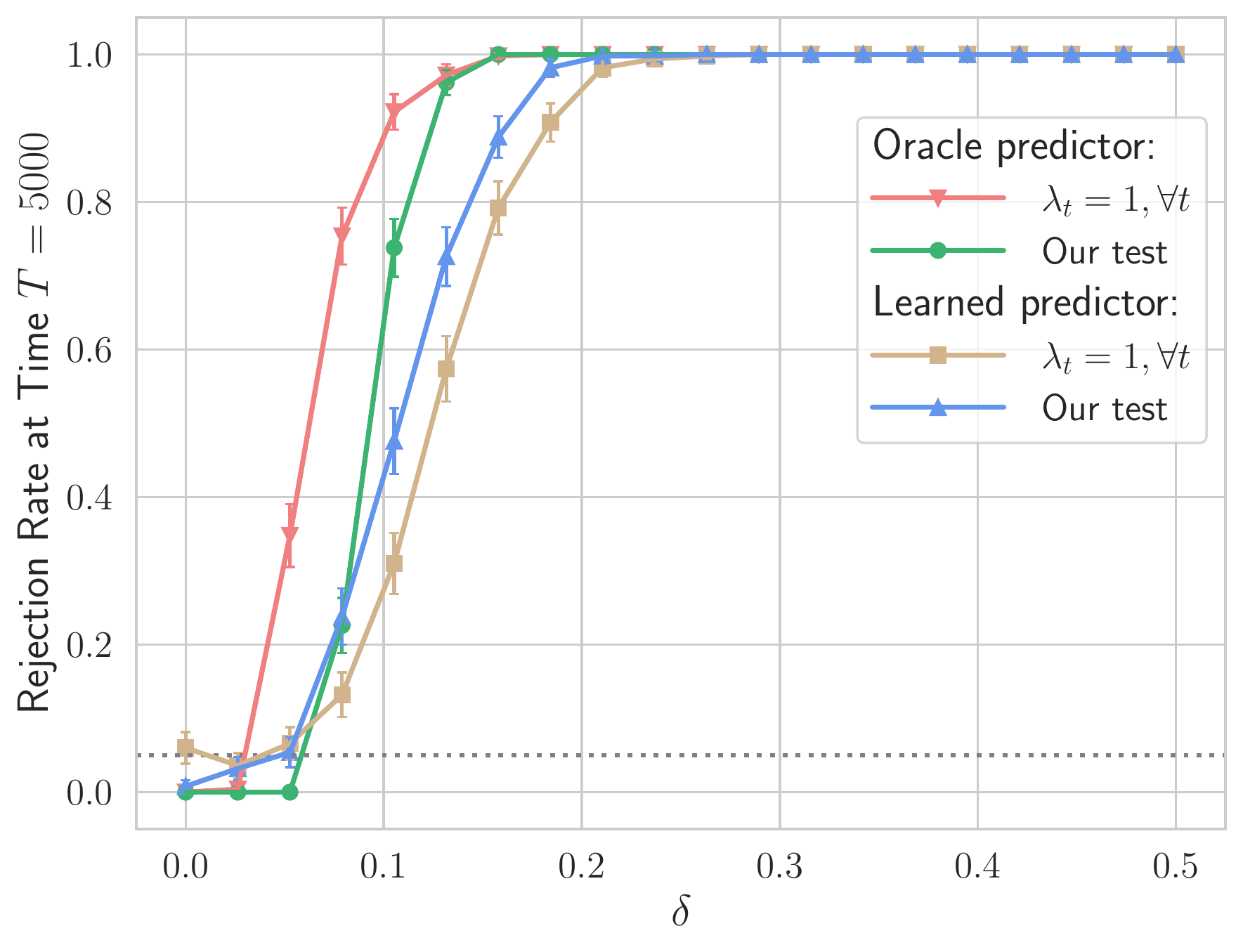}
    \caption{Parametric model.}
    \label{subfig:betting_fractions_comparison_lda}
\end{subfigure}\hspace{0.25in}
\begin{subfigure}[b]{0.425\textwidth}
    \centering
    \includegraphics[width=\textwidth]{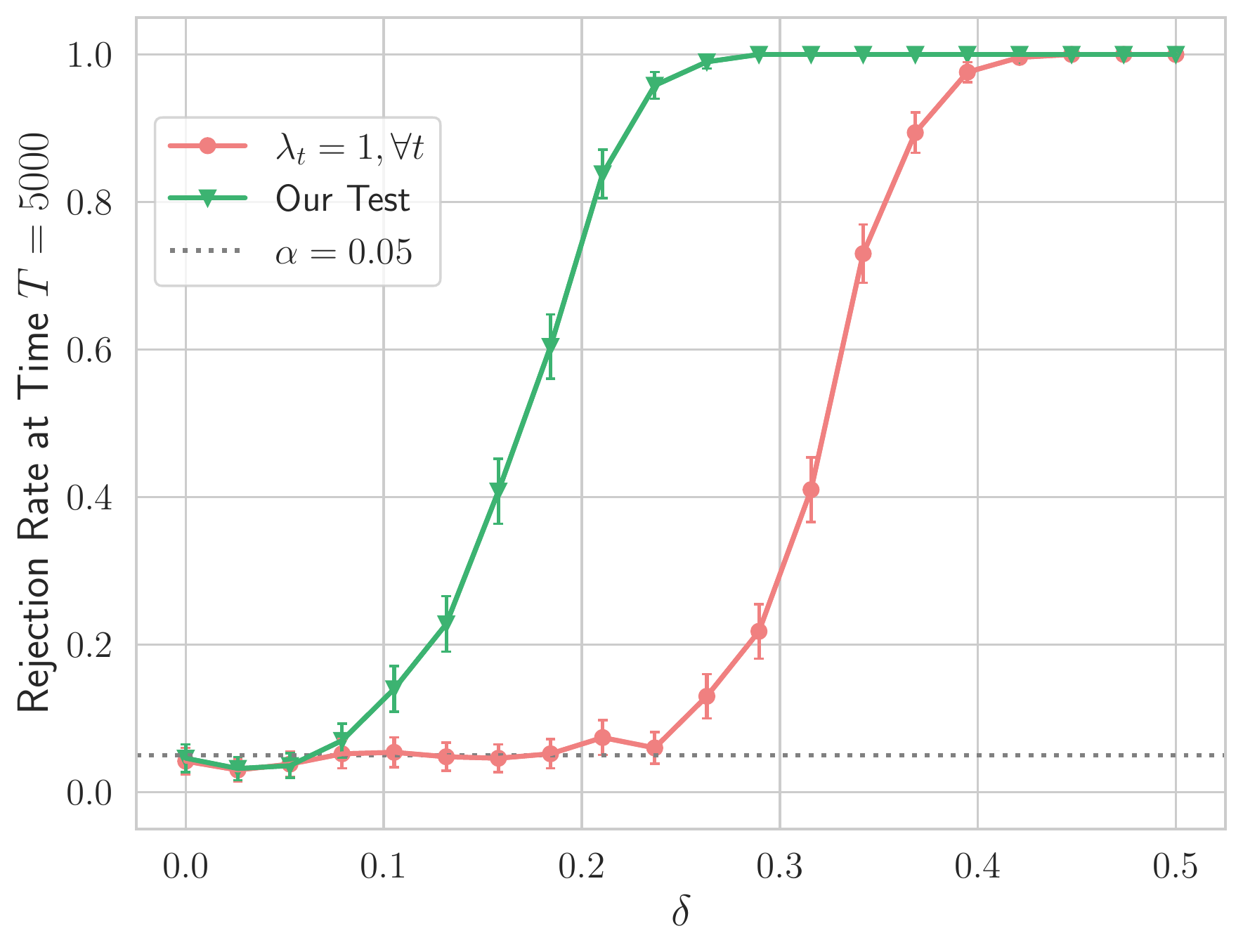}
    \caption{Nonparametric model ($k$-NN).}
    \label{subfig:betting_fractions_comparison_knn}
\end{subfigure}
    \caption{Comparison between our 2ST with adaptive betting fractions and the likelihood ratio test for Example~\ref{ex:improved_perf}. While the likelihood ratio test is better if the Bayes-optimal predictor is used, our test is superior if a predictor is learned. The results are aggregated over 500 runs for each value of $\delta$.}
    \label{fig:betting_fractions_comparison}
\end{figure}
\end{example}
\paragraph{Practical Test.} Let $\calA_\mathrm{c}: \roundbrack{\cup_{t\geq 1} (\calZ \times \curlybrack{-1,+1})^t}\times \calG \to \calG$ denote a learning algorithm which maps a training dataset of any size and previously used classifier, to an updated predictor. For example, $\calA_\mathrm{c}$ may apply a single gradient descent step using the most recent observation to update a model. 
We start with $\calD_{0}=\emptyset$ and $g_1\in\calG: g_1(z) = 0$, for any $z\in\calZ$. At round $t$, we use one of the payoffs:
\begin{subequations}\label{eq:ps_payoffs}
\begin{align}
    f^{\mathrm{m}}_t(Z_t,W_t) &=  W_t \cdot\sign{ g_t(Z_t)}\in \{-1,1\},\label{eq:ps_payoff_mis}\\
    f^{\mathrm{s}}_t(Z_t,W_t) &= W_t \cdot g_t(Z_t)\in [-1,1].\label{eq:ps_payoff_sq}
\end{align}
\end{subequations}
After $\roundbrack{Z_{t},W_{t}}$ is used for betting, we update a training dataset: $\calD_{t} = \calD_{t-1} \cup \curlybrack{(Z_{t},W_t)}$, and an existing predictor: $g_{t+1} = \calA_{\mathrm{c}}(\calD_{t}, g_{t})$. We summarize our sequential classification-based 2ST (Seq-C-2ST) in Algorithm~\ref{alg:clas_spit_ps}. While we do not need any assumptions to confirm the type I error control, we place some mild assumptions on the learning algorithm $\calA_\mathrm{c}$ to argue about the consistency. 

\begin{algorithm}[!htb]
\caption{Sequential classification-based 2ST (Seq-C-2ST)}\label{alg:clas_spit_ps}
\begin{algorithmic}
\State \textbf{Input:} level $\alpha\in (0,1)$, data stream $\roundbrack{(Z_t, W_t)}_{t\geq 1}$, $g_1(z) \equiv 0$, $\calA_{\mathrm{c}}$, $\calD_{0}=\emptyset$, $\lambda_1^{\mathrm{ONS}}=0$.
\For {$t=1,2,\dots$}
\State Evaluate the payoff $f_t^{\mathrm{s}}(Z_{t},W_{t})$ as in~\eqref{eq:ps_payoff_mis};
\State Using $\lambda_t^{\mathrm{ONS}}$, update the wealth process $\calK_t^\mathrm{s}$ as per~\eqref{eq:wealth_update};
\If {$\calK_t^\mathrm{s}\geq 1/\alpha$}
\State Reject $H_0$ and stop;
\Else
\State Update the training dataset: $\calD_t:=\calD_{t-1}\cup \curlybrack{(Z_t, W_t)}$;
\State Update predictor: $g_{t+1} = \calA_{\mathrm{c}}(\calD_t,g_{t})$;
\State Compute $\lambda_{t+1}^{\mathrm{ONS}}$ (Algorithm~\ref{alg:ons_bet}) using $f_t^{\mathrm{s}}(Z_{t},W_{t})$;
\EndIf
\EndFor
\end{algorithmic}
\end{algorithm}

\begin{assumption}[$R_\mathrm{m}$-learnability]\label{assump:learnability_misclas}
Suppose that $H_1$ in~\eqref{eq:proxy_game_alt} is true. An algorithm $\calA_\mathrm{c}$ is such that the resulting sequence $(g_t)_{t\geq 1}$ satisfies: $\limsup_{t\to\infty} \frac{1}{t}\sum_{i=1}^t \indicator{W_i\cdot \sign{g_i(Z_i)}<0} \overset{\mathrm{a.s.}}{<}1/2. $
\end{assumption}

\begin{assumption}[$R_\mathrm{s}$-learnability]\label{assump:learnability_sq}
Suppose that $H_1$ in~\eqref{eq:proxy_game_alt} is true. An algorithm $\calA_\mathrm{c}$ is such that the resulting sequence $(g_t)_{t\geq 1}$ satisfies: $\limsup_{t\to\infty} \frac{1}{t}\sum_{i=1}^t\roundbrack{g_i(Z_i)- W_i}^2 \overset{\mathrm{a.s}}{<}1$.
\end{assumption}

\noindent In words, the above assumptions state that a sequence of predictors $(g_t)_{t\geq 1}$ is better than a chance predictor on average. We conclude with the following result, whose proof is deferred to Appendix~\ref{appsubsec:clas_spit_proofs}.

\begin{restatable}{theorem}{thmpredictbletest}\label{thm:predictable_test}
The following claims hold for Seq-C-2ST (Algorithm~\ref{alg:clas_spit_ps}):
\begin{enumerate}[itemsep=0em]
        \item If $H_0$ in~\eqref{eq:proxy_game_null} is true, the test ever stops with probability at most $\alpha$: $\Prob_{H_0}\roundbrack{\tau<\infty}\leq \alpha$.
        \item Suppose that $H_1$ in~\eqref{eq:proxy_game_alt} is true. Then:
    \begin{enumerate}[itemsep=0em]
    \item Under Assumption~\ref{assump:learnability_misclas}, the test with the payoff~\eqref{eq:ps_payoff_mis} is consistent: $\Prob_{H_1}\roundbrack{\tau<\infty}=1$. 
    \item Under Assumption~\ref{assump:learnability_sq}, the test with the payoff~\eqref{eq:ps_payoff_sq} is consistent: $\Prob_{H_1}\roundbrack{\tau<\infty}=1$.
    \end{enumerate}
\end{enumerate}
\end{restatable}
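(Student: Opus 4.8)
\emph{Type I error (Part 1).} This requires no assumptions and follows the generic betting argument recalled in the introduction. Under $H_0$ in~\eqref{eq:proxy_game_null} we have $P=Q$, so inside each i.i.d.\ pair $(Z_t,W_t)$ the label $W_t$ is a fair $\pm1$ coin independent of $Z_t$, and hence $W_t$ is independent of $(\calF_{t-1},Z_t)$ with $\calF_{t-1}=\sigma((Z_1,W_1),\dots,(Z_{t-1},W_{t-1}))$. Since $g_t$ is $\calF_{t-1}$-measurable, conditioning on $(\calF_{t-1},Z_t)$ and using $\Exp{}{W_t\mid\calF_{t-1},Z_t}=0$ gives $\Exp{}{f_t^{\mathrm m}(Z_t,W_t)\mid\calF_{t-1}}=\Exp{}{f_t^{\mathrm s}(Z_t,W_t)\mid\calF_{t-1}}=0$. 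As $\lambda_t^{\mathrm{ONS}}$ is $\calF_{t-1}$-measurable and lies in $[0,1/2]$, we get $1+\lambda_t^{\mathrm{ONS}}f_t\ge 1/2>0$, so $(\calK_t)_{t\ge 0}$ from~\eqref{eq:wealth_update} is a nonnegative martingale with $\calK_0=1$ under any $H_0$-distribution. Ville's inequality~\citep{ville1939etude} then yields $\Prob_{H_0}\roundbrack{\exists t:\calK_t\ge 1/\alpha}\le\alpha$, i.e.\ $\Prob_{H_0}\roundbrack{\tau<\infty}\le\alpha$ by the definition~\eqref{eq:stop_time} of $\tau$.

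\emph{Consistency: a common reduction (Parts 2(a), 2(b)).} In each case I will prove $\liminf_{t\to\infty}\tfrac1t\log\calK_t\overset{\mathrm{a.s.}}{>}0$, which forces $\calK_t\to\infty$ and hence $\tau<\infty$ almost surely. The engine is the ONS regret bound for Algorithm~\ref{alg:ons_bet} (see \citet{hazan2007log_regret,cutkosky2018bb_reductions}): there is a universal constant $C$ such that, for every realization of $(f_i)_{i\ge1}$, every $t$, and every $\lambda\in[0,1/2]$,
\[
  \log\calK_t \;=\; \sum_{i=1}^t\log\bigl(1+\lambda_i^{\mathrm{ONS}}f_i\bigr)\;\ge\;\sum_{i=1}^t\log(1+\lambda f_i)-C\log(t+1).
\]
Because this inequality is pathwise and uniform in $\lambda\in[0,1/2]$, I am free to instantiate $\lambda$ later at a value that depends on the whole sample path — and this is the one delicate point (see below). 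Applying the elementary bound $\log(1+x)\ge x-x^2$, valid for $x\in[-1/2,1/2]$ and hence for $x=\lambda f_i$ with $\lambda\in[0,1/2]$, $f_i\in[-1,1]$, and writing $\bar f_t:=\tfrac1t\sum_{i=1}^t f_i$, I obtain (using $f_i^2\le1$, since $f_i^{\mathrm m}\in\{-1,1\}$ and $f_i^{\mathrm s}=W_ig_i(Z_i)$ with $g_i(Z_i)\in[-1,1]$)
\[
  \tfrac1t\log\calK_t\;\ge\;\lambda\,\bar f_t-\lambda^2-\tfrac{C\log(t+1)}{t}.
\]
Hence it suffices to show $\mu:=\liminf_{t\to\infty}\bar f_t$ is almost surely strictly positive (note $\mu\le1$ always): taking the path-dependent comparator $\lambda=\mu/2\in(0,1/2]$ gives $\liminf_t\tfrac1t\log\calK_t\ge\mu^2/4>0$ a.s.

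\emph{Verifying $\mu>0$ from the learnability assumptions, and the obstacle.} For the misclassification payoff~\eqref{eq:ps_payoff_mis}, $f_i^{\mathrm m}=+1$ exactly when $Z_i$ is classified correctly, so $\bar f_t=1-2\hat q_t$ with $\hat q_t:=\tfrac1t\sum_{i=1}^t\indicator{W_i\sign{g_i(Z_i)}<0}$, and Assumption~\ref{assump:learnability_misclas} is precisely $\limsup_t\hat q_t<1/2$ a.s., giving $\mu=1-2\limsup_t\hat q_t>0$ a.s. For the squared-risk payoff~\eqref{eq:ps_payoff_sq}, use $W_i^2=1$ to expand $(g_i(Z_i)-W_i)^2=g_i(Z_i)^2-2f_i^{\mathrm s}+1$, so $\tfrac1t\sum_{i=1}^t(g_i(Z_i)-W_i)^2=\tfrac1t\sum_{i=1}^t g_i(Z_i)^2-2\bar f_t+1$; Assumption~\ref{assump:learnability_sq} says this running average has $\limsup<1$ a.s., hence on a probability-one event there is a random $c>0$ with $2\bar f_t\ge\tfrac1t\sum_{i=1}^t g_i(Z_i)^2+c\ge c$ for all large $t$, whence $\mu\ge c/2>0$ a.s. This settles both cases. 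The only step needing genuine care is the application of the ONS regret bound with the comparator $\lambda=\mu/2$ depending on the limiting, sample-path-dependent quantity $\mu$; this is legitimate precisely because the ONS guarantee is a deterministic inequality holding simultaneously for all $\lambda\in[0,1/2]$, so it applies to any (measurable, possibly path-dependent) choice, and I will phrase it in this uniform-in-$\lambda$ form rather than appealing to a fixed comparator. The remaining pieces — the second-order expansion of $\log(1+x)$, the crude bound $f_i^2\le1$ (keeping $\tfrac1t\sum_i g_i(Z_i)^2$ instead and optimizing $\lambda$ would sharpen the growth rate in the spirit of Corollary~\ref{cor:rel_squared_risk}, but is unnecessary for consistency), and the algebraic identity above — are routine, and the architecture parallels the proof of Theorem~\ref{thm:oracle_test} with running empirical averages replacing expectations.
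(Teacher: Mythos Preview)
Your proposal is correct and follows essentially the same route as the paper. For Part~1 both arguments verify the payoffs are fair under $H_0$ and invoke Ville's inequality; for Part~2 both combine the ONS regret bound with the second-order inequality $\log(1+x)\ge x-x^2$ to reduce consistency to $\liminf_t \tfrac1t\sum_i f_i>0$ a.s., and then extract this from Assumptions~\ref{assump:learnability_misclas} and~\ref{assump:learnability_sq} via the same algebra (the identity $\indicator{x<0}=(1-\sign{x})/2$ for the misclassification case, and the expansion $(g_i(Z_i)-W_i)^2=g_i^2(Z_i)-2W_ig_i(Z_i)+1$ for the squared case). The only cosmetic difference is that the paper keeps the comparator $\lambda_0=\tfrac12\bigl((\tfrac{\sum f_i}{\sum f_i^2}\wedge1)\vee0\bigr)$ with $\sum f_i^2$ intact, while you immediately bound $f_i^2\le1$ and take $\lambda=\mu/2$; as you note, this only affects the growth-rate constant, not consistency.
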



\paragraph{Real Data Experiment.} To compare sequential classification-based and kernelized 2STs, we consider Karolinska Directed Emotional Faces dataset (KDEF)~\citep{lundqvist1998karol} that contains images of actors and actresses expressing different emotions: afraid (AF), angry (AN), disgusted (DI), happy (HA), neutral (HE), sad (SA), and surprised (SU). Following earlier works~\citep{lopezpaz2017revisiting,jitkrittum2016interpretable}, we focus on straight profile only and assign HA, NE, SU emotions to the positive class (instances from $P$), and AF, AN, DI emotions to the negative class (instances from $Q$); see Figure~\ref{subfig:kdef_vis}. We remove corrupted images and obtain a dataset containing 802 images with six different emotions. The original images ($562\times 762$ pixels) are cropped to exclude the background, resized to $64\times 64$ pixels and converted to grayscale. 

\begin{figure}[!htb]
\centering
        \begin{subfigure}[b]{0.425\linewidth}
            \centering
            \includegraphics[width=0.75\linewidth]{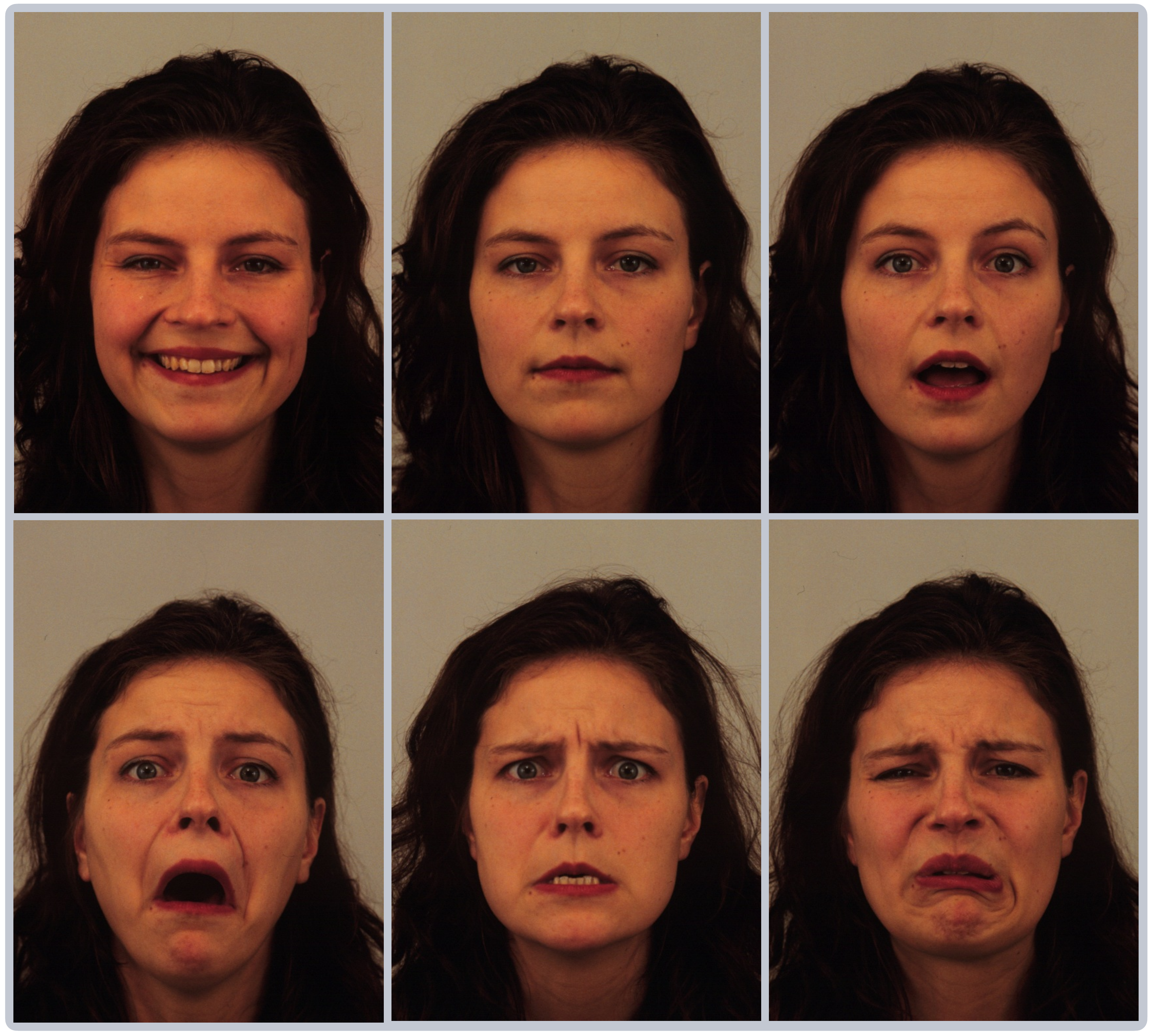}
            \vspace{0.18in}
            \caption{}
            \label{subfig:kdef_vis}
        \end{subfigure}\hspace{0.25in}
        \begin{subfigure}[b]{0.425\linewidth}
            \centering
            \includegraphics[width=\linewidth]{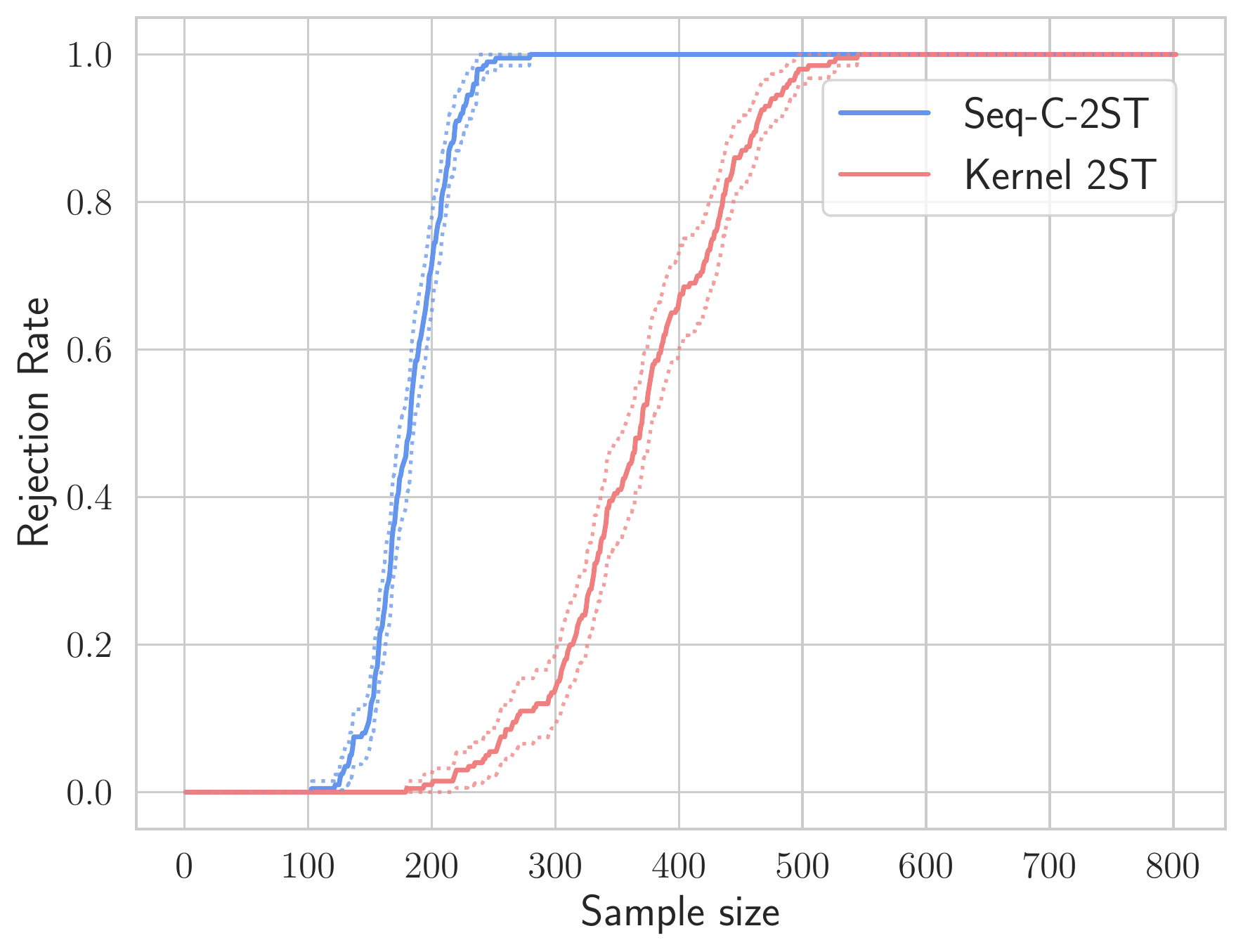}
            \caption{}
            \label{subfig:kdef_power}
        \end{subfigure}
        \caption{(a) Examples of instances from $P$ (top row) and $Q$ (bottom row) for KDEF dataset. (b) Rejection rates for our test (Seq-C-2ST) and the sequential kernelized 2ST. While both tests achieve perfect power with enough data, our test is superior to the kernelized approach, requiring fewer observations to do so. The results are averaged over 200 random orderings of the data.}
        \label{fig:kdef_exp}
\end{figure}

For Seq-C-2ST, we use a small CNN as an underlying model and defer details about the architecture and training to Appendix~\ref{appsubsec:model_details}. As a reference kernel-based 2ST, we use the sequential MMD test of~\citet{shekhar2022one_two_sample} and adapt it to the setting where at each round either an observation from $P$ or that from $Q$ is revealed; see Appendix~\ref{appsubsec:model_details} for details. In Figure~\ref{subfig:kdef_power}, we illustrate that while both tests achieve perfect power after processing sufficiently many observations, our Seq-C-2ST requires fewer observations to do so.

\section{Classification-based Independence Testing}\label{sec:it}

\paragraph{Sequential Classification-based Independence Test (Seq-C-IT).} Under the setting of Definition~\ref{def:seq_it}, a single point from $P_{XY}$ is revealed at each round. Following~\citep{podkopaev2022skit}, we bet on two points from $P_{XY}$ (labeled as $+1$) and utilize external randomization to produce instances from $P_X\times P_Y$ (labeled as $-1$). 
Let $\calA_{\mathrm{c}}^{\mathrm{IT}}: \roundbrack{\cup_{t\geq 1} ((\calX\times \calY) \times \curlybrack{-1,+1})^t}\times \calG \to \calG$ denote a learning algorithm which maps a training dataset of any size and previously used classifier, to an updated predictor. We start with $\calD_{0}=\emptyset$ and $g_1: g_1(x,y) = 0$, $\forall (x,y)\in\calX\times\calY$. We use derandomized versions of the payoffs~\eqref{eq:ps_payoffs}, e.g., instead of~\eqref{eq:ps_payoff_sq}, we use
\begin{equation}\label{eq:payoff_derand_sq_it}
\begin{aligned}
    f_t^{\mathrm{s}}((\markred{X_{2t-1}},\markred{Y_{2t-1}}), (\markblue{X_{2t}},\markblue{Y_{2t}})) &= \tfrac{1}{4}\roundbrack{g_t(\markred{X_{2t-1}},\markred{Y_{2t-1}})+g_t(\markblue{X_{2t}},\markblue{Y_{2t}})}\\
    &-\tfrac{1}{4}\roundbrack{g_t(\markred{X_{2t-1}},\markblue{Y_{2t}})+g_t(\markblue{X_{2t}},\markred{Y_{2t-1}})}.
\end{aligned}
\end{equation}
After $(X_{2t-1},Y_{2t-1}), (X_{2t},Y_{2t})$ have been used for betting, we update a training dataset:
\begin{equation*}
    \calD_{t} = \calD_{t-1} \cup \curlybrack{((\markred{X_{2t-1}},\markred{Y_{2t-1}}),+1),((\markblue{X_{2t}},\markblue{Y_{2t}}),+1),((\markred{X_{2t-1}},\markblue{Y_{2t}}),-1),((\markblue{X_{2t}},\markred{Y_{2t-1}}),-1)},
\end{equation*}
and an existing predictor: $g_{t+1} = \calA_{\mathrm{c}}^{\mathrm{IT}}(\calD_{t},g_{t})$. Seq-C-IT inherits the time-uniform type I error control and the consistency guarantees of Theorem~\ref{thm:predictable_test}, and we omit details for brevity.

\paragraph{Synthetic Experiments.} 
In our evaluation, we first consider synthetic datasets where the complexity of the independence testing setup is characterized by a single univariate parameter. We set the monitoring horizon to $T=5000$ points from $P_{XY}$, and for each parameter value, we aggregate the results over 200 runs. In particular, we use the following synthetic settings:
\begin{enumerate}
    \item \emph{Spherical model.} Let $(U_t)_{t\geq 1}$ be a sequence of random vectors on a unit sphere in $\Real^d$: $U_t\simiid \mathrm{Unif}(\mathbb{S}^d)$, and let $u_{(i)}$ denote the $i$-th coordinate of $u$. For $t\geq 1$, we take
    \begin{equation*}\label{eq:spher_model}
        (X_t, Y_t) = ((U_t)_{(1)},(U_t)_{(2)}).
    \end{equation*}
    We consider $d\in\curlybrack{3,\dots,10}$, where larger $d$ defines a harder setup.
    \item \emph{Hard-to-detect-dependence (HTDD) model.} We sample $((X_t,Y_t))_{t\geq 1}$ from
    \begin{equation}\label{eq:htdd_model}
        p(x,y) = \tfrac{1}{4\pi^2} \roundbrack{1+\sin(wx)\sin(wy)}\cdot \indicator{(x,y)\in[-\pi,\pi]^2}.
    \end{equation}
    We consider $w\in\curlybrack{0,\dots,6}$, where $H_0$ is true (random variables are independent) if and only if $w=0$. For $w>0$, $\Corr{}{X,Y}\approx 1/w^2$, and the setup is harder for larger $w$.
\end{enumerate}
For the comparison, we use two predictive models to construct Seq-C-ITs:
\begin{enumerate}[itemsep=0em]
    \item Let $\calN_t(z):=\calN(z,\calD_{t-1},k_t)$ define the set of $k_t$ closest points in $\calD_{t-1}$ to a query point $z:=(x,y)$. We consider a \emph{regularized} $k$-NN predictor: $\hat{g}_t(z) = \tfrac{1}{k_t+1} \sum_{(Z,W) \in \calN_t(z)} W.$
    We select the number of neighbors using the square-root rule: $k_t=\sqrt{\abs{\calD_{t-1}}}=\sqrt{4(t-1)}$. 
    \item We use a multilayer perceptron (MLP) with three hidden layers and 128, 64 and 32 neurons respectively and the parameters learned using an incremental training scheme.
\end{enumerate}
We use the HSIC-based sequential kernelized independence test (SKIT)~\citep{podkopaev2022skit} as a reference test and defer details, such as MLP training scheme and SKIT hyperparameters, to Appendix~\ref{appsubsec:model_details}. In Figure~\ref{fig:synthetic_examples}, we observe that SKIT outperforms Seq-C-ITs under the spherical model (with no localized dependence structure), whereas, under the structured HTDD model, Seq-C-ITs, is superior. Further, inspecting Figure~\ref{subfig:htdd_example_power} at $w=0$ confirms that all tests control the type I error. We refer the reader to Appendix~\ref{appsubsec:c_spit_add_sims} for additional experiments on synthetic data with localized dependence where Seq-C-ITs are superior. In Appendix~\ref{appsubsec:c_spit_add_sims}, we also provide the results for the average \emph{stopping times} of our tests: we empirically confirm that our tests are adaptive to the complexity of a problem at hand: they stop earlier on easy tasks and later on harder ones. 

\begin{figure}[!htb]
\centering
\begin{subfigure}[b]{0.425\textwidth}
    \centering
    \includegraphics[width=\textwidth]{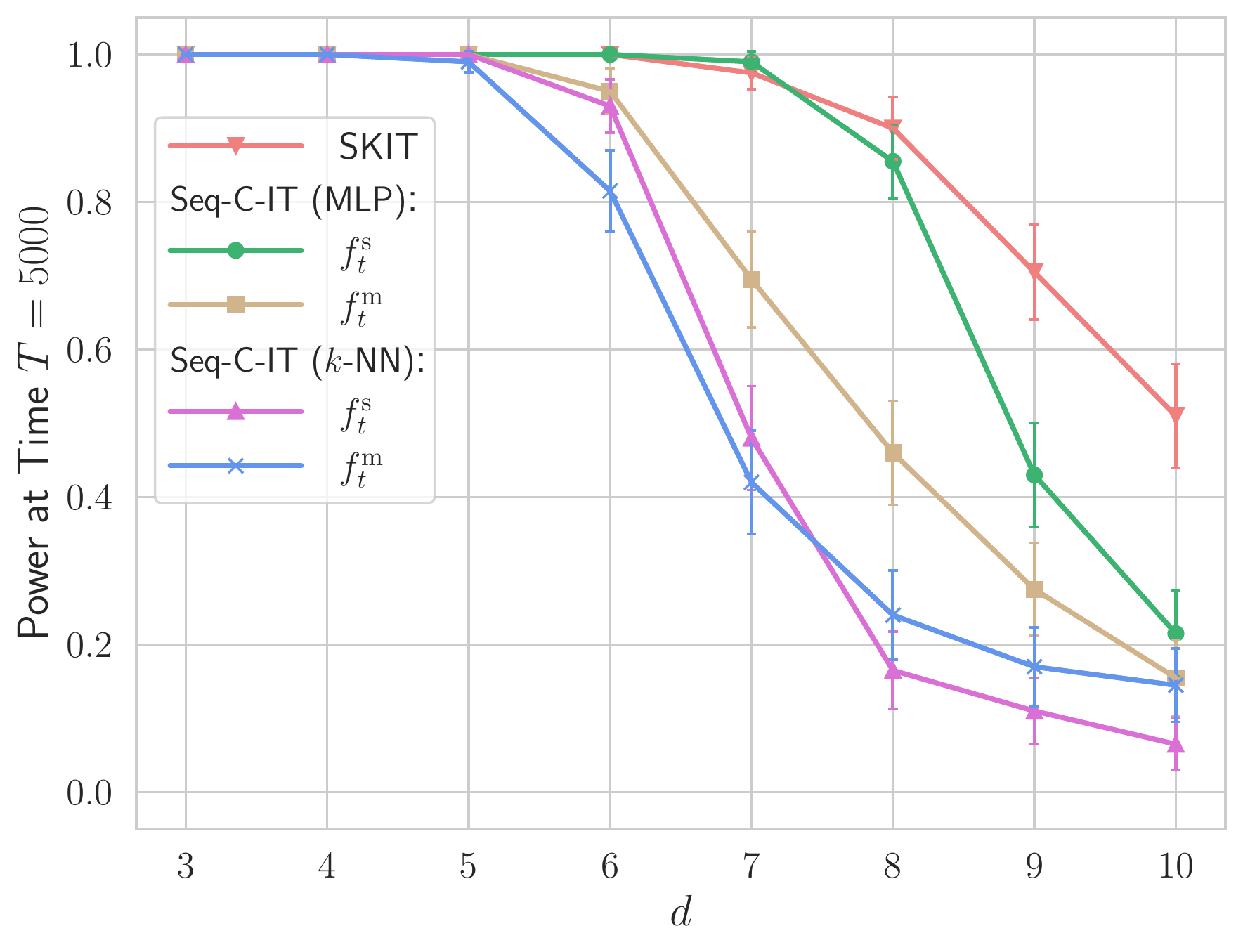}
    \caption{Spherical model.}
    \label{subfig:spherical_example_power}
\end{subfigure}\hspace{0.25in}
\begin{subfigure}[b]{0.425\textwidth}
    \centering
    \includegraphics[width=\textwidth]{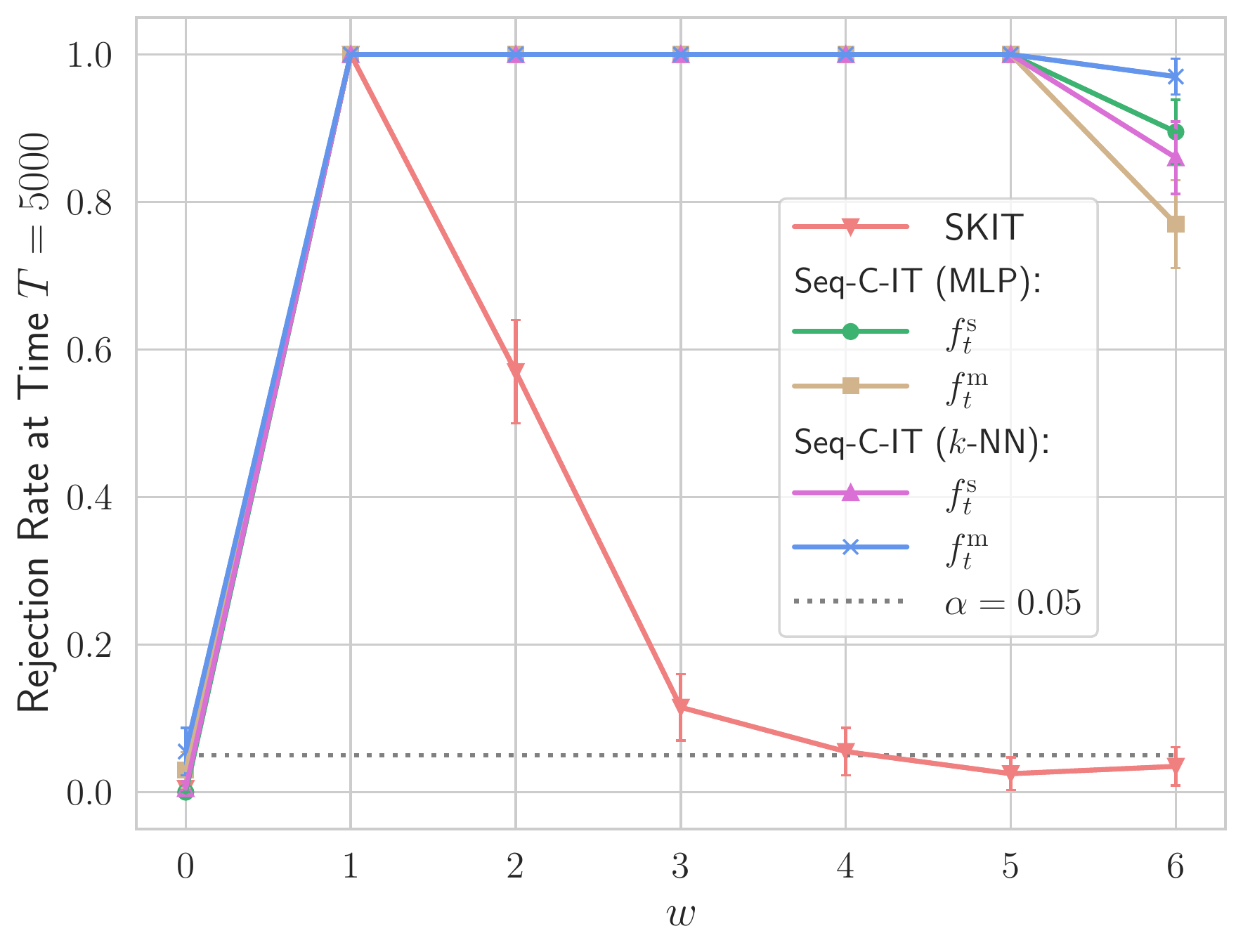}
    \caption{HTDD model.}
    \label{subfig:htdd_example_power}
\end{subfigure}
\caption{Power of different sequential independence tests on synthetic data from Section~\ref{sec:it}. Under the spherical model (no localized dependence), SKIT is better than Seq-C-ITs. Under the (structured) HTDD model, SKIT is inferior to sequential predictive independence tests. 
}
\label{fig:synthetic_examples}
\end{figure}


\paragraph{Real Data Experiment.} We compare two independence tests on MNIST image dataset~\citep{lecun1998mnist}. To simulate the null setting, we sample pairs of random images from the entire dataset, and to simulate the alternative, we sample pairs of random images depicting the same digit (Figure~\ref{subfig:mnist_vis}). For Seq-C-IT, we use MLP with the same architecture as for simulations on synthetic data. For SKIT, we use the median heuristic with 20 points from $P_{XY}$ to compute kernel hyperparameters. In Figure~\ref{subfig:mnist_comp}, we show that while both tests control the type I error under $H_0$, SKIT is inferior to Seq-C-IT under $H_1$, requiring twice as much data to achieve perfect power.

\begin{figure}[!htb]
\centering
        \begin{subfigure}[b]{0.425\linewidth}
            \centering
            \includegraphics[width=0.65\linewidth]{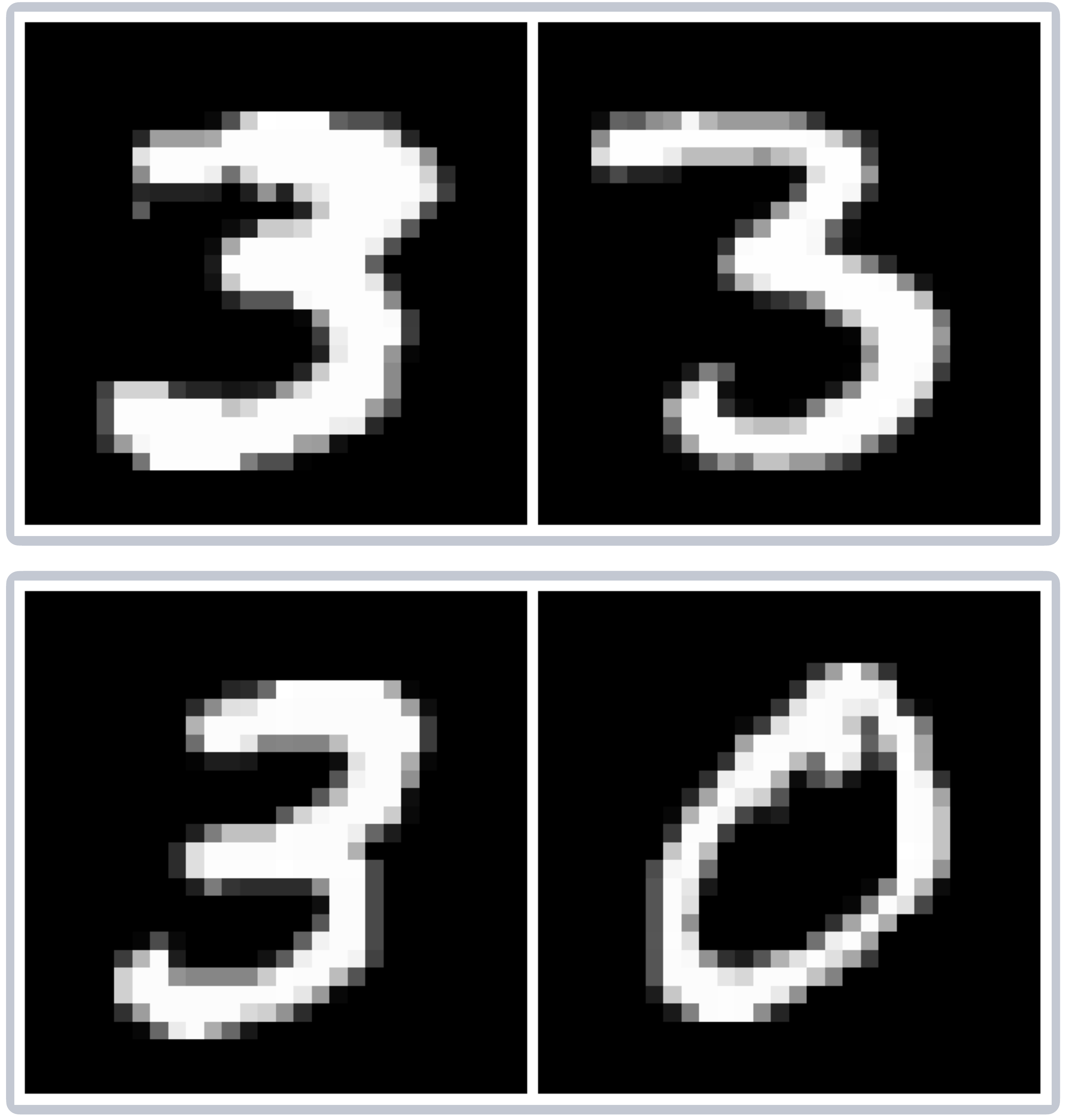}
            \vspace{0.175in}
            \caption{}
            \label{subfig:mnist_vis}
        \end{subfigure}\hspace{0.25in}
        \begin{subfigure}[b]{0.425\linewidth}
            \centering
            \includegraphics[width=\linewidth]{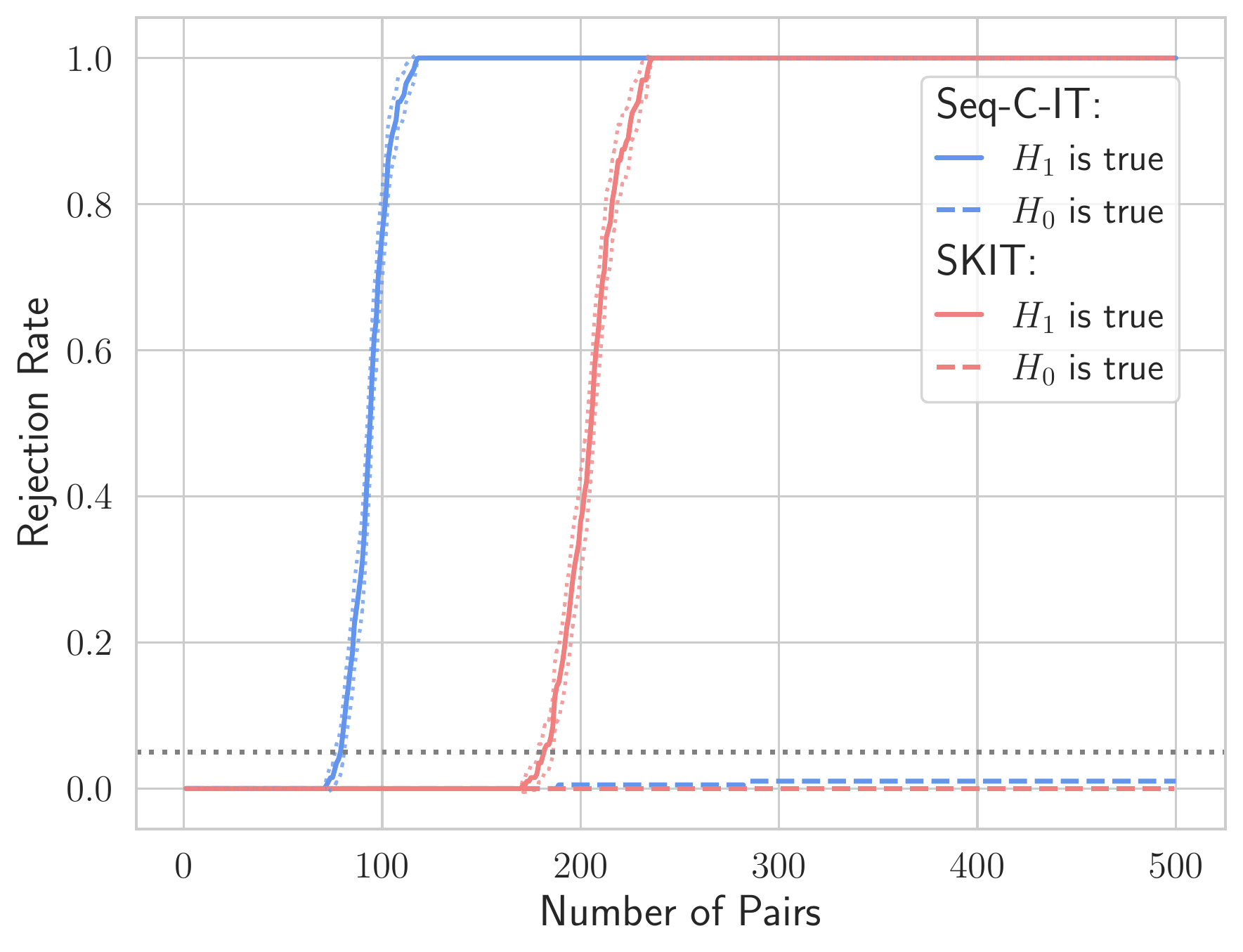}
            \caption{}
            \label{subfig:mnist_comp}
        \end{subfigure}
        \caption{(a) Instances from the $P_{XY}$ (top row) and $P_X\times P_Y$ (bottom row) for MNIST dataset. (b) While both independence tests control the type I error under $H_0$, Seq-C-IT outperforms SKIT under $H_1$, rejecting the null much sooner. The results are aggregated over 200 runs.
        }
        \label{fig:mnist_exp}
\end{figure}

\section{Conclusion}

While kernel methods are state-of-the-art for nonparametric two-sample and independence testing, their performance often deteriorates on complex data, e.g., high-dimensional data with localized dependence. In such settings, prediction-based tests are often much more effective. In this work, we developed sequential predictive two-sample and independence tests following the principle of testing by betting. Our tests control the type I error despite continuously monitoring the data and are consistent under weak and tractable assumptions. Further, our tests provably adapt to the complexity of a problem at hand: they stop earlier on easy tasks and later on harder ones. An additional advantage of our tests is that an analyst may modify the design choices, e.g., model architecture, on-the-fly. Through experiments on synthetic and real data, we confirm that our tests are competitive to kernel-based ones overall and outperform those under structured settings. 

We refer the reader to the Appendix for additional results that were not included in the main paper:
\begin{enumerate}
    \item In Appendix~\ref{sec:reg_spit}, we complement classification-based ITs with a regression-based approach. Regression-based ITs represent an alternative to the classification-based approach in settings where a data stream $((X_t, Y_t))_{t\geq 1}$ may be processed directly as feature-response pairs.
    \item In Section~\ref{sec:clas_spit}, we considered the case of balanced classes, meaning that at each round, an instance from either $P$ or $Q$ is observed with equal chance. In Appendix~\ref{subsec:tst_unbalanced_classes}, we extend the methodology to a more general case of two-sample testing with unknown class proportions.
    \item Batch two-sample and independence tests rely on either a cutoff computed using the asymptotic null distribution of a chosen test statistic (when it is tractable) or a permutation p-value, and if the distribution drifts, both approaches fail to provide the type I error control. In contrast, Seq-C-2ST and Seq-C-IT remain valid beyond the i.i.d.\ setting by construction (analogous to tests developed by~\citet{shekhar2022one_two_sample,podkopaev2022skit}), and we refer the reader to Appendix~\ref{subsec:test_dist_drift} for more details.
\end{enumerate}

\paragraph{Acknowledgements.} The authors thank Ian Waudby-Smith and Tudor Manole for fruitful discussions.

\bibliographystyle{plainnat}
\bibliography{refs}

\newpage

\appendix
\onecolumn
\noindent\rule{\textwidth}{1pt}
\begin{center}
\vspace{7pt}
{\Large  Appendix}
\end{center}
\noindent\rule{\textwidth}{1pt}

\section{Regression-based Independence Testing}\label{sec:reg_spit}

Regression-based independence tests represent an alternative to classification-based approaches in settings where a data stream $((X_t, Y_t))_{t\geq 1}$ may be processed directly as feature-response pairs. Suppose that one selects a functional class $\calG:\calX\to\calY$ for performing such prediction task, and let $\ell$ denote a loss function that evaluates the quality of predictions. For example, if $\roundbrack{Y_t}_{t\geq 1}$ is a sequence of univariate random variables, one can use the squared loss: $\ell(g(x),y)=(g(x)-y)^2$, or the absolute loss: $\ell(g(x),y)=\abs{g(x)-y}$.

Such tests rely on the following idea: if the alternative $H_1$ in~\eqref{eq:it_alt} is true and a sequence of sequentially updated predictors $(g_t)_{t\geq 1}$ has nontrivial predictive power, then the losses on random instances drawn from the joint distribution $P_{XY}$ are expected to be less on average than the losses on random instances from $P_{X}\times P_Y$. For the $t$-th pair of points from $P_{XY}$, we can label the losses of $g_t$ on all possible $(X,Y)$-pairs as
\begin{equation}\label{eq:swap_strategy_losses}
    \begin{alignedat}{3}
    L_{2t-1} &= \ell\roundbrack{g_t({\markred{ X_{2t-1}}}),\markred{Y_{2t-1}}}, \quad & L_{2t} &= \ell\roundbrack{g_t({\markblue{ X_{2t}}}),\markblue{Y_{2t}}}, \\
     L'_{2t-1} &= \ell\roundbrack{g_t({\markred{X_{2t-1}}}),\markblue{Y_{2t}}},\quad & L'_{2t} &= \ell\roundbrack{g_t({\markblue{ X_{2t}}}),\markred{Y_{2t-1}}}.
    \end{alignedat}
\end{equation} 
One can view this problem as sequential two-sample testing under distribution drift (due to incremental learning of $(g_t)_{t\geq 1}$). Hence, one may use either Seq-C-2ST from Section~\ref{sec:clas_spit} or sequential kernelized 2ST of~\citet{shekhar2022one_two_sample} on the resulting sequence of the losses on observations from $P_{XY}$ and $P_X\times P_Y$. In what follows, we analyze a direct approach where testing is performed by comparing the losses on instances drawn from the two distributions. A critical difference with a construction of Seq-C-2ST is that to design a valid betting strategy one has to ensure that the payoff functions are lower bounded by negative one.

\subsection{Proxy Regression-based Independence Test}


To avoid cases when some expected values are not well-defined, we assume for simplicity that $\calX$ is a bounded subset of $\Real^d$ for som $d\geq 1$: $\calX = \curlybrack{x\in\Real^d: \norm{2}{x}\leq B_1}$ for some $B_1>0$. Similarly, we assume that $\calY$ is a bounded subset of $\Real$: $\calY = \curlybrack{y\in\Real: \abs{y}\leq B_2}$ for some $B_2>0$. We note that the construction of the regression-based IT will not require explicit knowledge of constants $B_1$ and $B_2$. First, we consider a setting where an instance either from the joint distribution or an instance from the product of the marginal distributions is observed at each round. 

\begin{definition}[Proxy Setting]\label{def:proxy_game_reg}
Suppose that we observe a stream of i.i.d.\ observations $((X_t,Y_t,W_t))_{t\geq 1}$, where $W_t\sim \mathrm{Rademacher}(1/2)$, the distribution of $(X_t,Y_t)\mid W_t=+1$ is $P_X\times P_Y$, and that of $(X_t,Y_t)\mid W_t=-1$ is $P_{XY}$. The goal is to design a test for the following pair of hypotheses:
\begin{subequations}\label{eq:it_reg}
\begin{align}
    H_0: & \ P_{XY}=P_X\times P_Y, \label{eq:it_reg_null} \\
    H_1: & \ P_{XY}\neq P_X\times P_Y. \label{eq:it_reg_alt}
\end{align}
\end{subequations}
\end{definition}

\paragraph{Oracle Proxy Sequential Regression-based IT.} To construct an oracle test, we assume having access to the oracle predictor $g_\star: \calX\to\calY$, e.g., the minimizer of the squared risk is $g_\star(x) = \Exp{}{Y\mid X=x}$. Formalizing the above intuition, we use $\Exp{}{W\ell(g_\star(X),Y)}$ as a natural way for measuring dependence between $X$ and $Y$. To enforce boundedness of the payoff functions, we use ideas of the tests for symmetry from~\citep{ramdas2022admissible,shekhar2022one_two_sample,podkopaev2022skit,shaer2022seq_crt}, namely we use a composition with an odd function:
\begin{equation}\label{eq:oracle_payoff_sym}
    f_\star^{\mathrm{r}}(X_t,Y_t, W_t)=\tanh\roundbrack{s_\star \cdot W_t\cdot  \ell(g_\star(X_t),Y_t)} \in [-1,1],
\end{equation}
where $s_\star>0$ is an appropriately selected scaling factor\footnote{We note that rescaling is important for arguing about consistency and not the type I error control.}. Since under $H_0$ in~\eqref{eq:it_reg_null}, $s_\star\cdot W_t \cdot \ell(g_\star(X_t),Y_t)$ is a random variable that is symmetric around zero, it follows that $\mathbb{E}[f_\star^{\mathrm{r}}(X_t,Y_t, W_t)]=0$, and, using the argument analogous to the proof of Theorem~\ref{thm:oracle_test}, we can easily deduce that a sequential IT based on $f_\star^{\mathrm{r}}$ controls the type I error control. The scaling factor $s_\star$ is selected in a way that guarantees that, if $H_1$ in~\eqref{eq:it_reg_alt} is true and if $\Exp{}{W\ell(g_\star(X),Y)}>0$, then $\Exp{}{f_\star^{\mathrm{r}}(X,Y, W)}>0$, which is a sufficient condition for consistency of the oracle test. In particular, we show that it suffices to consider:
\begin{subequations}
\begin{align}
s_\star &=  \sqrt{\frac{2\mu_\star}{\nu_\star}},\label{eq:tanh_oracle_norm_constant}\\
\text{where} \qquad \mu_\star &= \Exp{}{W \ell(g_\star(X),Y)}, \label{eq:tanh_oracle_norm_constant_numerator}\\
\nu_\star &= \Exp{}{(1+W)\roundbrack{\ell(g_\star(X),Y)}^3}. \label{eq:tanh_oracle_norm_constant_denominator}
\end{align}
\end{subequations}
Without loss of generality, we assume that $\nu_\star$ is bounded away from zero (which is a very mild assumption since $\nu_\star$ essentially corresponds to a cubic risk of $g_\star$ on data drawn from the product of the marginal distributions $P_X\times P_Y$). Let the \emph{oracle} regression-based wealth process $\roundbrack{\calK^{\mathrm{r},\star}_t}_{t\geq 0}$ be defined by using the payoff function~\eqref{eq:oracle_payoff_sym} with a scaling factor defined in~\eqref{eq:tanh_oracle_norm_constant}, along with a predictable sequence of betting fractions $\roundbrack{\lambda_t}_{t\geq 1}$ selected via the ONS strategy (Algorithm~\ref{alg:ons_bet}). We have the following result about the oracle regression-based IT, whose proof is deferred to Appendix~\ref{appsubsec:reg_spit_proofs}.

\begin{restatable}{theorem}{thmregoracle}\label{thm:reg_spit_oracle}
The following claims hold for the oracle sequential regression-based IT based on $\roundbrack{\calK^{\mathrm{r},\star}_t}_{t\geq 0}$:
\begin{enumerate}
    \item Suppose that $H_0$ in~\eqref{eq:it_reg_null} is true. Then the test ever stops with probability at most $\alpha$: $\Prob_{H_1}\roundbrack{\tau<\infty}\leq \alpha$.
    \item Suppose that $H_1$ in~\eqref{eq:it_reg_alt} is true. Further, suppose that: $\Exp{}{W\ell(g_\star(X),Y)}>0$. Then the test is consistent: $\Prob_{H_1}\roundbrack{\tau<\infty}=1$.
\end{enumerate}
\end{restatable}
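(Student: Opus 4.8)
\textbf{Proof proposal for Theorem~\ref{thm:reg_spit_oracle}.}

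The plan is to follow the same two-part template used in Theorem~\ref{thm:oracle_test}, separating the type I error claim (which needs no assumptions) from the consistency claim (which needs the scaling factor $s_\star$ to do its job). For the first part, I would observe that under $H_0$ in~\eqref{eq:it_reg_null} the quantity $W_t \cdot \ell(g_\star(X_t), Y_t)$ is symmetric about zero: conditionally on $(X_t, Y_t)$, the sign $W_t$ is an independent Rademacher, so $W_t \cdot \ell(g_\star(X_t), Y_t)$ and $-W_t \cdot \ell(g_\star(X_t), Y_t)$ are equal in distribution. Since $\tanh(s_\star \cdot (\cdot))$ is an odd function, the payoff $f_\star^{\mathrm{r}}(X_t, Y_t, W_t)$ is itself symmetric about zero, hence has mean zero given $\calF_{t-1}$. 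Combined with $f_\star^{\mathrm{r}} \in [-1,1]$ and the ONS betting fractions lying in $[0, 1/2]$ (so that $1 + \lambda_t f_\star^{\mathrm{r}} > 0$), the wealth $(\calK_t^{\mathrm{r},\star})_{t\geq 0}$ is a nonnegative martingale starting at $1$ under any null distribution, and Ville's inequality gives $\Prob_{H_0}(\tau < \infty) = \Prob_{H_0}(\exists t: \calK_t^{\mathrm{r},\star} \geq 1/\alpha) \leq \alpha$. This part is essentially a restatement of the argument already given in the text just before the theorem.

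For the consistency claim, the key reduction is that it suffices to show $\Exp{}{f_\star^{\mathrm{r}}(X,Y,W)} > 0$; then, by the standard ONS regret argument (as in the proof of Theorem~\ref{thm:oracle_test}, invoking the law of large numbers for the martingale difference part and the logarithmic regret bound of~\citet{hazan2007log_regret} / \citet{cutkosky2018bb_reductions}), the growth rate $\liminf_{t\to\infty} \tfrac1t \log \calK_t^{\mathrm{r},\star}$ is almost surely bounded below by a positive constant, so $\calK_t^{\mathrm{r},\star} \to \infty$ a.s.\ and $\tau < \infty$ a.s. So the real content is the inequality $\Exp{}{\tanh(s_\star W \ell(g_\star(X),Y))} > 0$ given $\mu_\star = \Exp{}{W\ell(g_\star(X),Y)} > 0$. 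I would prove this by the elementary bound $\tanh(u) \geq u - u^3/3$ for $u \geq 0$ together with $\tanh(u) \geq u - |u|^3/3$ holding for all $u$ by oddness (or more carefully, using $\tanh(u) \geq u - u^3/3$ for $u\ge 0$ and $\tanh(u) \le u$, i.e.\ $\tanh$ lies between $u - u^3/3$ and $u$ near zero with the right signs). Writing $A = \ell(g_\star(X),Y) \geq 0$, we get
\begin{equation*}
\Exp{}{\tanh(s_\star W A)} \geq \Exp{}{s_\star W A - \tfrac{1}{3} s_\star^3 |W A|^3 \cdot \indicator{WA \geq 0}} = s_\star \mu_\star - \tfrac{1}{3} s_\star^3 \Exp{}{\indicator{W=+1} A^3},
\end{equation*}
where I used $|W| = 1$ and that the cubic correction term only enters where $WA \geq 0$, i.e.\ where $W = +1$ (since $A \geq 0$); note $\Exp{}{(1+W)A^3} = 2\Exp{}{\indicator{W=+1}A^3} = \nu_\star$, so the right side equals $s_\star \mu_\star - \tfrac{1}{6} s_\star^3 \nu_\star$. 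Plugging in $s_\star = \sqrt{2\mu_\star/\nu_\star}$ from~\eqref{eq:tanh_oracle_norm_constant} makes this equal to $s_\star(\mu_\star - \tfrac{1}{6} s_\star^2 \nu_\star) = s_\star(\mu_\star - \tfrac{1}{3}\mu_\star) = \tfrac{2}{3} s_\star \mu_\star > 0$, using $\mu_\star > 0$ and $\nu_\star > 0$ bounded away from zero. This is the crux.

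The main obstacle I anticipate is getting the Taylor-type inequality for $\tanh$ exactly right over the relevant range and being careful about which sign of $W$ contributes the cubic slack — the inequality $\tanh(u) \ge u - u^3/3$ is what one wants for $u \ge 0$, but one must check it actually holds for all $u \ge 0$ (it does: the function $u \mapsto \tanh(u) - u + u^3/3$ has nonnegative derivative $1 - \operatorname{sech}^2 u + u^2 = \tanh^2 u + u^2 \wait$, actually $\tfrac{d}{du}(\tanh u - u + u^3/3) = \operatorname{sech}^2 u - 1 + u^2 = u^2 - \tanh^2 u \geq 0$ since $|\tanh u| \leq |u|$, and it vanishes at $0$), and similarly one needs $\tanh(u) \le u$ for $u \ge 0$ to handle the $W = -1$ term without losing a sign. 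A secondary point requiring care is confirming the ONS/LLN machinery transfers verbatim from the proof of Theorem~\ref{thm:oracle_test} — this should be immediate since the payoffs are again bounded in $[-1,1]$ with the same betting scheme, but I would state it explicitly rather than merely assert "analogous to." The boundedness assumptions on $\calX, \calY$ and the assumption that $\nu_\star$ is bounded away from zero are exactly what guarantee $s_\star$ is a well-defined finite positive number, so I would note where each is used.
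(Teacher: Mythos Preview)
Your proposal is correct and follows essentially the same approach as the paper: the type I error part uses the symmetry of $W_t\ell(g_\star(X_t),Y_t)$ under $H_0$ plus oddness of $\tanh$ to get a nonnegative martingale and apply Ville, and the consistency part reduces via the ONS regret bound and the SLLN to showing $\Exp{}{\tanh(s_\star W\ell(g_\star(X),Y))}>0$, which the paper establishes with the identical inequality $\tanh(x)\ge x-\tfrac{1}{3}\max\{x^3,0\}$ (your case split into $u\ge 0$ and $u<0$ is exactly this), the rewriting $\max\{W,0\}=(1+W)/2$, and the same plug-in of $s_\star=\sqrt{2\mu_\star/\nu_\star}$ to obtain the lower bound $\tfrac{2}{3}s_\star\mu_\star>0$.
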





\paragraph{Practical Proxy Sequential Regression-based IT.} To construct a practical test, we use a sequence of predictors $(g_t)_{t\geq 1}$ that are updated sequentially as more data are observed. We write $\calA_\mathrm{r}: \roundbrack{\cup_{t\geq 1} (\calX \times \calY)^t}\times \calG \to \calG$ to denote a chosen regressor learning algorithm which maps a training dataset of any size and previously used predictor, to an updated predictor. We start with $\calD_{0}=\emptyset$ and some initial guess $g_1\in\calG$. At round $t$, we use the payoff function:
\begin{align}
    f^{\mathrm{r}}_t(X_t,Y_t,W_t) &= \tanh\roundbrack{s_t\cdot W_t\cdot  \ell(g_t(X_t),Y_t)}.\label{eq:payoff_reg}
\end{align}
where a sequence of predictable scaling factors $(s_t)_{t\geq 1}$ is defined as follows: we set $s_0=0$ and define:
\begin{subequations}
\begin{align}
s_t &=  \sqrt{\frac{2\mu_t}{\nu_t}},\label{eq:tanh_pred_norm_constant}\\
\text{where} \qquad \mu_t &= \roundbrack{\frac{1}{t-1}\sum_{i=1}^{t-1}W_i\cdot \ell(g_i(X_i),Y_i)}\vee 0, \label{eq:tanh_pred_norm_constant_numerator}\\
\nu_t &= \frac{1}{t-1}\sum_{i=1}^{t-1} (1+W_i)\cdot \roundbrack{\ell(g_i(X_i),Y_i)}^3. \label{eq:tanh_pred_norm_constant_denom}
\end{align}
\end{subequations}
After $\roundbrack{X_{t},Y_t,W_{t}}$ has been used for betting, we update a training dataset: $\calD_{t} = \calD_{t-1} \cup \curlybrack{(X_{t},Y_t,W_t)}$, and an existing predictor: $g_{t+1} = \calA_{\mathrm{r}}(\calD_{t}, g_{t})$. We summarize this practical sequential 2ST in Algorithm~\ref{alg:reg_spit_ps}.

\begin{algorithm}[!htb]
\caption{Proxy Sequential Regression-based IT}\label{alg:reg_spit_ps}
\begin{algorithmic}
\State \textbf{Input:} significance level $\alpha\in (0,1)$, data stream $\roundbrack{(X_t,Y_t,W_t)}_{t\geq 1}$, $g_1(z) \equiv 0$, $\calA_{\mathrm{r}}$, $\calD_{0}=\emptyset$, $\lambda_1^{\mathrm{ONS}}=0$, $s_1=0$.
\For {$t=1,2,\dots$}
\State Evaluate the payoff $f_t^{\mathrm{r}}(X_t,Y_t,W_t)$ as in~\eqref{eq:payoff_reg};
\State Using $\lambda_t^{\mathrm{ONS}}$, update the wealth process $\calK_t^\mathrm{r}$ as in~\eqref{eq:wealth_update};
\If {$\calK_t^\mathrm{r}\geq 1/\alpha$}
\State Reject $H_0$ and stop;
\Else
\State Update the training dataset: $\calD_t:=\calD_{t-1}\cup \curlybrack{(X_t,Y_t)}$;
\State Update predictor: $g_{t+1} = \calA_{\mathrm{r}}(\calD_t,g_{t})$;
\State Compute $s_{t+1}$ as in~\eqref{eq:tanh_pred_norm_constant};
\State Compute $\lambda_{t+1}^{\mathrm{ONS}}$ (Algorithm~\ref{alg:ons_bet}) using $f_t^{\mathrm{r}}(X_t,Y_t,W_t)$;
\EndIf
\EndFor
\end{algorithmic}
\end{algorithm}

For simplicity, we consider a class of functions $\calG:=\curlybrack{g_\theta:\calX\to\calY, \ \theta\in\Theta}$ for some parameter set $\Theta$ which we assume to be a subset of a metric space. In this case, a sequence of predictors $(g_t)_{t\geq 1}$ is associated with the corresponding sequence of parameters $(\theta_t)_{t\geq 1}$: for $t\geq 1$, $g_t(\cdot) = g(\cdot;\theta_t)$ for some $\theta_t\in\Theta$. To argue about the consistency of the resulting test, we make two assumptions. 
\begin{assumption}[Smoothness]\label{assump:smoothness}
We assume that:
\begin{itemize}
    \item Predictors in $\calG$ are $L_1$-Lipschitz smooth:
    \begin{equation}\label{eq:lip_predictor}
        \sup_{x\in\calX}\abs{g(x;\theta)-g(x;\theta')}\leq L_1\norm{}{\theta-\theta'}, \quad \forall \theta,\theta'\in\Theta.
    \end{equation}
    \item The loss function $\ell$ is $L_2$-Lipschitz smooth:
    \begin{equation}\label{eq:loss_predictor}
        \sup_{\substack{x\in\calX \\ y\in\calY}}\abs{\ell(g(x;\theta),y)-\ell(g(x;\theta'),y)}\leq L_2\sup_{x\in\calX}\abs{g(x;\theta)-g(x;\theta')}, \quad \forall \theta,\theta'\in\Theta.
    \end{equation}  
\end{itemize}
\end{assumption}
\noindent In words, Assumption~\eqref{eq:lip_predictor} states that the outputs of predictors, whose parameters are close, will also be close. Assumption~\eqref{eq:loss_predictor} states that that the losses of two predictors, whose outputs are close, will also be close. For example, if $\calG$ is a class of linear predictors: $g_\theta(x) = \theta^\top x$, $x\in\calX$, then Assumption~\ref{assump:smoothness} will be trivially satisfied for the squared and the absolute losses if $\calX$ and $\calY$ are bounded. Note that we do not need an explicit knowledge of $L_1$ or $L_2$ for designing a test. Second, we make a \emph{learnability} assumption about algorithm $\calA_{\mathrm{r}}$. 
\begin{assumption}[Learnability]\label{assump:reg_learnability}
Suppose that $H_1$ in~\eqref{eq:it_reg_alt} is true. We assume that the regressor learning algorithm $\calA_\mathrm{r}$ is such that for the resulting sequence of parameters $(\theta_t)_{t\geq 1}$, it holds that $\theta_t\overset{\mathrm{a.s.}}{\to}\theta_\star$, where $\theta_\star$ is a random variable taking values in $\Theta$ and $\Exp{}{W\ell(g(X;\theta_\star),Y)\mid \theta_\star}\overset{\mathrm{a.s.}}{>}0$, where $(X,Y,W)\indep \theta_\star$.
\end{assumption}

\noindent We conclude with the following result for the practical proxy sequential regression-based IT, whose proof is deferred to Appendix~\ref{appsubsec:reg_spit_proofs}.

\begin{restatable}{theorem}{thmregbasedtest}\label{thm:reg_based_test}
The following claims hold for the proxy sequential regression-based IT (Algorithm~\ref{alg:reg_spit_ps}):
\begin{enumerate}
    \item Suppose that $H_0$ in~\eqref{eq:it_reg_null} is true. Then the test ever stops with probability at most $\alpha$: $\Prob_{H_0}\roundbrack{\tau<\infty}\leq \alpha$.
\item Suppose that $H_1$ in~\eqref{eq:it_reg_alt} is true. Further, suppose that Assumptions~\ref{assump:smoothness} and ~\ref{assump:reg_learnability} are satisfied. Then the test is consistent: $\Prob_{H_1}\roundbrack{\tau<\infty} =1$.
\end{enumerate}
\end{restatable}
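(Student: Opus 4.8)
The plan is to handle the two claims separately, mirroring the structure of the proof of Theorem~\ref{thm:oracle_test} but accounting for the fact that both the predictors $(g_t)_{t\geq 1}$ and the scaling factors $(s_t)_{t\geq 1}$ are now data-dependent. For the type I error control (claim 1), the key observation is that under $H_0$ in~\eqref{eq:it_reg_null} the quantity $s_t \cdot W_t \cdot \ell(g_t(X_t),Y_t)$ is, conditionally on $\calF_{t-1}$, symmetric around zero: indeed $g_t$ and $s_t$ are $\calF_{t-1}$-measurable, $W_t$ is an independent Rademacher, and under the null the conditional law of $(X_t,Y_t)$ given $W_t=+1$ equals that given $W_t=-1$, so flipping the sign of $W_t$ leaves the joint law unchanged. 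Since $\tanh$ is odd, $\Exp{}{f_t^{\mathrm{r}}(X_t,Y_t,W_t)\mid \calF_{t-1}}=0$, hence $(\calK_t^{\mathrm{r}})_{t\geq 0}$ is a nonnegative martingale with $\calK_0^{\mathrm{r}}=1$, and Ville's inequality gives $\Prob_{H_0}(\exists t: \calK_t^{\mathrm{r}}\geq 1/\alpha)\leq\alpha$, which is exactly~\eqref{eq:validity_stop_time}. Note boundedness of the payoff in $[-1,1]$ and $\lambda_t\in[0,1/2]$ ensure the wealth stays nonnegative and integrable.

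For consistency (claim 2), the strategy is to show that the realized betting fractions from the ONS strategy achieve, up to a vanishing regret term, the performance of the best constant-in-hindsight $\lambda$, and then show that this best-constant log-wealth growth rate is strictly positive. The first ingredient is the standard ONS regret guarantee (as used in~\citet{cutkosky2018bb_reductions} and in the proof of Theorem~\ref{thm:oracle_test}): $\frac{1}{t}\log\calK_t^{\mathrm{r}} \geq \sup_{\lambda\in[-1/2,1/2]} \frac1t\sum_{i=1}^t \log(1+\lambda f_i^{\mathrm{r}}) - o(1)$ almost surely. The second ingredient requires analyzing $\frac1t\sum_{i=1}^t \log(1+\lambda f_i^{\mathrm{r}})$ for a fixed small $\lambda>0$ via a second-order Taylor expansion, so I need a law of large numbers for the (non-i.i.d., martingale-difference-like) averages $\frac1t\sum_i f_i^{\mathrm{r}}$ and $\frac1t\sum_i (f_i^{\mathrm{r}})^2$. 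Here Assumption~\ref{assump:smoothness} and Assumption~\ref{assump:reg_learnability} enter: since $\theta_t\to\theta_\star$ a.s., Lipschitz smoothness of $g$ in $\theta$ and of $\ell$ forces $\ell(g_t(X_t),Y_t) - \ell(g(X_t;\theta_\star),Y_t)\to 0$ a.s. (uniformly in the data argument), and hence $s_t\to s_\star$ where $s_\star = \sqrt{2\mu_\star/\nu_\star}$ with $\mu_\star,\nu_\star$ the limiting averages — these limits exist and are the conditional expectations given $\theta_\star$ by a SLLN for the tail-equivalent i.i.d.\ sequence $(\ell(g(X_i;\theta_\star),Y_i))_i$ (using $\nu_\star$ bounded away from $0$ and $\mu_\star>0$ from the assumption). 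Therefore $f_i^{\mathrm{r}}$ is asymptotically equal to $\tanh(s_\star W_i \ell(g(X_i;\theta_\star),Y_i))$, whose conditional mean given $\theta_\star$ I must show is strictly positive.

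The positivity of $\Exp{}{\tanh(s_\star W\ell(g(X;\theta_\star),Y))\mid\theta_\star}$ is the crux, and the choice~\eqref{eq:tanh_oracle_norm_constant} of $s_\star$ is engineered precisely for this. Conditioning on $\theta_\star$ and writing $L^{+} $ and $L^{-}$ for the loss $\ell(g(X;\theta_\star),Y)$ under the $W=+1$ (product) and $W=-1$ (joint) branches, the conditional mean is $\frac12\Exp{}{\tanh(s_\star L^{+}) - \tanh(s_\star L^{-})}$. Using the elementary inequality $\tanh(u)\geq u - u^3/3$ for $u\geq 0$ and $\tanh(u)\geq u$ for... — more carefully, using $\tanh(u) = u - u^3/3 + O(u^5)$ and the bound $|\tanh(u)-u|\leq |u|^3/3$ valid for all $u\in\Real$, one gets $\Exp{}{\tanh(s_\star L^{+})-\tanh(s_\star L^{-})} \geq s_\star\Exp{}{L^{+}-L^{-}} - \frac{s_\star^3}{3}\Exp{}{|L^{+}|^3 + |L^{-}|^3} = s_\star\cdot 2\mu_\star - \frac{s_\star^3}{3}\nu_\star$ (here $\Exp{}{L^{+}-L^{-}\mid\theta_\star} = \Exp{}{W\ell(g(X;\theta_\star),Y)\mid\theta_\star}\cdot(-2)$... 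I will need to be careful with the sign convention, matching $\mu_\star = \Exp{}{W\ell(g_\star(X),Y)}$ so that the numerator is $2\mu_\star>0$, and $\nu_\star = \Exp{}{(1+W)(\ell)^3}$ which upper-bounds the relevant third-moment term). Plugging $s_\star^2 = 2\mu_\star/\nu_\star$ makes this lower bound equal $s_\star\mu_\star(2 - 2/3) = \frac{4}{3}s_\star\mu_\star > 0$. Thus the limiting payoff has strictly positive conditional mean, so the best-constant log-wealth is bounded below by a positive constant (conditionally on $\theta_\star$, then integrate / use that it holds a.s.), the ONS wealth grows at least linearly in $\log$-scale, $\calK_t^{\mathrm{r}}\to\infty$ a.s., and $\tau<\infty$ a.s. The main obstacle I anticipate is the careful bookkeeping in the LLN step: establishing that the data-dependent drift in $(g_t)$ and $(s_t)$ is asymptotically negligible so that the averages of $f_i^{\mathrm{r}}$, $(f_i^{\mathrm{r}})^2$ converge to the right limits — this is where Assumption~\ref{assump:smoothness} (to transfer $\theta_t\to\theta_\star$ into loss convergence uniformly over the bounded domains $\calX,\calY$) and Assumption~\ref{assump:reg_learnability} (to get the limiting positivity and a.s.\ convergence of $\theta_t$) are both essential, and one must handle the coupling between $\theta_\star$ and the tail of the data stream carefully via the stated independence $(X,Y,W)\indep\theta_\star$.
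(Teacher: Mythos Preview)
Your Part 1 is correct and matches the paper's argument exactly: conditional symmetry under $H_0$, oddness of $\tanh$, martingale property, Ville. Your Part 2 has the right skeleton (ONS regret, SLLN via bounded martingale differences, Lipschitz transfer of $\theta_t\to\theta_\star$ into loss convergence, then a cubic lower bound on $\tanh$), and the paper does essentially the same. But there is a genuine gap in the $\tanh$ step.

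You invoke the two-sided bound $|\tanh(u)-u|\leq |u|^3/3$, which after splitting on $W=\pm 1$ gives
\[
\Exp{}{\tanh(s W\ell)\mid \theta_\star}\ \geq\ s\,\mu_\star \;-\; \tfrac{s^3}{6}\bigl(\Exp{}{(L^+)^3}+\Exp{}{(L^-)^3}\bigr),
\]
where $L^+,L^-$ are the losses on the $W=+1$ (product) and $W=-1$ (joint) branches. You then assert that $\nu_\star=\Exp{}{(1+W)\ell^3}$ ``upper-bounds the relevant third-moment term.'' It does not: $\nu_\star=\Exp{}{(L^+)^3}$ only, so your cubic term is $\nu_\star+\Exp{}{(L^-)^3}$, strictly larger. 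Plugging the algorithm's scaling $s_\star^2=2\mu_\star/\nu_\star$ into your inequality therefore does \emph{not} yield a guaranteed positive lower bound; it gives positivity only when $\Exp{}{(L^-)^3}<2\nu_\star$, which is not assumed anywhere.

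The paper avoids this by using the \emph{one-sided} inequality
\[
\tanh(x)\ \geq\ x-\tfrac{1}{3}\max\{x^3,0\},\qquad x\in\Real,
\]
which incurs a cubic penalty only when $x>0$, i.e., only on the $W=+1$ branch. Since $\max\{W,0\}=(1+W)/2$, this produces exactly the term $\tfrac{s^3}{6}(1+W)\ell^3$, whose expectation is $\tfrac{s^3}{6}\nu_\star$, and then the algorithm's choice $s_\star^2=2\mu_\star/\nu_\star$ gives the clean lower bound $\tfrac{2}{3}s_\star\mu_\star>0$. In other words, the asymmetric definition of $\nu_\star$ in~\eqref{eq:tanh_oracle_norm_constant_denominator} is not incidental: it is engineered to match the one-sided $\tanh$ bound, and your two-sided bound breaks that matching. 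Replace your inequality with the one-sided version and the rest of your outline goes through; the paper also works directly with $\liminf \tfrac1t\sum f_i^{\mathrm{r}}>0$ (lower-bounding $\tanh$ \emph{before} taking averages) rather than first passing to the limit payoff, which sidesteps the need to control $\tfrac1t\sum (f_i^{\mathrm{r}})^2$ separately.
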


\paragraph{Sequential Regression-based Independence Test (Seq-R-IT).} Next, we instantiate this test for the sequential independence testing setting (as per Definition~\ref{def:seq_it}) where we observe sequence $((X_t,Y_t))_{t\geq 1}$, where $(X_t,Y_t)\simiid P_{XY}$, $t\geq 1$. Analogous to Section~\ref{sec:it}, we bet on the outcome of two observations drawn from the joint distribution $P_{XY}$. To proceed, we derandomize the payoff function~\eqref{eq:payoff_reg} and consider
\begin{equation}\label{eq:reg_payoff_derand}
\begin{aligned}
    f_t^{\mathrm{r}}((\markred{X_{2t-1}},\markred{Y_{2t-1}}), (\markblue{X_{2t}},\markblue{Y_{2t}})) &=\frac{1}{4} \roundbrack{\tanh\roundbrack{s_t\cdot \ell\roundbrack{g_t({\markred{X_{2t-1}}}),\markblue{Y_{2t}}}}+\tanh\roundbrack{s_t\cdot \ell\roundbrack{g_t({\markblue{X_{2t}}}),\markred{Y_{2t-1}}}}}\\
    &-\frac{1}{4}\roundbrack{\tanh\roundbrack{s_t\cdot \ell\roundbrack{g_t({\markblue{X_{2t}}}),\markblue{Y_{2t}}}}-\tanh\roundbrack{s_t\cdot \ell\roundbrack{g_t({\markred{X_{2t-1}}}),\markred{Y_{2t-1}}}}}.
\end{aligned}
\end{equation}
After betting on the outcome of the $t$-th pair of observations from $P_{XY}$, we update a training dataset: 
\begin{equation*}
    \calD_{t} = \calD_{t-1} \cup \curlybrack{({\markred{X_{2t-1}}},\markred{Y_{2t-1}}),({\markblue{X_{2t}}},\markblue{Y_{2t}})},
\end{equation*}
and a predictive model: $\hat{g}_{t+1} = \calA_{\mathrm{r}}(\calD_{t},\hat{g}_{t})$. 

\subsection{Synthetic Experiments} \label{subsec:reg_synt_exps}
To evaluate the performance of Seq-R-IT, we consider the \emph{Gaussian linear model}. Let $(X_t)_{t\geq 1}$ and $(\varepsilon_t)_{t\geq 1}$ denote two independent sequences of i.i.d.\ standard Gaussian random variables. For $t\geq 1$, we take
\begin{equation*}
        (X_t,Y_t) = (X_t, X_t\beta+\varepsilon_t),
    \end{equation*}
where $\beta\neq 0$ implies nonzero linear correlation (hence dependence). We consider 20 values of $\beta$ equally spaced in $[0,1/2]$. For the comparison, we use:
\begin{enumerate}
    \item \emph{Seq-R-IT with ridge regression.} We use ridge regression as an underlying model: $\hat{g}_t(x) = \beta_0^{(t)}+x \beta_1^{(t)}$, where
\begin{equation*}
    (\beta_0^{(t)},\beta_1^{(t)}) = \argmin_{\beta_0,\beta_1} \sum_{i=1}^{2(t-1)} \roundbrack{Y_i-X_{i}\beta_1-\beta_0}^2 + \lambda\beta_1^2.
\end{equation*}
\item \emph{Seq-C-IT with QDA.} Note that $P_{XY} = \mathcal{N}(\mu,\Sigma^+)$ and $P_{X}\times P_{Y} = \mathcal{N}(\mu,\Sigma^-)$, where
\begin{equation*}
    \mu = \begin{pmatrix} 0 \\ 0
    \end{pmatrix}, \quad \Sigma^+ = \begin{pmatrix} 1 & \beta \\ \beta & 1+\beta^2
    \end{pmatrix}, \quad \Sigma^- = \begin{pmatrix} 1 & 0 \\ 0 & 1+\beta^2
    \end{pmatrix}.
\end{equation*}
For this problem, an oracle predictor which minimizes the misclassification risk is
\begin{equation}\label{eq:oracle_qda}
    g^\star(x,y) = \frac{\varphi((x,y);\mu^{+},\Sigma^{+})-\varphi((x,y);\mu^{-},\Sigma^{-})}{\varphi((x,y);\mu^{-},\Sigma^{-})+\varphi((x,y);\mu^{+},\Sigma^{+})}\in [-1,1],
\end{equation}
where $\varphi((x,y);\mu,\Sigma)$ denotes the density of the Gaussian distribution $\mathcal{N}(\mu,\Sigma)$ evaluated at $(x,y)$. Recall that $\calD_{t-1}=\{(Z_i,+1)\}_{i\leq 2(t-1)}\cup \{(Z'_i,-1)\}_{i\leq 2(t-1)}$ denotes the training dataset that is available at round $t$ for training a predictor $\hat{g}_t:\calX\times\calY\to[-1,1]$. We deploy Seq-C-IT with an estimator $\hat{g}_t$ of~\eqref{eq:oracle_qda}, obtained by using plug-in estimates of $\mu^{+},\Sigma^{+},\mu^{-},\Sigma^{-}$, computed from $\calD_{t-1}$:
\begin{equation*}
\begin{aligned}
    \hat{\mu}_{t}^+ = \frac{1}{2(t-1)} \sum_{Z\in \calD_{t-1}^+}Z, \qquad
    \hat{\Sigma}_{t}^+ = \roundbrack{\frac{1}{2(t-1)} \sum_{Z\in \calD_{t-1}^+} ZZ^\top} - (\hat{\mu}_{t}^+)(\hat{\mu}_{t}^+)^\top,
\end{aligned}
\end{equation*}
and $\hat{\mu}_{t}^-$, $\hat{\Sigma}_{t}^-$ are computed similarly from $\calD_{t}^-$.
\end{enumerate}
In addition, we also include HSIC-based SKIT to the comparison and defer the details regarding kernel hyperparameters to Appendix~\ref{appsubsec:model_details}. We set the monitoring horizon to $T=5000$ points from $P_{XY}$ and aggregate the results over 200 sequences of observations for each value of $\beta$. We illustrate the result in Figure~\ref{fig:gaus_comparison}: while Seq-R-IT has high power for large values of $\beta$, we observe its inferior performance against Seq-C-IT (and SKIT) under the harder settings. Improving regression-based betting strategies, e.g., designing better scaling factors that still yield a provably consistent test, is an open question for future research.

\begin{figure}[!htb]
\centering
        \begin{subfigure}[b]{0.425\linewidth}
            \centering
            \includegraphics[width=\linewidth]{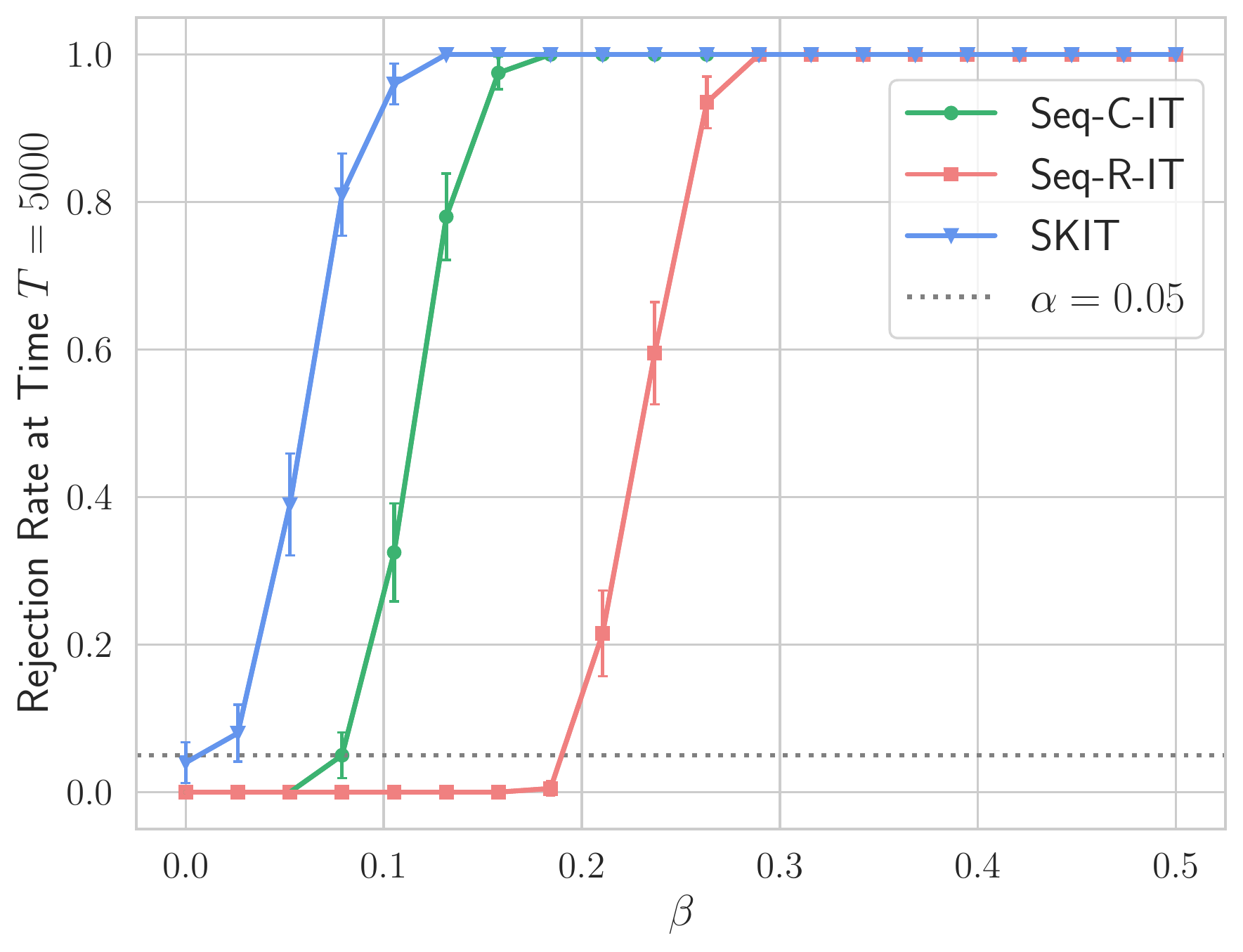}
            \caption{}
            \label{subfig:gaus_comparison_power}
        \end{subfigure}\hfill
        \begin{subfigure}[b]{0.425\linewidth}
            \centering
            \includegraphics[width=\linewidth]{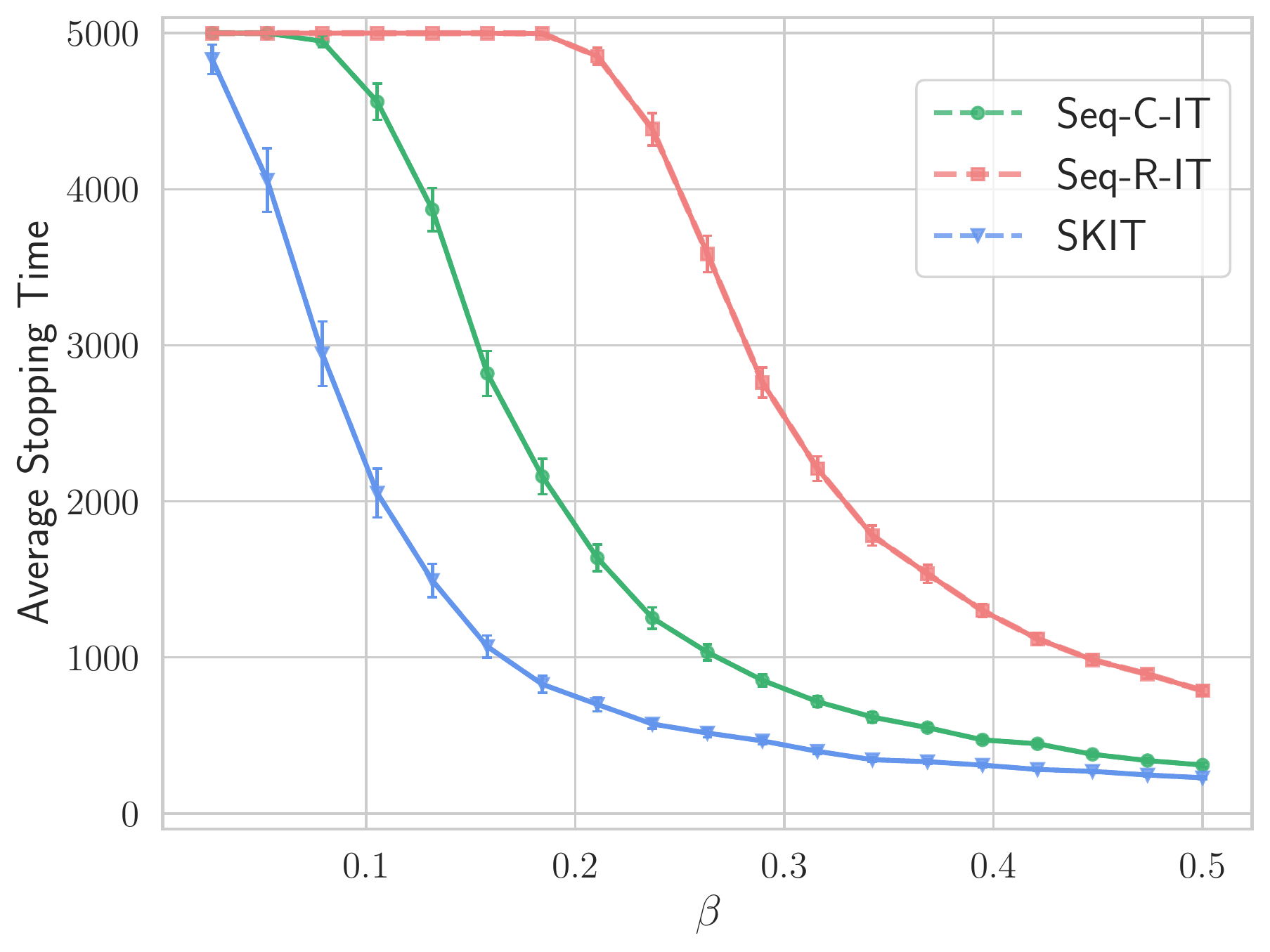}
            \caption{}
            \label{subfig:gaus_comparison_stop_time}
        \end{subfigure}
        \caption{Comparison between Seq-R-IT, Seq-C-IT and HSIC-based SKIT under the Gaussian linear model. Inspecting Figure~\ref{subfig:gaus_comparison_power} at $\beta=0$ confirms that all tests control the type I error. Non-surprisingly, kernel-based SKIT performs better than predictive tests under this model (no localized dependence). We also observe that Seq-C-IT performs better than Seq-R-IT.}
        \label{fig:gaus_comparison}
\end{figure}

\section{Two-sample Testing with Unbalanced Classes}\label{subsec:tst_unbalanced_classes}

In Section~\ref{sec:clas_spit}, we developed a sequential 2ST under the assumption at each round, an instance from either $P$ or $Q$ is revealed with equal probability. Such assumption was reasonable for designing Seq-C-IT, where external randomization produced two instances from $P_{XY}$ and $P_X\times P_Y$ at each round. Next, we generalize our sequential 2ST to a more general setting of unbalanced classes.

\begin{definition}[Sequential two-sample testing with unbalanced classes]\label{def:game_unequal}
Let $\pi \in (0,1)$. Suppose that we observe a stream of i.i.d.\ observations $((Z_t,W_t))_{t\geq 1}$, where $W_t\sim \mathrm{Rademacher}(\pi)$, the distribution of $Z_t\mid W_t=+1$ is denoted $P$, and that of $Z_t\mid W_t=-1$ is denoted $Q$. We set the goal of designing a sequential test for the following pair of hypotheses:
\begin{subequations}\label{eq:game_unequal}
\begin{align}
    H_0: & \ P=Q, \label{eq:game_unequal_null} \\
    H_1: & \ P\neq Q. \label{eq:game_unequal_alt}
\end{align}
\end{subequations}
\end{definition}
\noindent For what follows, we will focus on the payoff based on the squared risk due to its relationship to the likelihood-ratio-based test (Remark~\ref{rmk:likelihood_ratio}). In particular, after correcting the likelihood under the null in~\eqref{eq:lr_rep_balanced} to account for a general positive class proportion $\pi$, we can deduce that (see Appendix~\ref{appsubsec:proofs_ext}):
\begin{equation}\label{eq:wealth_update_general_pi}
    (1-\lambda_t)\cdot 1+\lambda_t\cdot \frac{\roundbrack{\eta_t(Z_t)}^{\indicator{W_t=1}}\roundbrack{1-\eta_t(Z_t)}^{\indicator{W_t=0}}}{\roundbrack{\pi}^{\indicator{W_t=1}}\roundbrack{1-\pi}^{\indicator{W_t=0}}} = 1+ \lambda_t\cdot\frac{W_t\roundbrack{g_t(Z_t)-(2\pi-1)}}{1+W_t(2\pi-1)}, 
\end{equation}
where $\eta_t(z) = (g_t(z)+1)/2$, and hence, a natural payoff function for the case with unbalanced classes is 
\begin{equation}\label{eq:payoff_unequal_oracle}
    f_{t}^{\mathrm{u}}(Z_t,W_t) = \frac{W_t\roundbrack{g_t(Z_t)-(2\pi-1)}}{1+W_t(2\pi-1)}.
\end{equation}
Note that the payoff for the balanced case~\eqref{eq:ps_payoff_sq} is recovered by setting $\pi=1/2$.
It is easy to check that (see Appendix~\ref{appsubsec:proofs_ext}): (a) $f_{t}^{\mathrm{u}}(z,w) \geq -1$ for any $(z,w)\in \calZ\times \curlybrack{-1,1}$, and (b) if $H_0$ in~\eqref{eq:game_unequal_null} is true, then $\Exp{H_0}{f_{t}^{\mathrm{u}}(Z_t,W_t)\mid \calF_{t-1}}=0$, where $\calF_{t-1} = \sigma(\curlybrack{(Z_i, W_i)}_{i\leq t-1})$. This in turn implies that a wealth process that relies on the payoff function $f^{\mathrm{u}}_{t}$ in~\eqref{eq:payoff_unequal_oracle} is a nonnegative martingale, and hence, the corresponding sequential 2ST is valid. However, the positive class proportion $\pi$, needed to use the payoff function~\eqref{eq:payoff_unequal_oracle}, is generally unknown beforehand. First, let us consider the case when $\lambda_t=1$, $t\geq 1$. In this case, the wealth of a gambler that uses the payoff function~\eqref{eq:payoff_unequal_oracle} after round $t$ is
\begin{equation}\label{eq:lr_repr}
    \calK_t = \frac{\prod_{i=1}^t \roundbrack{\eta_i(Z_i)}^{\indicator{W_i=1}}\roundbrack{1-\eta_i(Z_i)}^{\indicator{W_i=0}}}{\prod_{i=1}^t \pi^{\indicator{W_i=1}}\roundbrack{1-\pi}^{\indicator{W_i=0}}}.
\end{equation}
Note that:
\begin{equation*}
    \hat{\pi}_t := \frac{1}{t}\sum_{i=1}^t \indicator{W_t=1} = \argmax_{\pi\in [0,1]}  \roundbrack{\prod_{i=1}^t \pi^{\indicator{W_i=1}}\roundbrack{1-\pi}^{\indicator{W_i=0}}},
\end{equation*}
is the MLE for $\pi$ computed from $\curlybrack{W_i}_{i\leq t}$. In particular, if we consider a process $(\tilde{\calK}_t)_{t\geq 0}$, where
\begin{equation*}
    \tilde{\calK}_t := \frac{\prod_{i=1}^t \roundbrack{\eta_i(Z_i)}^{\indicator{W_i=1}}\roundbrack{1-\eta_i(Z_i)}^{\indicator{W_i=0}}}{\prod_{i=1}^t \roundbrack{\hat{\pi}_t}^{\indicator{W_i=1}}\roundbrack{1-\hat{\pi}_t}^{\indicator{W_i=0}}}, \quad t\geq 1,
\end{equation*}
it follows that $\tilde{\calK}_t \leq \calK_t$, $\forall t\geq 1$, meaning that $(\tilde{\calK}_t)_{t\geq 0}$ is a process that is upper bounded by a nonnegative martingale with initial value one. This in turn implies that a test based on $(\tilde{\calK}_t)_{t\geq 0}$ is a valid level-$\alpha$ sequential 2ST for the case of unknown class proportions. This idea underlies the running MLE sequential likelihood ratio test of~\citet{wasserman2020univ_inf} and has been recently considered in the context of two-sample testing by~\citet{pandeva2022evaluating}. In case of nontrivial betting fractions: $(\lambda_t)_{t\geq 1}$, representation of the wealth process~\eqref{eq:lr_repr} no longer holds, and to proceed, we modify the rules of the game and use minibatching. A bet is placed on every $b$ (say, 5 or 10) observations, meaning that for a given minibatch size $b\geq 1$, at round $t$ we bet on $\{(Z_{b(t-1)+i},W_{b(t-1)+i})\}_{i\in\{1,\dots, b\}}$. The MLE of $\pi$ computed from the $t$-th minibatch is
\begin{equation*}
    \hat{\pi}_{t} = \frac{1}{b}\sum_{i=b(t-1)+1}^{bt} \indicator{W_{i}=+1}.
\end{equation*}
We consider a payoff function of the following form:
\begin{equation}\label{eq:payoff_minibatch}
    f_{t}^{\mathrm{u}}\roundbrack{\curlybrack{(Z_{b(t-1)+i},W_{b(t-1)+i})}_{i\in\{1,\dots, b\}}} =  \prod_{i=b(t-1)+1}^{bt}\roundbrack{\frac{1+W_{i}g_{t}(Z_{i})}{1+W_{i}(2\hat{\pi}_{t}-1)}}-1.
\end{equation}
In words, the above payoff essentially compares the performance of a predictor $g_{t}$, trained on $\curlybrack{(Z_{i},W_{i})}_{i\leq b(t-1)}$ and evaluated on the $t$-th minibatch, to that of a trivial baseline predictor to form a bet. In particular, setting $b=1$ yields a valid, yet a powerless test. Indeed, we have $\hat{\pi}_{t}=\indicator{W_{t}=1} = (W_{t}+1)/2$. In this case, the payoff~\eqref{eq:payoff_minibatch} reduces to
\begin{equation*}
    \frac{W_{t}\roundbrack{g_{t}(Z_{t})-(2\hat{\pi}_{t}-1)}}{1+W_{t}(2\hat{\pi}_{t}-1)} = \frac{W_{t} g_{t}(Z_{t})-1}{2} \overset{\mathrm{a.s.}}{\in} [-1,0],
\end{equation*}
implying that the wealth can not grow even if the null is false. Define a wealth processes $\roundbrack{\calK^{\mathrm{u}}_{t}}_{t\geq 0}$ based on the payoff functions~\eqref{eq:payoff_minibatch} along with a predictable sequence of betting fractions $\roundbrack{\lambda_{t}}_{t\geq 1}$ selected via ONS strategy (Algorithm~\ref{alg:ons_bet}). Let $\calF_t = \sigma (\curlybrack{(Z_{i},W_{i})}_{i\leq bt})$ for $t\geq 1$, with $\calF_0$ denoting a trivial sigma-algebra. We conclude with the following result, whose proof is deferred to Appendix~\ref{appsubsec:proofs_ext}.

\begin{restatable}{theorem}{thmtestunequal}\label{thm:test_unequal}
Suppose that $H_0$ in~\eqref{eq:game_unequal_null} is true. Then $\roundbrack{\calK^{\mathrm{u}}_{t}}_{t\geq 0}$ is a nonnegative supermartingale adapted to $(\calF_t)_{t\geq 0}$. Hence, the sequential 2ST based on $\roundbrack{\calK^{\mathrm{u}}_{t}}_{t\geq 0}$ satisfies: $\Prob_{H_0}\roundbrack{\tau<\infty}\leq \alpha$.
\end{restatable}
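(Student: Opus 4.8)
The plan is to show that $(\calK^{\mathrm{u}}_t)_{t\geq 0}$ is a nonnegative supermartingale with $\calK^{\mathrm{u}}_0 = 1$, after which the conclusion $\Prob_{H_0}(\tau < \infty) \leq \alpha$ is immediate from Ville's inequality (exactly as in the proof of Theorem~\ref{thm:oracle_test}). Nonnegativity follows from the structure of the wealth update~\eqref{eq:wealth_update}: it suffices to check that the minibatch payoff $f^{\mathrm{u}}_t(\cdot) \geq -1$ pointwise and that the ONS betting fractions lie in $[0,1/2]$, so $1 + \lambda_t f^{\mathrm{u}}_t(\cdot) \geq 1 - \lambda_t \geq 1/2 > 0$. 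The bound $f^{\mathrm{u}}_t \geq -1$ reduces to showing each factor $\frac{1 + W_i g_t(Z_i)}{1 + W_i(2\hat\pi_t - 1)}$ is nonnegative: the numerator lies in $[0,2]$ since $g_t(Z_i) \in [-1,1]$ and $W_i \in \{-1,+1\}$, and the denominator equals $2\hat\pi_t$ when $W_i = +1$ and $2(1-\hat\pi_t)$ when $W_i = -1$, both nonnegative; one should note the degenerate case where $\hat\pi_t \in \{0,1\}$ makes a denominator vanish, but then the corresponding numerator (sharing the same $W_i$ sign) also vanishes for every index in the minibatch, and the convention is that the product is interpreted appropriately (or one restricts attention to minibatches with $0 < \hat\pi_t < 1$, the vanishing-denominator event being handled by continuity / the convention $0/0$ treated as in the MLE plug-in; this is the one place requiring a careful remark).

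The heart of the argument is the supermartingale property, i.e., $\Exp{H_0}{f^{\mathrm{u}}_t(\{(Z_{b(t-1)+i}, W_{b(t-1)+i})\}_i) \mid \calF_{t-1}} \leq 0$ under $P = Q$; given that, $\Exp{H_0}{\calK^{\mathrm{u}}_t \mid \calF_{t-1}} = \calK^{\mathrm{u}}_{t-1}(1 + \lambda_t \Exp{H_0}{f^{\mathrm{u}}_t \mid \calF_{t-1}}) \leq \calK^{\mathrm{u}}_{t-1}$ because $\lambda_t \geq 0$ and $\calK^{\mathrm{u}}_{t-1} \geq 0$. To establish the conditional inequality, I would condition further on the entire $t$-th minibatch of $Z$-values $\{Z_{b(t-1)+i}\}_i$ together with $\calF_{t-1}$, and take the inner expectation only over the labels $\{W_{b(t-1)+i}\}_i$, which under $H_0$ are i.i.d.\ $\mathrm{Rademacher}(\pi)$ and \emph{independent} of the $Z$'s (this is precisely what $P = Q$ buys us, and why the result needs $H_0$). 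Writing $E = \{1,\dots,b\}$ shifted by $b(t-1)$, I want
\begin{equation*}
    \Exp{}{\prod_{i\in E}\frac{1 + W_i g_t(Z_i)}{1 + W_i(2\hat\pi_t - 1)} \,\Big|\, \calF_{t-1}, \{Z_i\}_{i\in E}} \leq 1,
\end{equation*}
where $\hat\pi_t = \frac{1}{b}\sum_{i\in E}\indicator{W_i = +1}$ depends on the $W$'s. The clean way to see this is the likelihood-ratio / change-of-measure viewpoint flagged in the text around~\eqref{eq:lr_repr} and~\eqref{eq:wealth_update_general_pi}: the numerator $\prod_{i\in E}(1 + W_i g_t(Z_i)) = \prod_{i\in E} 2\eta_t(Z_i)^{\indicator{W_i=1}}(1-\eta_t(Z_i))^{\indicator{W_i=-1}}/\,(\text{normalization})$ is, up to the $Z$-dependent factor, the likelihood of the observed label pattern under the model with class-probability function $\eta_t$, while the denominator $\prod_{i\in E}(1 + W_i(2\hat\pi_t-1))$ equals $\prod 2\hat\pi_t^{\indicator{W_i=1}}(1-\hat\pi_t)^{\indicator{W_i=-1}}$, the likelihood under the \emph{empirical} Bernoulli parameter. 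The key observation is that $\hat\pi_t$ is the maximum-likelihood estimator of the Bernoulli parameter from the labels in $E$, so for \emph{any} competing parameter value $\pi'$ (in particular any value consistent with a fixed $\eta_t$ after conditioning on $Z$'s), $\prod 2\pi'^{\indicator{W_i=1}}(1-\pi')^{\indicator{W_i=-1}} \leq \prod 2\hat\pi_t^{\indicator{W_i=1}}(1-\hat\pi_t)^{\indicator{W_i=-1}}$ — but since under $H_0$ the $W$'s are independent of the $Z$'s, $g_t(Z_i)$ is a \emph{constant} (given the conditioning), not a genuine function of the realized labels, so the numerator is exactly the likelihood of the $W$'s under Bernoulli$(\eta_t(Z_i))$ — the ratio integrates to at most $1$ by the standard fact that an expected likelihood ratio (true density in the numerator's law, here Bernoulli$(\pi)$) against any fixed alternative is at most one, combined with the MLE-domination inequality $\tilde{\calK} \leq \calK$ already noted in the text.

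Concretely, I would (i) fix $\calF_{t-1}$ and $\{Z_i\}_{i\in E}$; (ii) observe the labels are then i.i.d.\ Bernoulli$(\pi)$ (under $H_0$); (iii) rewrite the product as $\frac{L_W(\eta_t(Z_\cdot))}{L_W(\hat\pi_t)}$ where $L_W(q) = \prod_{i\in E} (q^{\indicator{W_i=1}}(1-q)^{\indicator{W_i=-1}} / (1/2))$ denotes the appropriately normalized label-likelihood (so $L_W(1/2)=1$), noting that $\eta_t(Z_i)$ genuinely varies with $i$ but is $\{Z_i\}$-measurable hence fixed; (iv) invoke $L_W(\hat\pi_t) \geq L_W(\pi)$ only if needed, but more directly bound $\Exp{}{L_W(\eta_t(Z_\cdot)) / L_W(\hat\pi_t)} \leq \Exp{}{L_W(\eta_t(Z_\cdot))} $ is \emph{false} in general — so instead use the supermartingale-from-MLE argument verbatim: $\calK^{\mathrm{u}}_t \leq \calK^{\text{LR},\star}_t$ where the latter replaces $\hat\pi_t$ by the \emph{true} $\pi$, and $\calK^{\text{LR},\star}_t = \prod \frac{1 + W_i g_t(Z_i)}{1 + W_i(2\pi-1)}$ is an exact martingale by the already-verified fact (from the text, second bullet after~\eqref{eq:payoff_unequal_oracle}) that $\Exp{H_0}{f^{\mathrm{u}}_t(Z_t,W_t) \mid \calF_{t-1}} = 0$ for the $\pi$-version, extended to products over an independent minibatch. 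The main obstacle is bookkeeping around the $\hat\pi_t$-versus-$\pi$ substitution: one must argue the pointwise domination $\calK^{\mathrm{u}}_t \leq (\text{martingale})$ holds \emph{after} incorporating the betting fractions $\lambda_t$, which is subtle because $1 + \lambda_t(\text{product}_{\hat\pi} - 1) \leq 1 + \lambda_t(\text{product}_\pi - 1)$ requires $\text{product}_{\hat\pi} \leq \text{product}_\pi$ pointwise, which is exactly the MLE inequality and holds because $\hat\pi_t$ maximizes the denominator's likelihood; then $\lambda_t \geq 0$ preserves the inequality and the supermartingale property transfers. I would conclude by assembling these into the chain $\Exp{H_0}{\calK^{\mathrm{u}}_t \mid \calF_{t-1}} \leq \Exp{H_0}{\calK^{\text{LR},\star,\lambda}_t \mid \calF_{t-1}} = \calK^{\mathrm{u}}_{t-1}$, then applying Ville.
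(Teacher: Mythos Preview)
Your proposal is correct and ultimately takes the same approach as the paper: both arguments hinge on the pointwise MLE-domination inequality $\prod_i(1+W_i(2\hat\pi_t-1)) \geq \prod_i(1+W_i(2\pi-1))$, which yields $1+\lambda_t f^{\mathrm{u}}_t \leq (1-\lambda_t)+\lambda_t\prod_i\frac{1+W_ig_t(Z_i)}{1+W_i(2\pi-1)}$, after which conditional independence under $H_0$ gives that each factor in the $\pi$-version product has conditional expectation $1$, and Ville's inequality finishes. Your exposition is more exploratory (the initial conditioning-on-$Z$'s detour is unnecessary), and your final displayed chain has a small notational slip---$\Exp{H_0}{\calK^{\text{LR},\star,\lambda}_t\mid\calF_{t-1}}$ equals $\calK^{\mathrm{u}}_{t-1}$ only if you define $\calK^{\text{LR},\star,\lambda}_t := \calK^{\mathrm{u}}_{t-1}\cdot(1+\lambda_t(\text{product}_\pi-1))$, replacing just the \emph{current} step's payoff---but the substance matches the paper's proof.
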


\section{Testing under Distribution Drift}\label{subsec:test_dist_drift}

First, we define the problem of two-sample testing when at each round instances from both distributions are observed.

\begin{definition}[Sequential two-sample testing]\label{def:seq_tst}
Suppose that we observe that a stream of observations: $\roundbrack{(X_t,Y_t)}_{t\geq 1}$, where $(X_t,Y_t)\simiid P_{X}\times P_Y$ for $t\geq 1$. The goal is to design a sequential test for 
\begin{subequations}\label{eq:tst}
\begin{align}
    H_0: & \ (X_t,Y_t)\simiid P_{X}\times P_Y \ \text{and} \ P_{X}= P_Y, \label{eq:tst_null} \\
    H_1: & \ (X_t,Y_t)\simiid P_{X}\times P_Y \ \text{and} \ P_{X}\neq P_Y. \label{eq:tst_alt}
\end{align}
\end{subequations}
\end{definition}

Under the two-sample testing setting (Definition~\ref{def:seq_tst}), we label observations from $P_Y$ as positive ($+1$) and observations from $P_X$ as negative ($-1$). We write $\calA_{\mathrm{c}}^{\mathrm{2ST}}: \roundbrack{\cup_{t\geq 1} (\calX \times \curlybrack{-1,+1})^t}\times \calG \to \calG$ to denote a chosen learning algorithm which maps a training dataset of any size and previously used predictor, to an updated predictor. We start with $\calD_{0}=\emptyset$ and $g_1: g_1(x) = 0$, $ \forall x\in\calX$. At round $t$, we bet using derandomized versions of the payoffs~\eqref{eq:ps_payoffs}, namely
\begin{subequations}\label{eq:derand_payoffs_tst}
\begin{align}
    f_t^{\mathrm{m}}(X_{t},Y_{t}) &= \tfrac{1}{2}\roundbrack{\sign{g_t(Y_{t})}-\sign{g_t(X_{t})}}, \label{eq:payoff_derand_misclas_tst}\\
    f_t^{\mathrm{s}}(X_{t},Y_{t}) &= \tfrac{1}{2}\roundbrack{g_t(Y_{t})-g_t(X_{t})}. \label{eq:payoff_derand_sq_tst}
\end{align}
\end{subequations}
After $\roundbrack{X_{t},Y_{t}}$ has been used for betting, we update a training dataset and an existing predictor:
\begin{equation*}
\begin{aligned}
    \calD_{t} &= \calD_{t-1} \cup \curlybrack{(Y_{t},+1),(X_{t},-1)}, \quad g_{t+1} = \calA_{\mathrm{c}}^{\mathrm{2ST}}(\calD_{t},g_{t}).
\end{aligned}
\end{equation*}

\paragraph{Testing under Distribution Drift.} Batch two-sample and independence tests generally rely on either a cutoff computed using the asymptotic null distribution of a chosen test statistic (if tractable) or a permutation p-value. Both approaches require imposing i.i.d.\ (or exchangeability, for the latter option) assumption about the data distribution, and if the distribution drifts, both approaches fail to guarantee the type I error control. In contrast, Seq-C-2ST and Seq-C-IT remain valid beyond the i.i.d.\ setting by construction (analogous to tests developed in~\citep{shekhar2022one_two_sample,podkopaev2022skit}). First, we define the problems of sequential two-sample and independence testing under distribution drift.

\begin{definition}[Sequential two-sample testing under distribution drift]\label{def:tst_dist_drift}
Suppose that we observe that a stream of independent observations: $\roundbrack{(X_t,Y_t)}_{t\geq 1}$, where $(X_t,Y_t)\sim P_{X}^{(t)}\times P_Y^{(t)}$, $t\geq 1$. The goal is to design a sequential test for the following pair of hypotheses:
\begin{subequations}\label{eq:tst_dist_drift}
\begin{align}
    H_0: & \ P_X^{(t)}= P_Y^{(t)}, \ \forall t, \label{eq:tst_dist_drift_null} \\
    H_1: & \ \exists t': P_X^{(t')}\neq P_Y^{(t')}. \label{eq:tst_dist_drift_alt}
\end{align}
\end{subequations}
\end{definition}

\begin{definition}[Sequential independence testing under distribution drift]\label{def:it_dist_drift}
Suppose that we observe that a stream of independent observations from the joint distribution which drifts over time: $\roundbrack{(X_t, Y_t)}_{t\geq 1}$, where $(X_t, Y_t)\sim P_{XY}^{(t)}$. The goal is to design a sequential test for the following pair of hypotheses:
\begin{subequations}\label{eq:it_dist_drift}
\begin{align}
    H_0: & \ P_{XY}^{(t)}= P_{X}^{(t)}\times P_{Y}^{(t)}, \ \forall t, \label{eq:it_dist_drift_null}\\
    H_1: & \ \exists t': P_{XY}^{(t')}\neq P_{X}^{(t')}\times P_{Y}^{(t')}. \label{eq:it_dist_drift_alt}
\end{align}
\end{subequations}
\end{definition}

\noindent The superscripts highlight that, in contrast to the standard i.i.d.\ setting (Definitions~\ref{def:seq_tst} and~\ref{def:seq_it}), the underlying distributions may drift over time. For independence testing, we need to impose an additional assumption that enables reasoning about the type I error control of Seq-C-IT.

\begin{assumption}\label{assump:it_dist_drift}
Consider the setting of independence testing under distribution drift (Definition~\ref{def:it_dist_drift}). We assume that for each $t\geq 1$, it holds that either $P_{X}^{(t-1)}=P_{X}^{(t)}$ or $P_{Y}^{(t-1)}=P_{Y}^{(t)}$, meaning that at each step either the distribution of $X$ changes or that of $Y$ changes, but not both simultaneously\footnote{Technically, a slightly weaker condition suffices --- at odd $t$, the distribution can change arbitrarily, but at even $t$, either the distribution of $X$ changes or that of $Y$ changes but not both; however, this weaker condition is slightly less intuitive than the stated condition.}. 
\end{assumption}

\noindent We have the following result about the type I error control of our tests under distribution drift.

\begin{corollary}\label{cor:dist_drift_validity}
The following claims hold:
\begin{enumerate}
    \item Suppose that $H_0$ in~\eqref{eq:tst_dist_drift_null} is true. Then Seq-C-2ST satisfies: $\Prob_{H_0}\roundbrack{\tau<\infty}\leq \alpha$.
    \item Suppose that $H_0$ in~\eqref{eq:it_dist_drift_null} is true. Further, suppose that Assumption~\ref{assump:it_dist_drift} is satisfied. Then Seq-C-IT satisfies: $\Prob_{H_0}\roundbrack{\tau<\infty}\leq \alpha$.
\end{enumerate}
\end{corollary}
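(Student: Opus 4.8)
The plan is to reduce both claims to the fact, already established in the construction of Seq-C-2ST and Seq-C-IT, that the relevant wealth process is a nonnegative martingale (or supermartingale) with initial value one under the null, and then invoke Ville's inequality exactly as in the proof of Theorem~\ref{thm:predictable_test}. The only thing that needs checking is that the conditional-mean-zero property of the payoff functions survives when we drop the i.i.d.\ assumption and allow the distributions to drift. So the real content is a per-round conditional-expectation computation, carried out under the drifting null, with everything else being a verbatim appeal to the martingale/Ville argument.

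For the two-sample case, I would argue as follows. Fix $t\geq 1$ and condition on $\calF_{t-1}=\sigma(\{(X_i,Y_i)\}_{i\leq t-1})$, so that the predictor $g_t$ is $\calF_{t-1}$-measurable. Under $H_0$ in~\eqref{eq:tst_dist_drift_null} we have $P_X^{(t)}=P_Y^{(t)}$, and $X_t,Y_t$ are independent draws from this common distribution, independent of $\calF_{t-1}$. Hence for the derandomized squared-risk payoff~\eqref{eq:payoff_derand_sq_tst},
\begin{equation*}
\Exp{}{f_t^{\mathrm{s}}(X_t,Y_t)\mid \calF_{t-1}} = \tfrac12\roundbrack{\Exp{}{g_t(Y_t)\mid\calF_{t-1}} - \Exp{}{g_t(X_t)\mid\calF_{t-1}}} = 0,
\end{equation*}
and similarly for~\eqref{eq:payoff_derand_misclas_tst}; note that the identity of the common distribution $P_X^{(t)}=P_Y^{(t)}$ is irrelevant to the computation, which is why drift causes no harm. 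Therefore the wealth update~\eqref{eq:wealth_update} with any predictable $\lambda_t\in[0,1]$ keeps $(\calK_t)_{t\geq 0}$ a nonnegative martingale under $H_0$, and Ville's inequality gives $\Prob_{H_0}(\exists t:\calK_t\geq 1/\alpha)\leq \alpha$, i.e.\ $\Prob_{H_0}(\tau<\infty)\leq\alpha$ for the stopping rule~\eqref{eq:stop_time}.

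For the independence case the same scheme applies, but the conditional-mean-zero computation for the derandomized payoff~\eqref{eq:payoff_derand_sq_it} needs Assumption~\ref{assump:it_dist_drift}, and this is the step I expect to be the main obstacle. Conditioning on $\calF_{t-1}$ (so $g_t$ is fixed), the payoff involves $g_t$ evaluated at the four pairs $(X_{2t-1},Y_{2t-1})$, $(X_{2t},Y_{2t})$ (the ``joint'' pairs, labeled $+1$) and the two swapped pairs $(X_{2t-1},Y_{2t})$, $(X_{2t},Y_{2t-1})$ (labeled $-1$). Under $H_0$ each of $(X_{2t-1},Y_{2t-1})$ and $(X_{2t},Y_{2t})$ is drawn from a product distribution, so $X_{2t-1}\indep Y_{2t-1}$ and $X_{2t}\indep Y_{2t}$; I would show that for the expectation of the swapped terms to match that of the unswapped terms, it suffices that $X_{2t-1}$ and $X_{2t}$ have the same marginal \emph{or} $Y_{2t-1}$ and $Y_{2t}$ have the same marginal --- precisely the content of Assumption~\ref{assump:it_dist_drift} at step $t$ (in the weaker footnote form, the condition is only needed at the even index, i.e.\ on the within-round pair). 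Concretely, if say $P_X^{(2t-1)}=P_X^{(2t)}=:P_X$, then writing out $\Exp{}{g_t(X_{2t-1},Y_{2t})\mid\calF_{t-1}}=\int g_t(x,y)\,dP_X(x)\,dP_Y^{(2t)}(y)$ and comparing with $\Exp{}{g_t(X_{2t},Y_{2t})\mid\calF_{t-1}}=\int g_t(x,y)\,dP_X(x)\,dP_Y^{(2t)}(y)$ shows the two agree, and the symmetric pairing of the four terms in~\eqref{eq:payoff_derand_sq_it} makes the conditional expectation vanish; the case $P_Y^{(2t-1)}=P_Y^{(2t)}$ is handled identically by symmetry. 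Once $\Exp{}{f_t(\cdot)\mid\calF_{t-1}}=0$ is in hand, the wealth process is again a nonnegative martingale under $H_0$ and Ville's inequality finishes the proof exactly as before. I would close by remarking that no modification of the growth/consistency arguments is claimed here --- the corollary is purely about time-uniform type I error control --- so the proof is genuinely just ``check the payoff is a null martingale increment, then cite Ville.''
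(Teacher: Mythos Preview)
Your proposal is correct and follows essentially the same approach as the paper: the paper's proof simply asserts that the payoff functions~\eqref{eq:derand_payoffs_tst} and~\eqref{eq:payoff_derand_sq_it} remain ``valid'' (i.e., have conditional mean zero) under the drifting nulls and then defers to the martingale/Ville argument of Theorem~\ref{thm:predictable_test}, omitting the computation. You supply exactly that computation---in particular, your pairing argument for the four terms in the independence payoff, showing that equality of one marginal across the within-round pair suffices, is the content the paper leaves implicit (and your remark tying this to the footnote's weaker condition is apt).
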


\noindent The above result follows from the fact the payoff functions underlying Seq-C-2ST~\eqref{eq:derand_payoffs_tst} and Seq-C-IT~\eqref{eq:payoff_derand_sq_it} are valid under the more general null hypotheses~\eqref{eq:tst_dist_drift_null} and \eqref{eq:it_dist_drift_null} respectively. The rest of the proof of Corollary~\ref{cor:dist_drift_validity} follows the same steps as that of Theorem~\ref{thm:predictable_test}, and we omit the details. We conclude with an example which shows that Assumption~\ref{assump:it_dist_drift} is necessary for the type I error control.

\begin{example}
Consider the following case when the null $H_0$ in~\eqref{eq:it_dist_drift_null} is true, but Assumption~\ref{assump:it_dist_drift} is not satisfied. We show that Seq-C-IT fails to control type I error (at any prespecified level $\alpha\in(0,1)$), and for simplicity, focus on the payoff function based on the squared risk~\eqref{eq:payoff_derand_sq_it}. Suppose that we observe a sequence of observations: $((X_t, Y_t))_{t\geq 1}$, where $(X_t, Y_t) = (t+W_t,t+V_t)$ and $W_t,V_t\simiid\mathrm{Bern}(1/2)$. It suffices to show that there exists a sequence of predictors $(g_t)_{t\geq 1}$, for which 
\begin{equation}\label{eq:contradiction_requirement}
    \liminf_{t\to\infty} \frac{1}{t}\sum_{i=1}^t f_t^{\mathrm{s}}((X_{2t-1}, Y_{2t-1}),(X_{2t}, Y_{2t})) \overset{\mathrm{a.s.}}{>}0.
\end{equation}
If~\eqref{eq:contradiction_requirement} holds, then using the same argument as in the proof of Theorem~\ref{thm:predictable_test}, one can then deduce that $\Prob\roundbrack{\tau<\infty}=1$. Consider the following sequence of predictors $(g_t)_{t\geq 1}$:
\begin{equation*}
    g_t(x,y) = \roundbrack{\roundbrack{x-\roundbrack{2t-\tfrac{1}{2}}}\roundbrack{y-\roundbrack{2t-\tfrac{1}{2}}} \wedge 1} \vee -1.
\end{equation*}
We have:
\begin{align*}
    g_t(X_{2t}, Y_{2t}) &= \roundbrack{\roundbrack{W_{2t}+\tfrac{1}{2}}\roundbrack{V_{2t}+\tfrac{1}{2}}\wedge 1} \vee -1,\\
    g_t(X_{2t-1}, Y_{2t-1}) &= \roundbrack{W_{2t-1}-\tfrac{1}{2}}\roundbrack{V_{2t-1}-\tfrac{1}{2}},\\
     g_t(X_{2t}, Y_{2t-1}) &= \roundbrack{W_{2t}+\tfrac{1}{2}}\roundbrack{V_{2t-1}-\tfrac{1}{2}},\\
    g_t(X_{2t-1}, Y_{2t}) &= \roundbrack{W_{2t-1}-\tfrac{1}{2}}\roundbrack{V_{2t}+\tfrac{1}{2}}.
\end{align*}
Simple calculation shows that:
\begin{align*}
    \Exp{}{g_t(X_{2t}, Y_{2t})} =11/16,\quad  \Exp{}{g_t(X_{2t-1}, Y_{2t-1})} = \Exp{}{g_t(X_{2t}, Y_{2t-1})}=\Exp{}{g_t(X_{2t-1}, Y_{2t})}=0
\end{align*}
and hence, for all $t\geq 1$, it holds that $\Exp{}{f_t^{\mathrm{s}}((X_{2t-1}, Y_{2t-1}),(X_{2t}, Y_{2t}))} = 11/64>0$. This in turn implies~\eqref{eq:contradiction_requirement}, and hence, we conclude that Seq-C-IT fails to control the type I error.

\end{example}

\section{Proofs}\label{appsec:proofs}

\subsection{Auxiliary Results}\label{appsubsec:aux_results}

\begin{proposition}[Ville's inequality~\citep{ville1939etude}]\label{prop:villes_ineq}
Suppose that $(\calM_t)_{t\geq 0}$ is a nonnegative supermartingale process adapted to a filtration $(\calF_t)_{t\geq 0}$. Then, for any $a>0$ it holds that:
\begin{equation*}
    \Prob\roundbrack{\exists t\geq 1:\calM_t\geq a}\leq \frac{\Exp{}{\calM_0}}{a}.
\end{equation*}
\end{proposition}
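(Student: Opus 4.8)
The plan is to prove this maximal inequality via the optional stopping theorem applied to a stopped version of $(\calM_t)_{t\geq 0}$, which is the standard route for Ville-type bounds in discrete time. First I would introduce the stopping time $\tau_a := \inf\curlybrack{t\geq 0: \calM_t\geq a}$, with the convention $\inf\emptyset = +\infty$, and observe that $\curlybrack{\exists t\geq 1: \calM_t\geq a}\subseteq \curlybrack{\tau_a<\infty}$, so it suffices to bound $\Prob\roundbrack{\tau_a<\infty}$.

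Next I would pass to the stopped process $\roundbrack{\calM_{t\wedge \tau_a}}_{t\geq 0}$. Since $\tau_a$ is a stopping time with respect to $(\calF_t)_{t\geq 0}$ and $(\calM_t)_{t\geq 0}$ is a nonnegative supermartingale, the stopped process is again a nonnegative supermartingale; in discrete time this follows from the identity $\calM_{t\wedge\tau_a}-\calM_{(t-1)\wedge\tau_a} = \indicator{\tau_a\geq t}\roundbrack{\calM_t-\calM_{t-1}}$, together with the fact that $\curlybrack{\tau_a\geq t}$ is $\calF_{t-1}$-measurable, so taking conditional expectations given $\calF_{t-1}$ preserves the supermartingale inequality. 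Consequently, for every fixed $t\geq 0$ we have $\Exp{}{\calM_{t\wedge\tau_a}}\leq \Exp{}{\calM_0}$.

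Then I would lower-bound the left-hand side. On the event $\curlybrack{\tau_a\leq t}$ we have $\calM_{t\wedge\tau_a}=\calM_{\tau_a}\geq a$ (in discrete time the value at the first crossing is genuinely at least $a$, directly from the definition of $\tau_a$), while on the complement $\calM_{t\wedge\tau_a}\geq 0$ by nonnegativity of the process. Hence $\Exp{}{\calM_0}\geq \Exp{}{\calM_{t\wedge\tau_a}}\geq a\,\Prob\roundbrack{\tau_a\leq t}$. Letting $t\to\infty$ and using continuity of probability from below, since $\curlybrack{\tau_a\leq t}\uparrow \curlybrack{\tau_a<\infty}$, yields $a\,\Prob\roundbrack{\tau_a<\infty}\leq \Exp{}{\calM_0}$, which rearranges to the claimed bound.

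There is no real obstacle here. The only two points deserving a moment of care are (i) checking that the stopped process remains a supermartingale, which is routine in discrete time via the decomposition above, and (ii) the limit $t\to\infty$, which is mere monotone continuity of measure — no uniform integrability, optional-stopping-at-$\infty$, or martingale convergence theorem is needed, because we only evaluate expectations of the nonnegative stopped process at finite times and then take a monotone limit of probabilities.
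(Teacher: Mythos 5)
Your proof is correct: it is the standard optional-stopping argument for Ville's inequality in discrete time, and every step (the supermartingale property of the stopped process via the predictability of $\curlybrack{\tau_a\geq t}$, the lower bound $\calM_{\tau_a}\geq a$ on $\curlybrack{\tau_a\leq t}$, and the monotone limit in $t$) is handled properly. The paper itself gives no proof of this proposition, stating it only as a cited classical result of Ville, so there is nothing to compare against beyond noting that your argument is the canonical one.
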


\subsection{Supporting Lemmas}\label{appsubsec:sup_lemma}

\begin{lemma}\label{lemma:growth_rate}
Consider sequential two-sample testing setting (Definition~\ref{def:proxy_game}). Suppose that a predictor $g\in\calG$ satisfies $\Exp{}{f(Z,W)}>0$, where $f(z,w) :=wg(z)$. 
\begin{enumerate}[label=(\alph*),itemsep=0em]
    \item Consider the wealth process $(\calK_t)_{t\geq 0}$ based on $f$ along with the ONS strategy for selecting betting fractions (Algorithm~\ref{alg:ons_bet}). Then we have the following lower bound on the growth rate of the wealth process:
\begin{equation}\label{eq:log_wealth_lb}
\begin{aligned}
    \liminf_{t\to\infty} \frac{\log \calK_t}{t} &\overset{\mathrm{a.s.}}{\geq} \frac{1}{4}\roundbrack{\frac{\roundbrack{\Exp{}{f(Z,W)}}^2}{\Exp{}{f^2(Z,W)}} \wedge \Exp{}{f(Z,W)}}.
\end{aligned}
\end{equation}
\item For $\lambda_\star = \argmax_{\lambda\in [-0.5,0.5]}\Exp{}{\log(1+\lambda f(Z,W))}$, it holds that:
\begin{equation}\label{eq:up_bd_general_form}
    \Exp{}{\log(1+\lambda_\star f(Z,W))}\leq \frac{4}{3}\cdot \frac{\roundbrack{\Exp{}{f(Z, W)}}^2}{\Exp{}{\roundbrack{ f(Z, W)}^2}} \wedge \frac{\Exp{}{f(Z, W)}}{2}.
\end{equation}
\end{enumerate}
Analogous result holds when the payoff function $f(z,w) :=w\cdot \sign{g(z)}$ is used instead.
\end{lemma}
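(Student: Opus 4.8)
The plan is to split the probabilistic content from the deterministic content. Part (a) will follow by combining the regret guarantee of the ONS betting strategy with the strong law of large numbers, which reduces it to a \emph{deterministic} lower bound on $\sup_{\lambda\in[0,1/2]}\Exp{}{\log(1+\lambda f(Z,W))}$; part (b) is exactly a deterministic upper bound on the same quantity, since $\lambda_\star$ attains the supremum. The $\sign$ variant is then literally a special case, because $w\cdot\sign{g(z)}\in\curlybrack{-1,+1}\subseteq[-1,1]$, so $f$ is again a payoff bounded by $1$ with i.i.d.\ values (now with $\Exp{}{f^2}=1$).

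For part (a), I would first invoke the coin-betting ONS regret bound (\citet{hazan2007log_regret}, in the form of \citet{cutkosky2018bb_reductions}): since $|f_t|\le 1$ and the bets $\lambda_t^{\mathrm{ONS}}$ lie in $[0,1/2]$, for every fixed $\lambda\in[0,1/2]$ one has $\sum_{t=1}^T\log\roundbrack{1+\lambda_t^{\mathrm{ONS}}f_t}\ge\sum_{t=1}^T\log\roundbrack{1+\lambda f_t}-O(\log T)$. Because $(Z_t,W_t)$ are i.i.d.\ and $f$ is a fixed function, $\roundbrack{f(Z_t,W_t)}_{t\ge 1}$ is i.i.d.\ and $\log(1+\lambda f)\in[\log\tfrac12,\log\tfrac32]$ is bounded, so the SLLN gives $\tfrac1T\sum_{t\le T}\log(1+\lambda f_t)\to\Exp{}{\log(1+\lambda f)}$ a.s. Dividing by $T$, taking $\liminf$, and then supremizing over $\lambda$ gives $\liminf_{T\to\infty}\tfrac1T\log\calK_T\overset{\mathrm{a.s.}}{\ge}\sup_{\lambda\in[0,1/2]}\Exp{}{\log(1+\lambda f)}$. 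To lower-bound the right-hand side, set $\mu=\Exp{}{f}>0$ and $\sigma^2=\Exp{}{f^2}\in[\mu^2,1]$, and use the scalar inequality $\log(1+x)\ge x-x^2$ (valid for $x\ge-\tfrac12$, in particular for $|x|\le\tfrac12$): this yields $\Exp{}{\log(1+\lambda f)}\ge\lambda\mu-\lambda^2\sigma^2$. Choosing $\lambda=\min\curlybrack{\mu/(2\sigma^2),\,1/2}$ and splitting into the cases $\mu\le\sigma^2$ (then $\lambda=\mu/(2\sigma^2)$ and the bound is $\mu^2/(4\sigma^2)$) and $\mu>\sigma^2$ (then $\lambda=1/2$ and the bound is $\ge\mu/2-\sigma^2/4>\mu/4$) produces exactly $\tfrac14\roundbrack{\tfrac{\mu^2}{\sigma^2}\wedge\mu}$, which is~\eqref{eq:log_wealth_lb}.

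For part (b), note that $\lambda_\star$ maximizes the concave function $L(\lambda):=\Exp{}{\log(1+\lambda f)}$ over $[-\tfrac12,\tfrac12]$; since $L(0)=0$ and $L'(0)=\mu>0$, necessarily $\lambda_\star\in(0,\tfrac12]$. The bound $L(\lambda_\star)\le\tfrac12\mu$ is immediate from $\log(1+x)\le x$ together with $\lambda_\star\le\tfrac12$. For the other term I would use a one-sided quadratic bound $\log(1+x)\le x-c\,x^2$ valid on all of $[-\tfrac12,\tfrac12]$ (a short calculus check shows $c=\tfrac13$ works; any $c\ge\tfrac{3}{16}$ already suffices), so that $L(\lambda)\le\lambda\mu-c\lambda^2\sigma^2$ for every $\lambda\in[-\tfrac12,\tfrac12]$, hence $L(\lambda_\star)\le\max_{\lambda\in\Real}\roundbrack{\lambda\mu-c\lambda^2\sigma^2}=\tfrac{\mu^2}{4c\sigma^2}\le\tfrac43\cdot\tfrac{\mu^2}{\sigma^2}$. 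Taking the minimum of the two bounds gives~\eqref{eq:up_bd_general_form}, and the $\sign$ version is obtained by repeating the argument verbatim with $\sigma^2=1$.

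The only genuinely non-mechanical step is obtaining the ONS regret statement in precisely the form used above --- uniform over the comparator interval, with a deterministic pathwise $O(\log T)$ remainder --- given that Algorithm~\ref{alg:ons_bet} feeds in the surrogate $z_t=f_t/(1-\lambda_t^{\mathrm{ONS}})$ rather than $f_t$ itself. I would dispatch this by citing the coin-betting analysis of \citet{cutkosky2018bb_reductions} and checking that its hypotheses (boundedness of the surrogate, exp-concavity of $x\mapsto-\log(1+\lambda x)$ on the relevant range) hold here. Everything else --- the two scalar inequalities $\log(1+x)\ge x-x^2$ on $[-\tfrac12,\infty)$ and $\log(1+x)\le x-cx^2$ on $[-\tfrac12,\tfrac12]$, and the case analyses for the constrained optima --- is elementary one-variable calculus.
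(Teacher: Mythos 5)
Your proposal is correct and follows essentially the same route as the paper's proof: reduce to a constant-bet comparator via the ONS regret bound of \citet{cutkosky2018bb_reductions}, then apply the quadratic bounds $\log(1+x)\geq x-x^2$ and $\log(1+x)\leq x-cx^2$ on $[-\tfrac12,\tfrac12]$ and optimize over $\lambda$. The only cosmetic differences are that the paper uses the empirical (hindsight) comparator $\lambda_0=\tfrac12((\sum_i f_i/\sum_i f_i^2\wedge 1)\vee 0)$ and then lets the sample averages converge, where you fix the population-optimal $\lambda$ first and invoke the SLLN, and that the paper takes $c=\tfrac38$ in the upper bound where you take $c=\tfrac13$; both yield the stated constants.
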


\begin{proof}
\begin{enumerate}[label=(\alph*),itemsep=0em]
    \item Under the ONS betting strategy, for any sequence of outcomes $\roundbrack{f_t}_{t\geq 1}$, $f_t\in[-1,1]$, it holds that (see the proof of Theorem 1 in~\citep{cutkosky2018bb_reductions}):
\begin{equation}\label{eq:ons_regret}
    \log \calK_t(\lambda_0)-\log \calK_t = O\roundbrack{\log\roundbrack{\sum_{i=1}^t f_i^2}},
\end{equation}
where $\calK_t(\lambda_0)$ is the wealth of any constant betting strategy $\lambda_0\in[-1/2,1/2]$ and $\calK_t$ is the wealth corresponding to the ONS betting strategy. Hence, it follows that
\begin{equation}\label{eq:ons_lower_bound}
    \frac{\log \calK_t}{t} \geq \frac{\log \calK_t(\lambda_0)}{t} - C\cdot \frac{\log t}{t},
\end{equation}
for some absolute constant $C>0$. Next, consider
\begin{equation*}
    \lambda_0 = \frac{1}{2} \roundbrack{\roundbrack{\frac{\sum_{i=1}^t f_i}{\sum_{i=1}^t f_i^2} \wedge 1} \vee 0}.
\end{equation*}
We obtain:
\begin{equation}\label{eq:ons_general_lb}
\begin{aligned}
    \frac{\log\calK_t(\lambda_0)}{t} &= \frac{1}{t}\sum_{i=1}^t \log(1+\lambda_0 f_i) \\
    &\overset{\mathrm{(a)}}{\geq} \frac{1}{t}\sum_{i=1}^t (\lambda_0 f_i - \lambda_0^2 f_i^2)\\
    &= \roundbrack{\frac{\frac{1}{t}\sum_{i=1}^t f_i}{4}\vee 0}\cdot \roundbrack{\frac{\frac{1}{t}\sum_{i=1}^t f_i}{\frac{1}{t}\sum_{i=1}^t f_i^2} \wedge 1},
\end{aligned}
\end{equation}
where in (a) we used that $\log(1+x)\geq x-x^2$ for $x\in [-1/2,1/2]$. From~\eqref{eq:ons_lower_bound}, it then follows that:
\begin{equation*}
\begin{aligned}
    \liminf_{t\to\infty} \frac{\log \calK_t}{t} &\overset{\mathrm{a.s.}}{\geq} \roundbrack{\frac{\Exp{}{f(Z,W)}}{4}\vee 0}\cdot \roundbrack{\frac{\Exp{}{f(Z,W)}}{\Exp{}{f^2(Z,W)}} \wedge 1}\\
&=\frac{1}{4}\roundbrack{\frac{\roundbrack{\Exp{}{f(Z,W)}}^2}{\Exp{}{f^2(Z,W)}} \wedge \Exp{}{f(Z,W)}},
\end{aligned}
\end{equation*}
which completes the proof of the first assertion of the lemma.
\item Since $ \log(1+x)\leq x-3x^2/8$ for any $x\in[-0.5,0.5]$, we know that:
\begin{align*}
    \Exp{}{\log\roundbrack{1+\lambda_\star f(Z, W)}} &\leq \Exp{}{\lambda_\star f(Z, W)-\frac{3}{8}\roundbrack{\lambda_\star f(Z, W)}^2}\\
    &\leq \max_{\lambda\in [-0.5,0.5]}\roundbrack{\lambda\cdot \Exp{}{f(Z, W)}-\frac{3\lambda^2}{8}\cdot \Exp{}{\roundbrack{ f(Z, W)}^2}}.
\end{align*}
The optimizer of the above is
\begin{equation*}
    \tilde{\lambda} = \frac{4\Exp{}{f(Z, W)}}{3\Exp{}{\roundbrack{f(Z, W)}^2}} \wedge \frac{1}{2}.
\end{equation*}
Hence, as long as $\Exp{}{f(Z, W)}\leq (3/8)\cdot\Exp{}{\roundbrack{f(Z, W)}^2}$, we have:
\begin{align}
    \Exp{}{\log\roundbrack{1+\lambda_\star f(Z, W)}}&\leq \frac{2}{3}\frac{\roundbrack{\Exp{}{f(Z, W)}}^2}{\Exp{}{\roundbrack{ f(Z, W)}^2}}. \label{up_bd_high_var}
\end{align}
If however, $\Exp{}{f(Z, W)}> (3/8)\cdot\Exp{}{\roundbrack{f(Z, W)}^2}$, then we know that:
\begin{equation*}
    \Exp{}{\log\roundbrack{1+\lambda_\star f(Z, W)}}\leq \frac{\Exp{}{f(Z, W)}}{2}.
\end{equation*}
To bring it to a convenient form, we multiply the upper bound in~\eqref{up_bd_high_var} by two and get the bound~\eqref{eq:up_bd_general_form}, which completes the proof of the second assertion of the lemma.
\end{enumerate}

\end{proof}

\subsection{Proofs for Section~\ref{sec:clas_spit}}\label{appsubsec:clas_spit_proofs}

\propequivrep*

\begin{proof}

\begin{enumerate}
    \item The first equality in~\eqref{eq:mis_risk_opt} follows from two facts: (a) for any $g\in\calG$ and any $s\in (0,1]$, it holds that $R_{\mathrm{m}}(sg)=R_{\mathrm{m}}(g)$, (b) $R_\mathrm{m}(0) = 1/2$. The second equality easily follows from the following fact: $\sign{x}/2 = 1/2-\indicator{x<0}$.
    \item Consider an arbitrary predictor $g\in\calG$. Let us consider all possible scenarios:
    \begin{enumerate}
        \item If $\Exp{}{W\cdot g(Z)}\leq 0$, then the RHS of~\eqref{eq:sq_risk_opt} is zero. For the LHS of~\eqref{eq:sq_risk_opt}, we have that:
        \begin{equation*}
            \sup_{s\in[0,1]} \roundbrack{1-R_\mathrm{s}(s g)} \geq 1-R_\mathrm{s}(0) = 0,
        \end{equation*}
        so the bound~\eqref{eq:sq_risk_opt} holds.
        \item Next, assume that $\Exp{}{W\cdot g(Z)}> 0$, then it is easy to derive that:
        \begin{equation}\label{eq:scaling_squared_error}
            s_\star := \argmax_{s\in[0,1]} \roundbrack{1-R_\mathrm{s}(s g)} =  \frac{\Exp{}{W\cdot g(Z)}}{\Exp{}{g^2(Z)}} \wedge 1.
        \end{equation}
        A simple calculation shows that:
        \begin{equation*}
            1-R_\mathrm{s}(s_\star g) \geq \Exp{}{W\cdot g(Z)} \cdot \roundbrack{\frac{\Exp{}{W\cdot g(Z)}}{\Exp{}{g^2(Z)}}\wedge 1},
        \end{equation*}
        and hence, we conclude that the bound~\eqref{eq:sq_risk_opt} holds. 
    \end{enumerate}
    To establish the second part of the statement, note that $d_\mathrm{s}(P,Q)>0$ iff there is a predictor $g\in\calG$ such that $R_\mathrm{s}(g)<1$. For the squared risk, we have:
\begin{equation}\label{eq:sq_risk_rep}
    1-R_\mathrm{s}(g) = 2\Exp{}{W\cdot g(Z)} - \Exp{}{g^2(Z)},
\end{equation}
and hence, $R_\mathrm{s}(g)<1$ trivially implies that $\Exp{}{W\cdot g(Z)}>0$. The converse implication trivially follows from~\eqref{eq:sq_risk_opt}. Hence, the result follows.
\end{enumerate}

\end{proof}

\thmoracletest*

\begin{proof}

\begin{enumerate}
    \item We trivially have that the payoff functions~\eqref{eq:oracle_payoff_mis} and \eqref{eq:oracle_payoff_sq} are bounded: $\forall (z,w)\in \calZ\times\curlybrack{-1,1}$, it holds that $f_\star^{\mathrm{m}}(z,w)\in [-1,1]$ and $f_\star^{\mathrm{s}}(z,w)\in [-1,1]$. Further, under the null $H_0$ in~\eqref{eq:proxy_game_null}, it trivially holds that $\Exp{H_0}{f_\star^{\mathrm{m}}(Z_t,W_t)\mid \calF_{t-1}}=\Exp{H_0}{f_\star^{\mathrm{s}}(Z_t,W_t)\mid \calF_{t-1}}=0$, where $\calF_{t-1}=\sigma(\curlybrack{(Z_i,W_i)}_{i\leq t-1})$. Since ONS betting fractions $\roundbrack{\lambda_t^{\mathrm{ONS}}}_{t\geq 1}$ are predictable, we conclude that the resulting wealth process is a nonnegative martingale. The assertion of the Theorem then follows directly from Ville's inequality (Proposition~\ref{prop:villes_ineq}) when $a=1/\alpha$.
    \item Suppose that $H_1$ in~\eqref{eq:proxy_game_alt} is true. First, we prove the results for the lower bounds:
    \begin{enumerate}
        \item Consider the wealth process based on the misclassification risk $(\calK_t^{\mathrm{m},\star})_{t\geq 0}$. Note that for all $t\geq 1$:
\begin{equation*}
    \Exp{}{f^\mathrm{m}_\star(Z_t,W_t)} =2\cdot\roundbrack{\frac{1}{2}- R_\mathrm{m}\roundbrack{g_\star}}, \quad \roundbrack{f_\star^\mathrm{m}(Z_t,W_t)}^2 = 1.
\end{equation*}
Since $\Exp{}{f^\mathrm{m}_\star(Z_t,W_t)}\in [0,1]$, we also have $\roundbrack{\Exp{}{f^\mathrm{m}_\star(Z_t,W_t)}}^2\leq \Exp{}{f^\mathrm{m}_\star(Z_t,W_t)}$. From the first part of Lemma~\ref{lemma:growth_rate}, it follows that:
\begin{equation*}
\begin{aligned}
    \liminf_{t\to\infty} \frac{\log \calK_t^{\mathrm{m},\star}}{t} &\overset{\mathrm{a.s.}}{\geq} \frac{1}{4}\roundbrack{\Exp{}{f^\mathrm{m}_\star(Z_t,W_t)}}^2= \roundbrack{\frac{1}{2}- R_\mathrm{m}\roundbrack{g_\star}}^2.
\end{aligned}
\end{equation*}
From the second part of Lemma~\ref{lemma:growth_rate}, and \eqref{eq:up_bd_general_form} in particular, it follows that:
\begin{equation*}
    \Exp{}{\log\roundbrack{1+\lambda_\star^\mathrm{m} f_\star^\mathrm{m}(Z, W)}} \leq \roundbrack{\frac{16}{3}\cdot \roundbrack{\frac{1}{2}-R_{\mathrm{m}}(g_\star)}^2 \wedge \roundbrack{\frac{1}{2}-R_{\mathrm{m}}(g_\star)}}.
\end{equation*}
The first term in the above is smaller or equal than the second one whenever $R_{\mathrm{m}}(g_\star)\geq 5/16$. We conclude that the assertion of the theorem is true.

\item Next, we consider the wealth process based on the squared error: $(\calK_t^{\mathrm{s},\star})_{t\geq 0}$. Note that:
\begin{align*}
    \Exp{}{f^\mathrm{s}_\star(Z_t,W_t)} &= \Exp{}{W\cdot g_\star(Z)},\\
    \Exp{}{\roundbrack{f^\mathrm{s}_\star(Z_t,W_t)}^2} &= \Exp{}{g^2_\star(Z)},
\end{align*}
and hence from Lemma~\ref{lemma:growth_rate}, it follows that:
\begin{equation}\label{eq:growth_sq_risk_equiv_rep}
\begin{aligned}
    \liminf_{t\to\infty} \frac{\log \calK_t^{\mathrm{s},\star}}{t} &\overset{\mathrm{a.s.}}{\geq} \frac{1}{4}\roundbrack{\frac{\roundbrack{\Exp{}{W\cdot g_\star(Z)}}^2}{\Exp{}{g^2_\star(Z)}} \wedge \Exp{}{W\cdot g_\star(Z)}}.
\end{aligned}
\end{equation}
In the above, we assume that the following case is not possible: $g_\star(Z)\overset{\mathrm{a.s.}}{=}0$ (for such $g_\star$, the corresponding expected margin and the growth rate of the resulting wealth process are clearly zero, and will still be highlighted in our resulting bound). Next, note that since $g_\star\in \argmin_{g\in\calG} R_\mathrm{s}\roundbrack{g}$, we have that:
\begin{align*}
    1-R_\mathrm{s}\roundbrack{g_\star} &= \sup_{s\in [0,1]}\roundbrack{1-R_\mathrm{s}\roundbrack{s g_\star}},
\end{align*}
meaning that $g_\star$ can not be improved by scaling with $s<1$. From Proposition~\ref{prop:equiv_rep}, and~\eqref{eq:scaling_squared_error} in particular, it follows that:
\begin{equation}\label{eq:best_rule_prop}
    \frac{\Exp{}{W\cdot g_\star(Z)}}{\Exp{}{g^2_\star(Z)}} \geq 1,
\end{equation}
and hence, the bound~\eqref{eq:growth_sq_risk_equiv_rep} reduces to
\begin{equation*}
    \liminf_{t\to\infty} \frac{\log \calK_t^{\mathrm{s},\star}}{t} \overset{\mathrm{a.s.}}{\geq} \frac{\Exp{}{W\cdot g_\star(Z)}}{4}.
\end{equation*}
From the second part of Lemma~\ref{lemma:growth_rate}, it follows that:
\begin{equation}\label{eq:ub_squared}
        \Exp{}{\log\roundbrack{1+\lambda_\star^\mathrm{s} f_\star^\mathrm{s}(Z, W)}} \leq \frac{4}{3}\frac{\roundbrack{\Exp{}{W\cdot g_\star(Z)}}^2}{\Exp{}{\roundbrack{g_\star(Z)}^2}} \wedge \frac{\Exp{}{W\cdot g_\star(Z)}}{2}.
    \end{equation}
    Next, we use that $g_\star$ satisfies~\eqref{eq:best_rule_prop}, which implies that the second term in~\eqref{eq:ub_squared} is smaller, and hence,
    \begin{equation*}
        \Exp{}{\log\roundbrack{1+\lambda_\star^\mathrm{s} f_\star^\mathrm{s}(Z, W)}} \leq \frac{\Exp{}{W\cdot g_\star(Z)}}{2},
    \end{equation*}
    which concludes the proof of the second part of the theorem.
\end{enumerate} 

\end{enumerate}

\end{proof}

\correlsquaredrisk*

\begin{proof}
Following the same argument as that of the proof of Theorem~\ref{thm:oracle_test}, we can deduce that:
\begin{equation}\label{eq:growth_sq_risk_general_g}
\begin{aligned}
    \liminf_{t\to\infty} \frac{\log \calK_t^{\mathrm{s}}}{t} &\overset{\mathrm{a.s.}}{\geq} \frac{1}{4}\roundbrack{\frac{\roundbrack{\Exp{}{W\cdot g(Z)}}^2}{\Exp{}{g^2(Z)}} \wedge \Exp{}{W\cdot g(Z)}}.
\end{aligned}
\end{equation}
Hence, it suffices to argue that the lower bound~\eqref{eq:growth_sq_risk_general_g} is equivalent to~\eqref{eq:growth_rate_arbitrary}. Without loss of generality, we can assume that $\Exp{}{W\cdot g(Z)}\geq 0$, and further, the two lower bounds are equal if $\Exp{}{W\cdot g(Z)}= 0$. Hence, we consider the case when $\Exp{}{W\cdot g(Z)}> 0$. First, let us consider the case when
\begin{equation}\label{eq:equiv_case_1}
    \frac{\Exp{}{W\cdot g(Z)}}{\Exp{}{g^2(Z)}} < 1.
\end{equation}
Using~~\eqref{eq:scaling_squared_error}, we get that:
\begin{align}
    \sup_{s\in [0,1]}\roundbrack{1-R_\mathrm{s}\roundbrack{s g}} = \frac{\roundbrack{\Exp{}{W\cdot g(Z)}}^2}{\Exp{}{g^2(Z)}}, \label{eq:temp}
\end{align}
and hence, two bounds coincide. For the upper bound~\eqref{eq:ub_arbitrary}, we use Lemma~\ref{lemma:growth_rate}, and the upper bound~\eqref{eq:up_bd_general_form} in particular. Note that the first term in~\eqref{eq:up_bd_general_form} is less than the second term whenever
\begin{equation*}
    \frac{\Exp{}{W\cdot g(Z)}}{\Exp{}{\roundbrack{g(Z)}^2}} \leq \frac{3}{8}<1.
\end{equation*}
However, in this regime we also know that~\eqref{eq:temp} holds, and hence the two bounds coincide. This completes the proof.

\end{proof}

\thmpredictbletest*

\begin{proof}

\begin{enumerate}
    \item We trivially have that the payoff functions~\eqref{eq:ps_payoff_mis} and \eqref{eq:ps_payoff_sq} are bounded: $\forall t\geq 1$ and $\forall (z,w)\in \calZ\times\curlybrack{-1,1}$, it holds that $f_t^{\mathrm{m}}(z,w)\in [-1,1]$ and $f_t^{\mathrm{s}}(z,w)\in [-1,1]$. Further, under the null $H_0$ in~\eqref{eq:proxy_game_null}, it trivially holds that $\Exp{H_0}{f_t^{\mathrm{m}}(Z_t,W_t)\mid \calF_{t-1}}=\Exp{H_0}{f_t^{\mathrm{s}}(Z_t,W_t)\mid \calF_{t-1}}=0$, where $\calF_{t-1}=\sigma(\curlybrack{(Z_i,W_i)}_{i\leq t-1})$. Since ONS betting fractions $\roundbrack{\lambda_t^{\mathrm{ONS}}}_{t\geq 1}$ are predictable, we conclude that the resulting wealth process is a nonnegative martingale. The assertion of the Theorem then follows directly from Ville's inequality (Proposition~\ref{prop:villes_ineq}) when $a=1/\alpha$.
    \item Note that if ONS strategy for selecting betting fractions is deployed, then~\eqref{eq:ons_general_lb} implies that the tests will be consistent as long as
    \begin{equation}\label{eq:cons_sufficient}
        \liminf_{t\to\infty} \frac{1}{t}\sum_{i=1}^t f_i \overset{\mathrm{a.s.}}{>}0,
    \end{equation}
    where for $i\geq 1$, $f_i = f_i^\mathrm{m}(Z_i,W_i)$ and $f_i = f_i^\mathrm{s}(Z_i,W_i)$ for the payoffs based on the misclassification and the squared risks respectively.
    \begin{enumerate}
        \item Recall that
        \begin{equation*}
            f_i^\mathrm{m}(Z_i,W_i)=W_i \cdot\sign{ g_i(Z_i)},
        \end{equation*}
        and Assumption~\ref{assump:learnability_misclas} states that:
        \begin{equation*}
            \limsup_{t\to\infty} \frac{1}{t}\sum_{i=1}^t \indicator{W_i\cdot \sign{g_i(Z_i)}<0} \overset{\mathrm{a.s.}}{<}\frac{1}{2}. 
        \end{equation*}
        Since $\indicator{x<0} = \roundbrack{1-\sign{x}}/2$, we get that:
        \begin{equation*}
            \limsup_{t\to\infty} \frac{1}{t}\sum_{i=1}^t \roundbrack{\frac{1}{2} - \frac{W_i \cdot\sign{ g_i(Z_i)}}{2}} \overset{\mathrm{a.s.}}{<}\frac{1}{2},
        \end{equation*}
        which, after rearranging and multiplying by two, implies that:
        \begin{equation*}
            \liminf_{t\to\infty} \frac{1}{t}\sum_{i=1}^t W_i \cdot\sign{ g_i(Z_i)} \overset{\mathrm{a.s.}}{>}0.
        \end{equation*}
        Hence, a sufficient condition for consistency~\eqref{eq:cons_sufficient} holds, and we conclude that the result is true.
        \item Recall that 
        \begin{equation*}
            f_i^\mathrm{s}(Z_i,W_i)=W_i \cdot g_i(Z_i),
        \end{equation*}
        and Assumption~\ref{assump:learnability_sq} states that:
    \begin{equation*}
        \limsup_{t\to\infty} \frac{1}{t}\sum_{i=1}^t\roundbrack{g_i(Z_i)- W_i}^2 \overset{\mathrm{a.s}}{<}1,
    \end{equation*}
    which is equivalent to
    \begin{equation*}
        \limsup_{t\to\infty} \frac{1}{t}\sum_{i=1}^t\roundbrack{g_i^2(Z_i)- 2 W_i \cdot g_i(Z_i)} \overset{\mathrm{a.s}}{<}0.
    \end{equation*}
    It is easy to see that the above, in turn, implies that:
    \begin{equation*}
        \liminf_{t\to\infty} \frac{1}{t}\sum_{i=1}^t W_i \cdot g_i(Z_i) \overset{\mathrm{a.s}}{>}0.
    \end{equation*}
        Hence, a sufficient condition for consistency~\eqref{eq:cons_sufficient} holds, and we conclude that the result is true.
    \end{enumerate}
\end{enumerate}

\end{proof}

\subsection{Proofs for Appendix~\ref{sec:reg_spit}}\label{appsubsec:reg_spit_proofs}

\thmregoracle*

\begin{proof}

\begin{enumerate}
    \item We trivially have that the payoff function~\eqref{eq:oracle_payoff_sym} is bounded: $\forall (x,y,w)\in \calX\times\calY\times\curlybrack{-1,1}$, it holds that $f_\star^{\mathrm{r}}(x,y,w)\in [-1,1]$. Further, under the null $H_0$ in~\eqref{eq:it_reg_null}, it trivially holds that $\Exp{H_0}{f_\star^{\mathrm{r}}(X_t,Y_t,W_t)\mid \calF_{t-1}}=0$, where $\calF_{t-1}=\sigma(\curlybrack{(X_i,Y_i,W_i)}_{i\leq t-1})$. Since ONS betting fractions $\roundbrack{\lambda_t^{\mathrm{ONS}}}_{t\geq 1}$ are predictable, we conclude that the resulting wealth process is a nonnegative martingale. The assertion of the Theorem then follows directly from Ville's inequality (Proposition~\ref{prop:villes_ineq}) when $a=1/\alpha$.
    \item Note that if ONS strategy for selecting betting fractions is deployed, then~\eqref{eq:ons_general_lb} implies that the tests will be consistent as long as
    \begin{equation}\label{eq:cons_sufficient_reg_pred}
        \liminf_{t\to\infty} \frac{1}{t}\sum_{i=1}^t f_\star^\mathrm{r}(X_i,Y_i, W_i) \overset{\mathrm{a.s.}}{>}0.
    \end{equation}
    Note that:
    \begin{equation*}
        \frac{1}{t}\sum_{i=1}^t f_\star^\mathrm{r}(X_i,Y_i, W_i) = \frac{1}{t}\sum_{i=1}^t \tanh\roundbrack{s_\star\cdot W_i\ell(g_\star(X_i),Y_i)} \overset{\mathrm{a.s.}}{\to} \Exp{}{\tanh\roundbrack{s_\star\cdot W\ell(g_\star(X),Y)}}.
    \end{equation*}
    Note that for any $x\in\Real: \tanh(x)\geq x-\frac{1}{3}\cdot \max\curlybrack{x^3,0}$. Hence, for any $s>0$, it holds that:
\begin{equation}\label{eq:tanh_lb}
\begin{aligned}
    \Exp{}{\tanh\roundbrack{s \cdot W \ell(g_\star(X),Y)}} &\geq  s\Exp{}{W\ell(g_\star(X),Y)}-\frac{1}{3} \Exp{}{\max\curlybrack{s^3 \cdot W (\ell(g_\star(X),Y))^3,0}}\\
    &=  s\Exp{}{W\ell(g_\star(X),Y)}-\frac{s^3}{3} \Exp{}{(\ell(g_\star(X),Y))^3\cdot \max\curlybrack{W,0}}\\
    &=  s\Exp{}{W\ell(g_\star(X),Y)}-\frac{s^3}{6} \Exp{}{(1+W)\cdot (\ell(g_\star(X),Y))^3},
\end{aligned}
\end{equation}
where we used that $\max\curlybrack{W,0} = (W+1)/2$ since $W\in\curlybrack{-1,1}$. Maximizing the RHS of~\eqref{eq:tanh_lb} over $s>0$ yields $s_\star$ defined in~\eqref{eq:tanh_oracle_norm_constant}.
Hence,
\begin{align*}
    \Exp{}{\tanh\roundbrack{s_\star \cdot W \ell(g_\star(X),Y)}} &\geq s_\star\Exp{}{W \ell(g_\star(X),Y)}-\frac{s_\star^3}{6} \Exp{}{(1+W)\cdot (\ell(g_\star(X),Y))^3}\\
    &= s_\star\roundbrack{\Exp{}{W \ell(g_\star(X),Y)}-\frac{s_\star^2}{6} \Exp{}{(1+W)\cdot (\ell(g_\star(X),Y))^3}}\\
    &= s_\star\roundbrack{\Exp{}{W \ell(g_\star(X),Y)}-\frac{1}{3}\Exp{}{W \ell(g_\star(X),Y)}}\\
    &= \frac{2 s_\star}{3}\Exp{}{W \ell(g_\star(X),Y)}>0.
\end{align*}
\end{enumerate}
Hence, we conclude that the oracle regression-based IT is consistent since the sufficient condition~\eqref{eq:cons_sufficient_reg} holds.
\end{proof}

\thmregbasedtest*

\begin{proof}
\begin{enumerate}
    \item We trivially have that the payoff function~\eqref{eq:payoff_reg} is bounded: $\forall (x,y,w)\in \calX\times\calY\times\curlybrack{-1,1}$, it holds that $f_t^{\mathrm{r}}(x,y,w)\in [-1,1]$. Further, under the null $H_0$ in~\eqref{eq:it_reg_null}, it trivially holds that $\Exp{H_0}{f_t^{\mathrm{r}}(X_t,Y_t,W_t)\mid \calF_{t-1}}=0$, where $\calF_{t-1}=\sigma(\curlybrack{(X_i,Y_i,W_i)}_{i\leq t-1})$. Since ONS betting fractions $(\lambda_t^{\mathrm{ONS}})_{t\geq 1}$ are predictable, we conclude that the resulting wealth process is a nonnegative martingale. The assertion of the Theorem then follows directly from Ville's inequality (Proposition~\ref{prop:villes_ineq}) with $a=1/\alpha$.
    \item Note that if ONS strategy for selecting betting fractions is deployed, then~\eqref{eq:ons_general_lb} implies that the tests will be consistent as long as
    \begin{equation}\label{eq:cons_sufficient_reg}
        \liminf_{t\to\infty} \frac{1}{t}\sum_{i=1}^t f_t^\mathrm{r}(X_i,Y_i, W_i) \overset{\mathrm{a.s.}}{>}0.
    \end{equation}
    \begin{enumerate}
    \item \textbf{Step 1.} Consider a predictable sequence of scaling factors $(s_t)_{t\geq 1}$, defined in~\eqref{eq:tanh_pred_norm_constant}, and the corresponding sequences $(\mu_t)_{t\geq 1}$ and $(\nu_t)_{t\geq 1}$, defined in~\eqref{eq:tanh_pred_norm_constant_numerator} and~\eqref{eq:tanh_pred_norm_constant_denom} respectively. For $t\geq 1$, let $\calF_{t} := \sigma(\curlybrack{(X_i,Y_i,W_i)}_{i\leq t})$. Since the losses are bounded, we have that:
    \begin{equation*}
        \roundbrack{W_i\cdot \ell(g(X_i;\theta_i),Y_i)-\Exp{}{W_i\cdot \ell(g(X_i;\theta_i),Y_i)\mid\calF_{i-1}}}_{i\geq 1},
    \end{equation*}
    is a bounded martingale difference sequence (BMDS). By the Strong Law of Large Numbers for BMDS, it follows that:
    \begin{equation*}
        \frac{1}{t}\sum_{i=1}^t \roundbrack{W_i\cdot \ell(g(X_i;\theta_i),Y_i)-\Exp{}{W_i\cdot \ell(g(X_i;\theta_i),Y_i)\mid\calF_{i-1}}} \overset{\mathrm{a.s.}}{\to} 0.
    \end{equation*}
    Since $((X_t,Y_t,W_t))_{t\geq 1}$ is a sequence of i.i.d.\ observations, we can write
    \begin{equation*}
        \frac{1}{t}\sum_{i=1}^t \Exp{}{W_i\cdot \ell(g(X_i;\theta_i),Y_i)\mid\calF_{i-1}} = \frac{1}{t}\sum_{i=1}^t \Exp{}{W\cdot \ell(g(X;\theta_i),Y)\mid \theta_i},
    \end{equation*}
    where $(X,Y,W)\indep (\theta_t)_{t\geq 1}, \theta_\star$. Using Assumption~\ref{assump:smoothness}, we get that:
    \begin{equation}\label{eq:seq_of_ub}
    \begin{aligned}
        &\abs{\frac{1}{t}\sum_{i=1}^{t}\Exp{}{W\cdot \ell(g(X;\theta_i),Y)\mid \theta_i} - \Exp{}{W\cdot \ell(g(X;\theta_\star),Y)\mid \theta_\star}} \\
        \leq\quad & \frac{1}{t}\sum_{i=1}^{t} \sup_{\substack{x\in\calX\\ y\in\calY}} \abs{\ell(g(x;\theta_i),y)-\ell(g(x;\theta_\star),y)}\\
        \leq\quad & \frac{1}{t}\sum_{i=1}^{t} L_2\sup_{\substack{x\in\calX}} \abs{g(x;\theta_i)-g(x;\theta_\star)}\\
        \leq\quad & \frac{1}{t}\sum_{i=1}^{t} L_2\cdot L_1 \cdot \norm{}{\theta_i-\theta_\star}\overset{\mathrm{a.s.}}{\to} 0,
    \end{aligned}
    \end{equation}
    since $\norm{}{\theta_i-\theta_\star}\overset{\mathrm{a.s.}}{\to} 0$ by Assumption~\ref{assump:reg_learnability}. In particular, this implies that $\mu_t\overset{\mathrm{a.s.}}{\to} \Exp{}{W \ell(g(X;\theta_\star),Y)\mid\theta_\star}$. Similar argument can be used to show that $\nu_t \overset{\mathrm{a.s.}}{\to} \Exp{}{(1+W)\cdot (\ell(g(X;\theta_\star),Y))^3\mid\theta_\star}$, and hence, 
    \begin{equation}\label{eq:scal_factor_conv}
        s_t \overset{\mathrm{a.s.}}{\to} \sqrt{\frac{2\Exp{}{W \ell(g(X;\theta_\star),Y)\mid\theta_\star}}{\Exp{}{(1+W)\cdot (\ell(g(X;\theta_\star),Y))^3\mid\theta_\star}}} =: s_\star.
    \end{equation}
    Note that $s_\star$ is a random variable which is positive (almost surely) by Assumption~\ref{assump:reg_learnability}.
    \item \textbf{Step 2.} Recall that for any $x\in\Real: \tanh(x)\geq x-\frac{1}{3}\cdot \max\curlybrack{x^3,0}$ and that $\max\curlybrack{W,0} = (W+1)/2$ since $W\in\curlybrack{-1,1}$. We have:
    \begin{equation*}
    \begin{aligned}
    \frac{1}{t}\sum_{i=1}^t f_i^\mathrm{r}(X_i,Y_i,W_i) &= \frac{1}{t}\sum_{i=1}^t \tanh\roundbrack{s_i\cdot W_i \ell(g(X_i;\theta_i),Y_i)}\\
       &\geq  \frac{1}{t}\sum_{i=1}^t \roundbrack{s_i\cdot W_i \cdot\ell(g(X_i;\theta_i),Y_i)- \frac{s_i^3}{6} \cdot(1+W_i)\cdot (\ell(g(X_i;\theta_i),Y_i))^3}.
    \end{aligned}
    \end{equation*}
    Note that $\theta_i$ and $s_i$ are $\calF_{i-1}$-measurable (see Step 1 for the definition of $\calF_{i-1}$). Under a minor technical assumption that $(s_t)_{t\geq 1}$ is a sequence of bounded scaling factors (the lower bound is trivially zero and the upper bound also holds if $\nu_t$ are bounded away from zero almost surely which is reasonable given the definition of $\nu_t$), we can use analogous argument regarding a BMDS in Step 1 to deduce that:
    \begin{equation}\label{eq:lower_bd_reg}
    \begin{aligned}
        & \liminf_{t\to\infty} \frac{1}{t}\sum_{i=1}^t f_i^\mathrm{r}(X_i,Y_i,W_i) \\
        \geq \quad & \liminf_{t\to\infty} \frac{1}{t}\sum_{i=1}^t \roundbrack{s_i\cdot \Exp{}{W \cdot\ell(g(X;\theta_i),Y)\mid \theta_i}- \frac{s_i^3}{6} \Exp{}{(1+W)\cdot (\ell(g(X;\theta_i),Y))^3\mid \theta_i}}.
    \end{aligned}
    \end{equation}
    Using argument analogous to~\eqref{eq:seq_of_ub}, we can show that:
    \begin{equation}\label{eq:cubic_conv}
        \frac{1}{t}\sum_{i=1}^t \Exp{}{(1+W)\cdot (\ell(g(X;\theta_i),Y))^3\mid \theta_i}\overset{\mathrm{a.s.}}{\to} \Exp{}{(1+W)\cdot (\ell(g(X;\theta_\star),Y))^3\mid\theta_\star}.
    \end{equation}
    Combining~\eqref{eq:seq_of_ub}, \eqref{eq:scal_factor_conv} and~\eqref{eq:cubic_conv}, we deduce that
    \begin{align*}
        &\frac{1}{t}\sum_{i=1}^t \roundbrack{s_i\cdot \Exp{}{W \cdot\ell(g(X;\theta_i),Y)\mid \theta_i}- \frac{s_i^3}{6} \Exp{}{(1+W)\cdot (\ell(g(X;\theta_i),Y))^3\mid \theta_i}}\\
        \overset{\mathrm{a.s.}}{\to} \quad & s_\star \cdot \Exp{}{W \cdot\ell(g(X;\theta_\star),Y)\mid \theta_\star} - \frac{s_\star^3}{6}\cdot \Exp{}{(1+W)\cdot (\ell(g(X;\theta_\star),Y))^3\mid\theta_\star}\\
        =\quad & \frac{2s_\star}{3} \cdot \Exp{}{W \cdot\ell(g(X;\theta_\star),Y)\mid \theta_\star}.
    \end{align*}
    Hence, from~\eqref{eq:lower_bd_reg} it follows that:
    \begin{equation*}
        \liminf_{t\to\infty} \frac{1}{t}\sum_{i=1}^t f_i^\mathrm{r}(X_i,Y_i,W_i) \geq \frac{2s_\star}{3} \cdot \Exp{}{W \cdot\ell(g(X;\theta_\star),Y)\mid \theta_\star},
    \end{equation*}
    where the RHS is a random variable which is positive almost surely. Hence, a sufficient condition for consistency~\eqref{eq:cons_sufficient_reg} holds which concludes the proof.
    \end{enumerate}
\end{enumerate}
\end{proof}

\subsection{Proofs for Appendix~\ref{subsec:tst_unbalanced_classes}}\label{appsubsec:proofs_ext}

\paragraph{Two-Sample Testing with Unbalanced Classes.} Note that ($g(z)=2\eta(z)-1$):
\begin{align*}
    & (1-\lambda_t)\cdot 1+\lambda_t\cdot \frac{\roundbrack{\eta(Z_t)}^{\indicator{W_t=1}}\roundbrack{1-\eta(Z_t)}^{1-\indicator{W_t=1}}}{\roundbrack{\pi}^{\indicator{W_t=1}}\roundbrack{1-\pi}^{1-\indicator{W_t=1}}}\\
    =\quad &(1-\lambda_t)\cdot 1+\lambda_t\cdot \frac{\roundbrack{\frac{1+g(Z_t)}{2}}^{\indicator{W_t=1}}\roundbrack{\frac{1-g(Z_t)}{2}}^{1-\indicator{W_t=1}}}{\roundbrack{\pi}^{\indicator{W_t=1}}\roundbrack{1-\pi}^{1-\indicator{W_t=1}}}\\
    =\quad & (1-\lambda_t)\cdot 1+\frac{\lambda_t}{2}\cdot \frac{\roundbrack{1+g(Z_t)}^{\indicator{W_t=1}}\roundbrack{1-g(Z_t)}^{1-\indicator{W_t=1}}}{\roundbrack{\pi}^{\indicator{W_t=1}}\roundbrack{1-\pi}^{1-\indicator{W_t=1}}}\\
    =\quad & (1-\lambda_t)\cdot 1+\frac{\lambda_t}{2}\cdot \frac{1+W_t g(Z_t)}{\roundbrack{\pi}^{\indicator{W_t=1}}\roundbrack{1-\pi}^{1-\indicator{W_t=1}}}\\
    =\quad & (1-\lambda_t)\cdot 1+\frac{\lambda_t}{2}\cdot \frac{2}{1+W_t(2\pi-1)} \cdot \roundbrack{1+W_t g(Z_t)}\\
    =\quad & (1-\lambda_t)\cdot 1+ \frac{\lambda_t}{1+W_t(2\pi-1)} \cdot \roundbrack{1+W_t g(Z_t)}\\
    =\quad & 1+ \lambda_t\cdot\frac{W_t\roundbrack{g(Z_t)-(2\pi-1)}}{1+W_t(2\pi-1)}.
\end{align*}

\paragraph{Payoff for the Case of Unbalanced Classes (known $\pi$).} To see that the payoff function~\eqref{eq:payoff_unequal_oracle} is lower bounded by negative one, note that:
\begin{align*}
    f_{t}^{\mathrm{u}}(z,1) &= \frac{g_t(z)-(2\pi-1)}{2\pi} \geq \frac{-1-(2\pi-1)}{2\pi} = -1,\\
    f_{t}^{\mathrm{u}}(z,-1) &= \frac{-g_t(z)+(2\pi-1)}{2(1-\pi)}\geq \frac{-1+(2\pi-1)}{2(1-\pi)} = -1.
\end{align*}
To see that such payoff is fair, note that:
\begin{equation*}
    \Exp{H_0}{f_{t}^{\mathrm{u}}(Z_t,W_t)\mid \calF_{t-1}}=\Exp{P}{\pi\cdot \frac{g_t(Z_t)-(2\pi-1)}{2\pi}}-\Exp{Q}{(1-\pi)\cdot \frac{g_t(Z_t)-(2\pi-1)}{2(1-\pi)}\mid \calF_{t-1}} = 0,
\end{equation*}
where $\calF_{t-1} = \sigma\roundbrack{\curlybrack{(Z_i, W_i)}_{i\leq t-1}}$. 

\thmtestunequal*

\begin{proof}
First, we show that $\roundbrack{\calK^{\mathrm{u}}_t}_{t\geq 0}$ is a nonnegative supermartingale. For any $t\geq 1$, the wealth $\calK_{t-1}$ is multiplied at round $t$ by
\begin{equation*}
    1+\lambda_{t}f_{t}^{\mathrm{u}}\roundbrack{\curlybrack{(Z_{b(t-1)+i},W_{b(t-1)+i})}_{i\in\{1,\dots, b\}}} = (1-\lambda_{t})\cdot 1 + \lambda_{t} \cdot \frac{\prod_{i=b(t-1)+1}^{bt}\roundbrack{1+W_{i}g_{t}(Z_{i})}}{\prod_{i=1}^{b}\roundbrack{1+W_{i}(2\hat{\pi}_{t}-1)}}.
\end{equation*}
Since $\lambda_{t}\in [0,0.5]$, we conclude that the process $\roundbrack{\calK^{\mathrm{u}}_t}_{t\geq 0}$ is nonnegative. Next, note that since $\hat{\pi}_{t}$ is the MLE of $\pi$ computed from a $t$-th minibatch, it follows that:
\begin{align*}
    1+\lambda_{t}f_{t}^{\mathrm{u}}\roundbrack{\curlybrack{(Z_{b(t-1)+i},W_{b(t-1)+i})}_{i\in\{1,\dots, b\}}} &\leq (1-\lambda_{t})\cdot 1 + \lambda_{t} \cdot \frac{\prod_{i=b(t-1)+1}^{bt}\roundbrack{1+W_{i}g_{t}(Z_{i})}}{\prod_{i=b(t-1)+1}^{bt}\roundbrack{1+W_{i}(2\pi-1)}}\\
    &= (1-\lambda_{t})\cdot 1 + \lambda_{t} \cdot \prod_{i=b(t-1)+1}^{bt} \roundbrack{\frac{1+W_{i}g_{t}(Z_{i})}{1+W_{i}(2\pi-1)}}.
\end{align*}
Recall that $\calF_{t-1} = \sigma \roundbrack{\curlybrack{Z_i,W_i}_{i\leq b(t-1)}}$. It suffices to show that if $H_0$ is true, then
\begin{equation*}
    \Exp{H_0}{\prod_{i=b(t-1)+1}^{bt} \roundbrack{\frac{1+W_{i}g_{t}(Z_{i})}{1+W_{i}(2\pi-1)}}\mid \calF_{t-1}}= 1.
\end{equation*}
Note that the individual terms in the above product are independent conditional on $\calF_{t-1}$. Hence,
\begin{equation*}
    \Exp{H_0}{\prod_{i=b(t-1)+1}^{bt} \roundbrack{\frac{1+W_{i}g_{t}(Z_{i})}{1+W_{i}(2\pi-1)}}\mid \calF_{t-1}}=\prod_{i=b(t-1)+1}^{bt}\Exp{H_0}{\frac{1+W_{i}g_{t}(Z_{i})}{1+W_{i}(2\pi-1)}\mid \calF_{t-1}}.
\end{equation*}
For any $i\in\curlybrack{b(t-1)+1,\dots, bt}$, it holds that:
\begin{align*}
    \Exp{H_0}{\frac{1+W_{i}g_{t}(Z_{i})}{1+W_{i}(2\pi-1)}\mid \calF_{t-1}} &= \Exp{H_0}{\pi \cdot \frac{1+g_{t}(Z_{i})}{1+(2\pi-1)}+(1-\pi)\cdot \frac{1-g_{t}(Z_{i})}{1-(2\pi-1)}\mid \calF_{t-1}}\\
    &= \Exp{H_0}{\frac{1+g_{t}(Z_{i})}{2}+ \frac{1-g_{t}(Z_{i})}{2}\mid \calF_{t-1}}\\
    &=1.
\end{align*}
Hence, we conclude that $\roundbrack{\calK^{\mathrm{u}}_t}_{t\geq 0}$ is a nonnegative supermartingale adapted to $(\calF_t)_{t\geq 0}$. The time-uniform type I error control of the resulting test then follows from Ville's inequality (Proposition~\ref{prop:villes_ineq}).
\end{proof}

\section{Additional Experiments and Details}

\subsection{Modeling Details}\label{appsubsec:model_details}

\paragraph{CNN Architecture and Training.} We use CNN with 4 convolutional layers (kernel size is taken to be $3\times 3$) and 16, 32, 32, 64 filters respectively. Further, each convolutional layer is followed by max-pooling layer ($2\times 2$). After flattening, those layers are followed by 1 fully connected layer with 128 neurons.  Dropout ($p=0.5$) and early stopping (with patience equal to ten epochs and 20\% of data used in the validation set) is used for regularization. ReLU activation functions are used in each layer. Adam optimizer is used for training the network. We start training after processing twenty observations, and update the model parameters after processing every next ten observations. Maximum number of epochs is set to 25 for each training iteration. The batch size is set to 32.

\paragraph{Single-stream Sequential Kernelized 2ST.} The construction of this test is the extension of 2ST of~\citet{shekhar2022one_two_sample} to the case when at each round an observation only from a single distribution ($P$ or $Q$) is revealed. Let $\calG$ denote an RKHS with positive-definite kernel $k$ and canonical feature map $\varphi(\cdot)$ defined on $\calZ$. Recall that instances from $P$ as labeled as $+1$ and instances from $Q$ are labeled as $-1$ (characterized by $W$). The mean embeddings of $P$ and $Q$ are then defined as
\begin{align*}
    \hat{\mu}_P^{(t)} &= \frac{1}{N_+(t)} \sum_{i=1}^{t}\varphi(Z_i)\cdot \indicator{W_i=+1},\\
    \hat{\mu}_Q^{(t)} &= \frac{1}{N_-(t)} \sum_{i=1}^{t}\varphi(Z_i)\cdot \indicator{W_i=-1},
\end{align*}
where $N_+(t) = \abs{i\leq t: W_i=+1}$ and $N_-(t) = \abs{i\leq t: W_i=-1}$. The corresponding payoff function is
\begin{align*}
    f^{\mathrm{k}}_t(Z_{t+1},W_{t+1}) &= W_{t+1} \cdot \hat{g}_t(Z_{t+1}),\\
    \text{where}\quad \hat{g}_t &= \frac{\hat{\mu}_P^{(t)}-\hat{\mu}_Q^{(t)}}{\norm{\calG}{\hat{\mu}_P^{(t)}-\hat{\mu}_Q^{(t)}}}.
\end{align*}
To make the test computationally efficient, it is critical to update the normalization constant efficiently. Suppose that at round $t+1$, an instance from $P$ is observed. In this case, $\hat{\mu}_Q^{(t+1)}=\hat{\mu}_Q^{(t)}$. Note that:
\begin{align*}
    \hat{\mu}_P^{(t+1)} &= \frac{1}{N_+(t+1)}\sum_{i=1}^{t+1}\varphi(Z_i)\cdot \indicator{W_i=+1}\\
    &= \frac{1}{N_+(t)+1}\sum_{i=1}^{t+1}\varphi(Z_i)\cdot \indicator{W_i=+1}\\
    &= \frac{1}{N_+(t)+1}\varphi(Z_{t+1})+\frac{1}{N_+(t)+1}\sum_{i=1}^{t}\varphi(Z_i)\cdot \indicator{W_i=+1}\\
    &= \frac{1}{N_+(t)+1}\varphi(Z_{t+1})+\frac{N_+(t)}{N_+(t)+1}\hat{\mu}_P^{(t)}.
\end{align*}
Hence, we have:
\begin{align*}
    \norm{\calG}{\hat{\mu}_P^{(t+1)}-\hat{\mu}_Q^{(t+1)}}^2 &= \norm{\calG}{\hat{\mu}_P^{(t+1)}-\hat{\mu}_Q^{(t)}}^2\\
    &=\norm{\calG}{\hat{\mu}_P^{(t+1)}}^2 -2 \anglebrack{\hat{\mu}_P^{(t+1)},\hat{\mu}_Q^{(t)}}_\calG + \norm{\calG}{\hat{\mu}_Q^{(t)}}^2.
\end{align*}
In particular, 
\begin{align*}
    \anglebrack{\hat{\mu}_P^{(t+1)},\hat{\mu}_Q^{(t)}}_\calG &= \anglebrack{\frac{1}{N_+(t)+1}\varphi(Z_{t+1})+\frac{N_+(t)}{N_+(t)+1}\hat{\mu}_P^{(t)},\hat{\mu}_Q^{(t)}}_\calG\\
    &= \frac{1}{N_+(t)+1}\anglebrack{\varphi(Z_{t+1}),\hat{\mu}_Q^{(t)}}_\calG+\frac{N_+(t)}{N_+(t)+1}\anglebrack{\hat{\mu}_P^{(t)},\hat{\mu}_Q^{(t)}}_\calG.
\end{align*}
Note that:
\begin{equation*}
    \anglebrack{\varphi(Z_{t+1}),\hat{\mu}_Q^{(t)}}_\calG = \frac{1}{N_-(t)} \sum_{i=1}^t k(Z_{t+1},Z_i)\cdot \indicator{W_i=-1}.
\end{equation*}
Next, we assume for simplicity that $k(x,x)=1, \forall x$ which holds for RBF kernel. Observe that:
\begin{align*}
    \norm{\calG}{\hat{\mu}_P^{(t+1)}}^2 &= \anglebrack{\hat{\mu}_P^{(t+1)},\hat{\mu}_P^{(t+1)}}_\calG\\
    &= \frac{1}{\roundbrack{N_+(t)+1}^2} + \frac{2N_+(t)}{\roundbrack{N_+(t)+1}^2}\anglebrack{\varphi(Z_{t+1}),\hat{\mu}_P^{(t)}}_\calG +\frac{(N_+(t))^2}{\roundbrack{N_+(t)+1}^2} \norm{\calG}{\hat{\mu}_P^{(t)}}^2.
\end{align*}
By caching intermediate results, we can compute the normalization constant using linear in $t$ number of kernel evaluations. We start betting once at least one instance is observed from both $P$ and $Q$. For simulations, we use RBF kernel and the median heuristic with first 20 instances to compute the kernel hyperparameter.

\paragraph{MLP Training Scheme} We begin training after processing twenty datapoints from $P_{XY}$ which gives a training dataset with 40 datapoints (due to randomization). When updating a model, we use previous parameters as initialization. We use the following update scheme: we start after next $n_0=10$ datapoints from $P_{XY}$ are observed. Once $n_0$ becomes less than 1\% of the size of the existing training dataset, we increase it by ten, that is, $n_t=n_{t-1}+10$. When we fit the model, we set the maximum number of epochs to be 25 and use early stopping with patience of 3 epochs.

\paragraph{Kernel Hyperparameters for Synthetic Experiments.} For SKIT, we use RBF kernels:
\begin{equation*}
    k(x,x') = \exp\roundbrack{-\lambda_X\norm{2}{x-x'}^2}, \quad l(y,y') = \exp\roundbrack{-\lambda_Y\norm{2}{y-y'}^2}.
\end{equation*}
For simulations on synthetic data, we take kernel hyperparameters to be inversely proportional to the second moment of the underlying variables (the median heuristic yields similar results):
\begin{equation*}
    \lambda_X = \frac{1}{2\Exp{}{\norm{2}{X-X'}^2}}, \quad \lambda_Y = \frac{1}{2\Exp{}{\norm{2}{Y-Y'}^2}}.
\end{equation*}
\begin{enumerate}
    \item \emph{Spherical model.} By symmetry, we have: $P_X=P_Y$, and hence we take $\lambda_X=\lambda_Y$. We have 
    \begin{equation*}
        \Exp{}{(X-X')^2}= 2\Exp{}{X^2} =\frac{2}{d}.
    \end{equation*}
    \item \emph{HTDD model.} By symmetry, we have: $P_X=P_Y$, and hence we take $\lambda_X=\lambda_Y$. We have
    \begin{equation*}
        \Exp{}{(X-X')^2}= 2\Exp{}{X^2} = \frac{2\pi^2}{3}.
    \end{equation*}
    \item \emph{Sparse signal model.} We have 
    \begin{equation*}
    \begin{aligned}
    \Exp{}{\norm{2}{X-X'}^2}&= 2\Exp{}{\norm{2}{X}^2}=4d,\\
    \Exp{}{\norm{2}{Y-Y'}^2}&=2\Exp{}{\norm{2}{Y}^2}=2\mathrm{tr}(B_sB_s^\top +I_d) = 2(d+\sum_{i=1}^d\beta_i^2).
    \end{aligned}
    \end{equation*}
    \item \emph{Gaussian model.} We have
    \begin{align*}
        \Exp{}{(X-X')^2} &= 2\Exp{}{X^2} = 2,\\
        \Exp{}{(Y-Y')^2} &= 2\Exp{}{Y^2} = 2(1+\beta^2).
    \end{align*}
\end{enumerate}

\paragraph{Ridge Regression.} We use ridge regression as an underlying predictive model: $\hat{g}_t(x) = \beta_0^{(t)}+x \beta_1^{(t)}$, where the coefficients are obtained by solving:
\begin{equation*}
    (\beta^{(t)}_0,\beta^{(t)}_1) = \argmin_{\beta_0,\beta_1} \sum_{i=1}^{2(t-1)} \roundbrack{Y_i-X_{i}\beta_1-\beta_0}^2 + \lambda\beta_1^2.
\end{equation*}
Let $\Gamma = \mathrm{diag}(0,1)$. Let $\mathbf{X}_{t-1}\in\Real^{2(t-1)\times 2}$ be such that $(\mathbf{X}_{t-1})_i = (1, X_{i})$, $i\in[1,2(t-1)]$. Finally, let $\mathbf{Y}_{t-1}$ be a vector of responses: $(\mathbf{Y}_{t-1})_i = Y_{i}$, $i\in[1,2(t-1)]$. Then:
\begin{align*}
    \beta^{(t)} &= \argmin_\beta \norm{}{\mathbf{Y}_{t-1}-\mathbf{X}_{t-1}\beta}^2 + \lambda \beta^\top \Gamma \beta= \roundbrack{\mathbf{X}_{t-1}^\top \mathbf{X}_{t-1} +\lambda \Gamma}^{-1} (\mathbf{X}_{t-1}^\top \mathbf{Y}_{t-1}).
\end{align*}

\subsection{Additional Experiments for Seq-C-IT}\label{appsubsec:c_spit_add_sims}

In Figure~\ref{fig:synthetic_examples_stop}, we present average stopping times for ITs under the synthetic settings from Section~\ref{sec:it}. We confirm that all tests adapt to the complexity of a problem at hand, stopping earlier on easy tasks and later on harder ones.
\begin{figure}[!htb]
\centering
\begin{subfigure}[b]{0.425\textwidth}
    \centering
    \includegraphics[width=\textwidth]{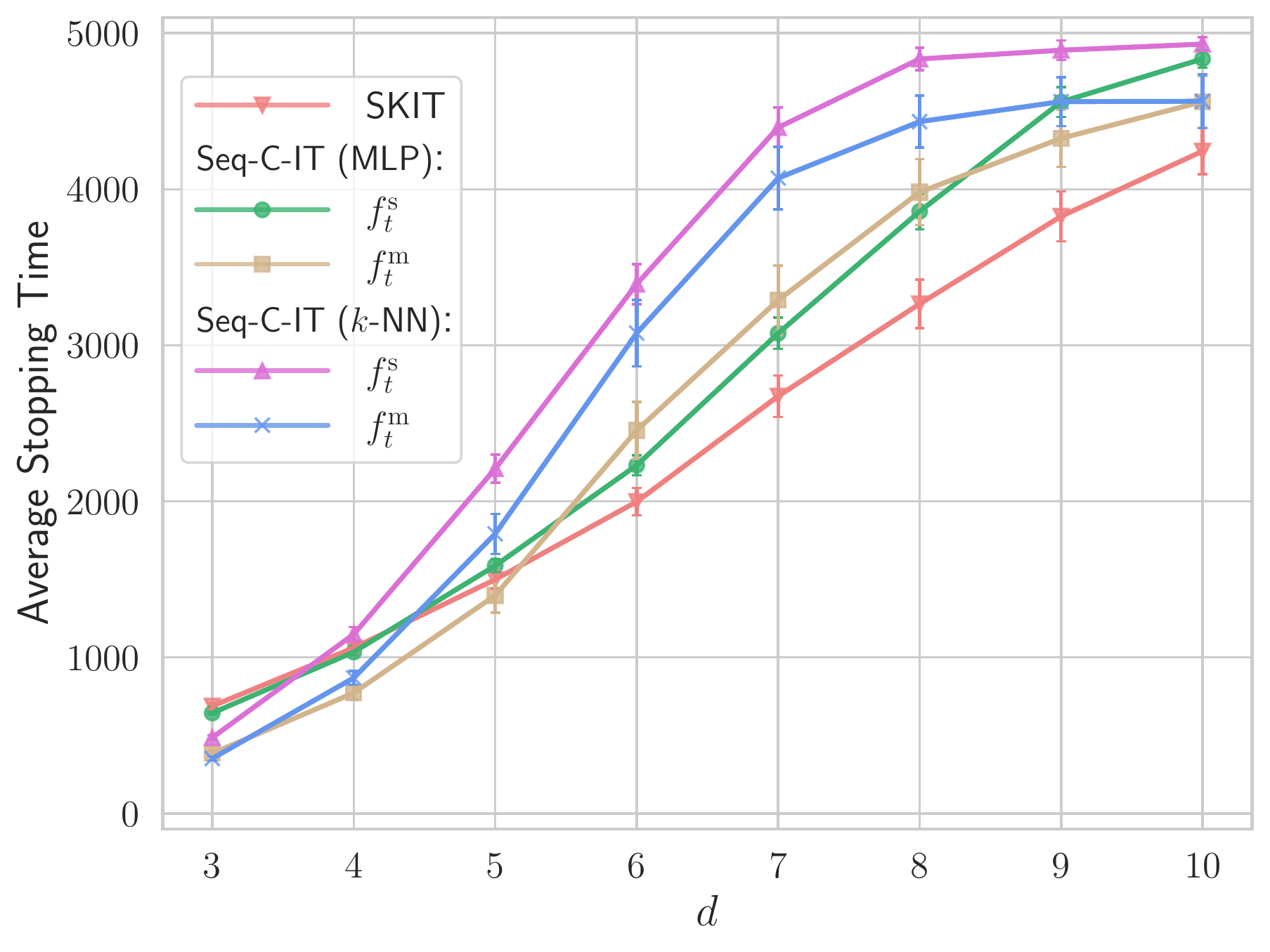}
    \caption{Spherical model.}
    \label{subfig:spherical_example_stop_time}
\end{subfigure}\hfill
\begin{subfigure}[b]{0.425\textwidth}
    \centering
    \includegraphics[width=\textwidth]{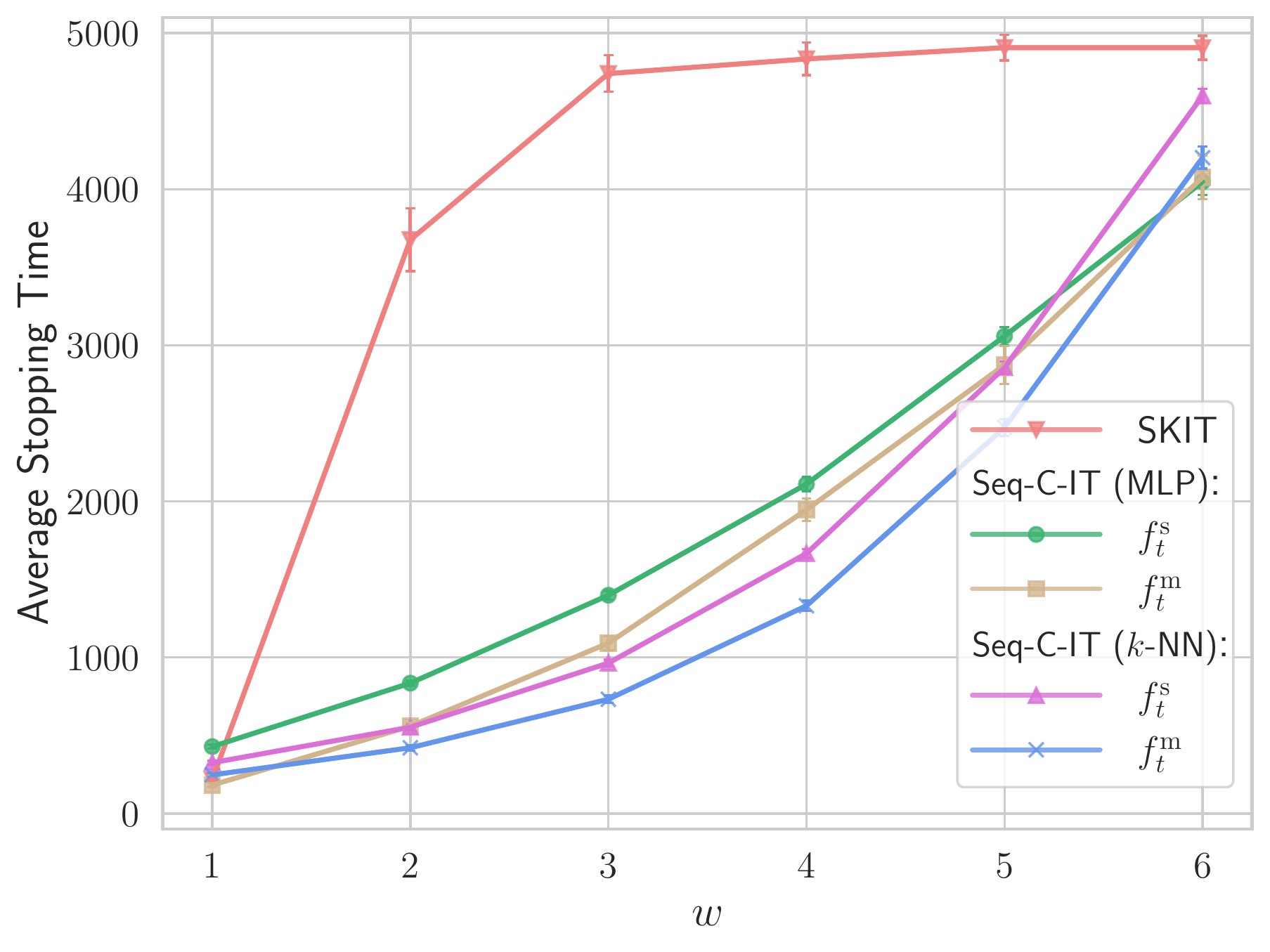}
    \caption{HTDD model.}
    \label{subfig:htdd_example_stop_time}
\end{subfigure}\hfill
\caption{Stopping times of ITs on synthetic data from Section~\ref{sec:it}. Subplot (a) shows that SKIT is only marginally better than Seq-C-IT (MLP) due to slightly better sample efficiency under the spherical model (no localized dependence). Under the structured HTDD model, SKIT is inferior to Seq-C-ITs. }
\label{fig:synthetic_examples_stop}
\end{figure}
We also consider two additional synthetic examples where Seq-C-IT outperforms a kernelized approach:
\begin{enumerate}
    \item \emph{Sparse signal model.} Let $(X_t)_{t\geq 1}$ and $(\varepsilon_t)_{t\geq 1}$ be two independent sequences of standard Gaussian random vectors in $\Real^d$: $X_t, \varepsilon_t \simiid \mathcal{N}(0,\textbf{I}_d)$, $t\geq 1$. We take
    \begin{equation*}
        (X_t,Y_t) = (X_t, B_s X_t+\varepsilon_t),
    \end{equation*}
    where $B_s = \mathrm{diag} (\beta_1,\dots,\beta_d)$ and only $s=5$ of $\curlybrack{\beta_i}_{i=1}^d$ are nonzero being sampled from $\mathrm{Unif}([-0.5,0.5])$. We consider $d\in\curlybrack{5,\dots,50}$.
    \item \emph{Nested circles model.}  Let $(L_t)_{t\geq 1}$, $(\Theta_t)_{t\geq 1}$, $(\varepsilon^{(1)}_t)_{t\geq 1}$, $(\varepsilon^{(2)}_t)_{t\geq 1}$ denote sequences of random variables where $L \simiid \mathrm{Unif}({1, \dots, l})$ for some prespecified $l \in \naturals$, $\Theta_t \simiid \mathrm{Unif}([0, 2\pi])$, and $\varepsilon^{(1)}_t, \varepsilon^{(2)}_t\simiid \mathcal{N}(0,(1/4)^2)$. For $t\geq 1$, we take
    \begin{equation}\label{eq:nested_circles_model}
        (X_t,Y_t) = (L_t\cos(\Theta_t)+\varepsilon^{(1)}_t,L_t\sin(\Theta_t)+\varepsilon^{(2)}_t).
    \end{equation}
    We consider $l\in\curlybrack{1,\dots,10}$.
\end{enumerate}

\noindent In Figure~\ref{fig:synthetic_examples_2}, we show that Seq-C-ITs significantly outperform SKIT under these models. We note that the degrading performance of kernel-based tests under the nested circles model~\eqref{eq:nested_circles_model} has been also observed in earlier works~\citep{berrett2019nonparametric,podkopaev2022skit}. 

\begin{figure}[!htb]
\centering
\begin{subfigure}[b]{0.45\textwidth}
    \centering
    \includegraphics[width=\textwidth]{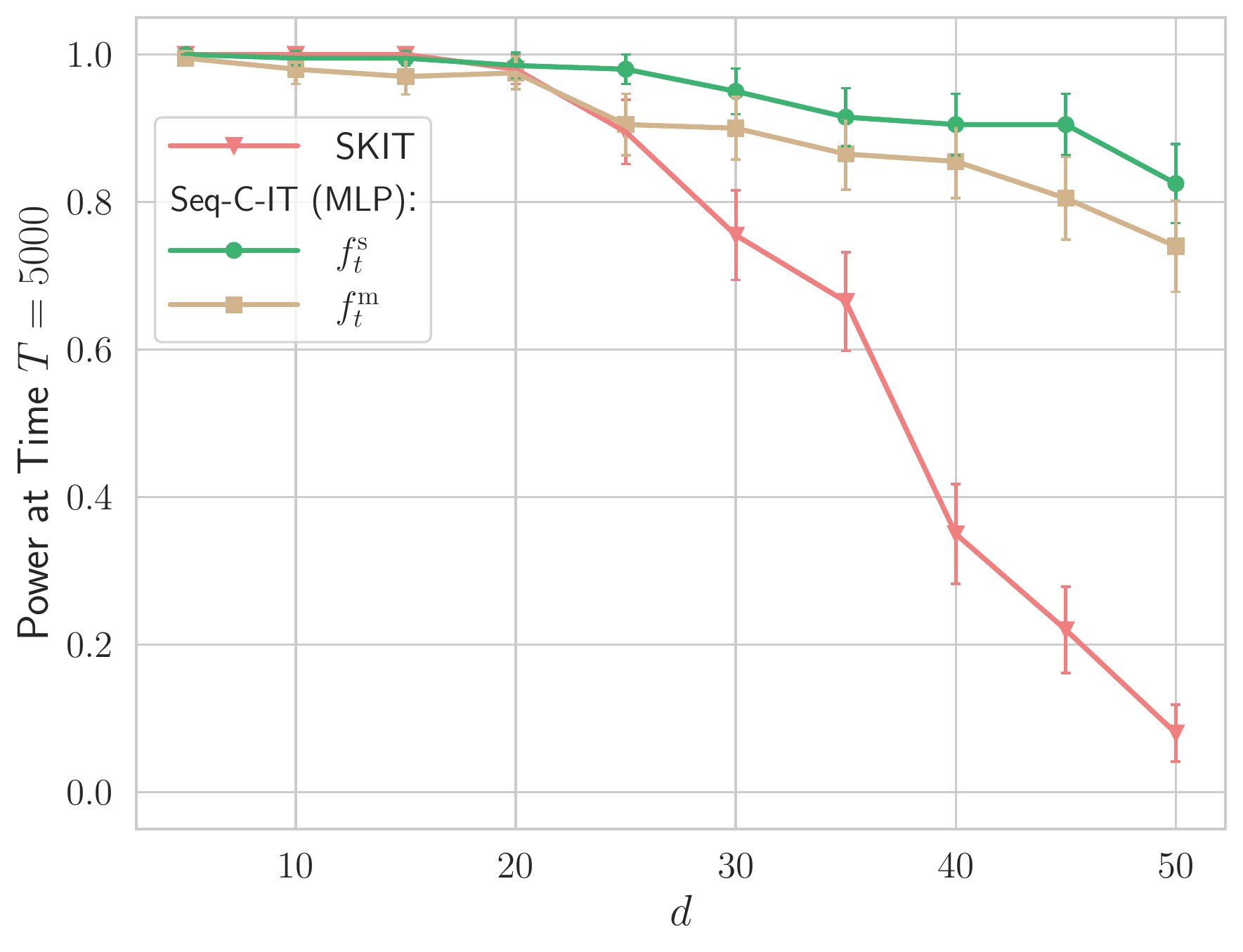}
    \caption{Sparse signal model.}
    \label{subfig:mul_gaus_example_power}
\end{subfigure}\hfill
\begin{subfigure}[b]{0.45\textwidth}
    \centering
    \includegraphics[width=\textwidth]{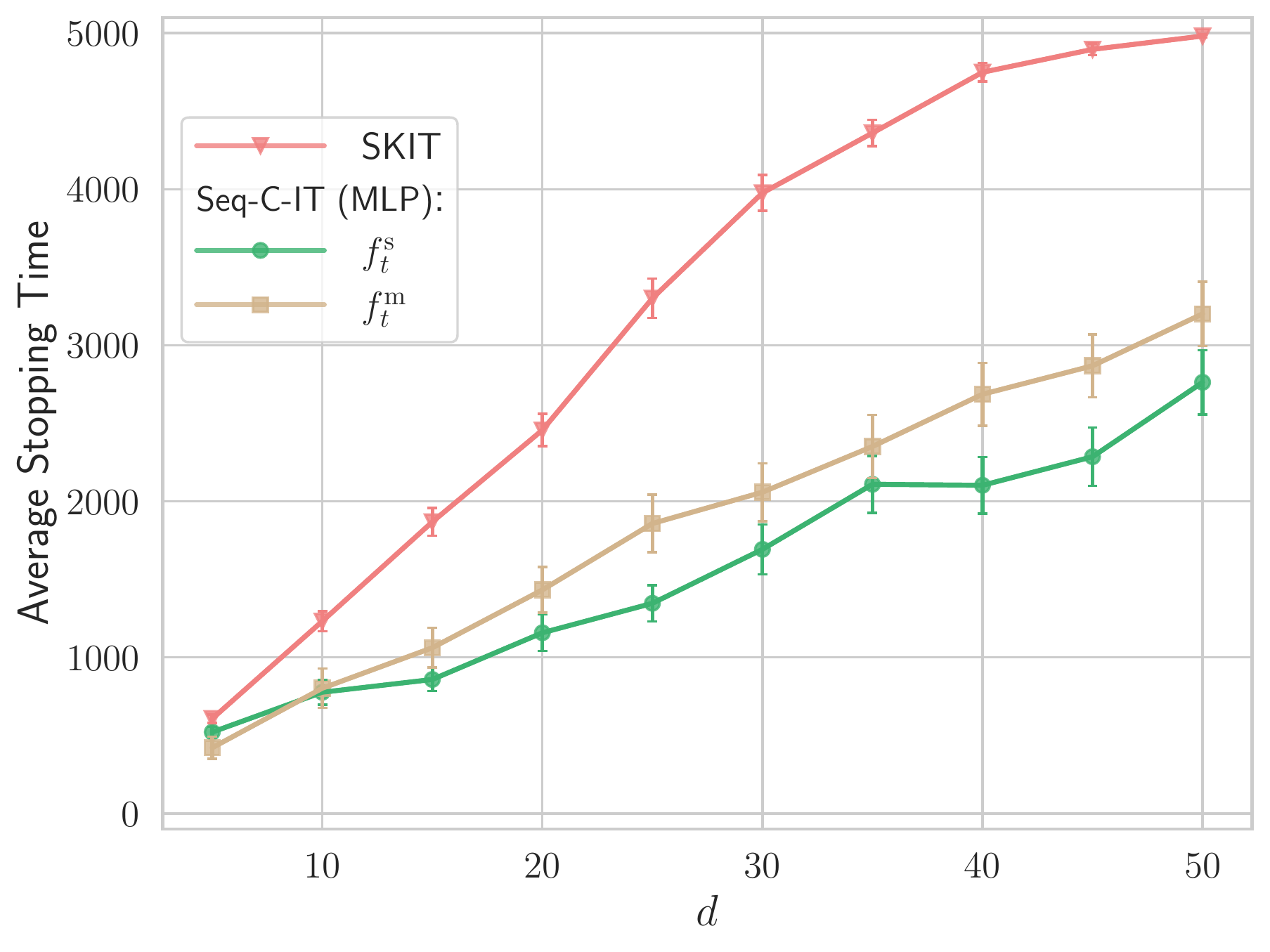}
    \caption{Sparse signal model.}
    \label{subfig:mul_gaus_stop_time}
\end{subfigure}\hfill
\begin{subfigure}[b]{0.45\textwidth}
    \centering
    \includegraphics[width=\textwidth]{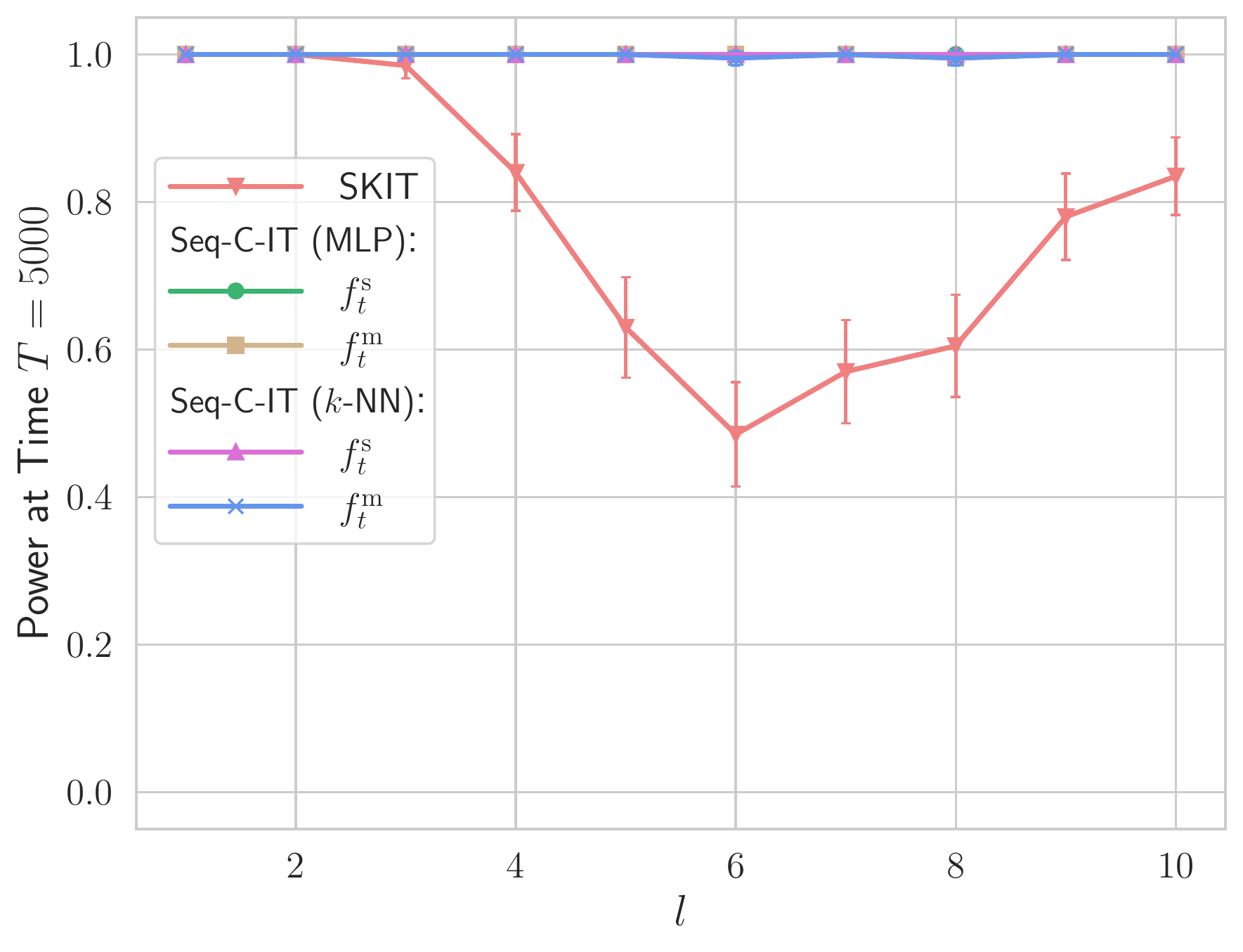}
    \caption{Nested circles model.}
    \label{subfig:sin_cos_example_power}
\end{subfigure}\hfill
\begin{subfigure}[b]{0.45\textwidth}
    \centering
    \includegraphics[width=\textwidth]{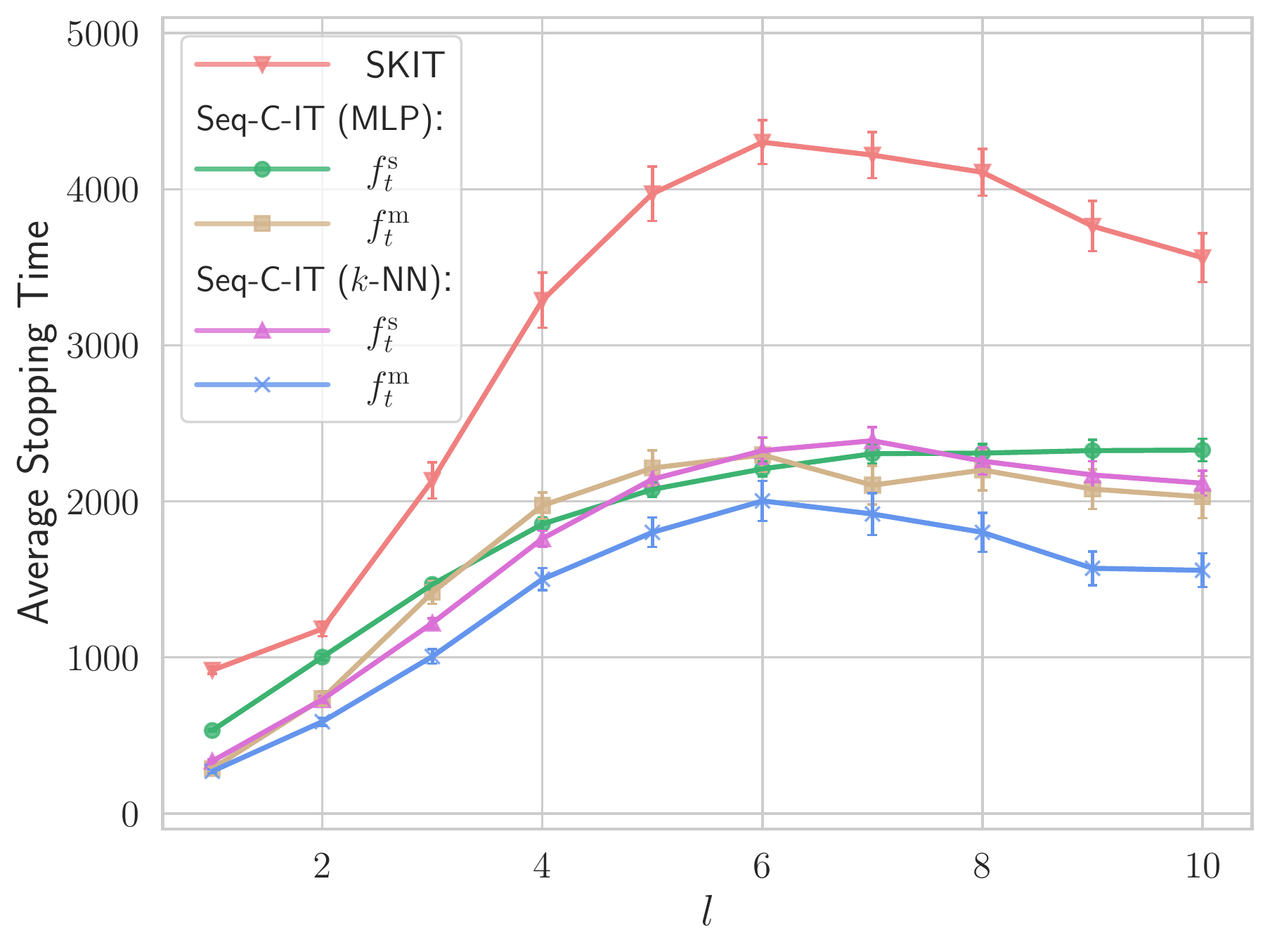}
    \caption{Nested circles model.}
    \label{subfig:sin_cos_example_stop_time}
\end{subfigure}\hfill
\caption{Rejection rates (left column) and average stopping times (right column) of sequential ITs for synthetic datasets from Appendix~\ref{appsubsec:c_spit_add_sims}. In both cases, SKIT is inferior to Seq-C-ITs.}
\label{fig:synthetic_examples_2}
\end{figure}

\end{document}